\documentclass[journal]{IEEEtran} % If you use this template, delet the section References
\usepackage{amsthm}

\usepackage{cite}
\usepackage{amsmath,amssymb,amsfonts}
\usepackage{graphicx}
\usepackage{algorithm,algorithmic}
\usepackage{hyperref}
\hypersetup{hidelinks=true}
\usepackage{textcomp}
%\def\BibTeX{{\rm B\kern-.05em{\sc i\kern-.025em b}\kern-.08em
%    T\kern-.1667em\lower.7ex\hbox{E}\kern-.125emX}}
%\markboth{\hskip25pc IEEE TRANSACTIONS AND JOURNALS TEMPLATE}
%{Pauli \MakeLowercase{\textit{et al.}}: Lipschitz constant estimation for general neural network architectures using control tools}

\usepackage{array}
\usepackage[caption=false,font=normalsize,labelfont=sf,textfont=sf]{subfig}
\usepackage{stfloats}
\usepackage{url}
\usepackage{verbatim}
\usepackage{multicol}
\usepackage{mathtools}
\usepackage{tikz}
\usepackage{pgfplots}
\usepackage{booktabs}       % professional-quality tables
% \usepackage{amsfonts}       % blackboard math symbols
% \usepackage{nicefrac}       % compact symbols for 1/2, etc.
% \usepackage{microtype}      % microtypography
% \usepackage{tikz}
% \usepackage{xcolor}         % colors
% \usetikzlibrary{shapes,arrows}
% \usepackage{amssymb}  % assumes amsmath package installed
% \usepackage{mathtools}
% \usepackage{algpseudocode}
% \usepackage{pgfplots}
\usepackage{multirow}
\usepackage{multicol}
\hyphenation{op-tical net-works semi-conduc-tor IEEE-Xplore}
% updated with editorial comments 8/9/2021

\newtheorem{remark}{Remark}
\newtheorem{theorem}{Theorem}

\newtheorem{definition}{Definition}
\newtheorem{lemma}{Lemma}

\newtheorem{problem}{Problem}

\newcommand{\bbD}{\mathbb{D}}

\newcommand{\bbN}{\mathbb{N}}

\newcommand{\bbR}{\mathbb{R}}
\newcommand{\bbS}{\mathbb{S}}

% boldsymbols for Matrices
\newcommand{\bA}{\boldsymbol{A}}
\newcommand{\bB}{\boldsymbol{B}}
\newcommand{\bC}{\boldsymbol{C}}
\newcommand{\bD}{\boldsymbol{D}}

\newcommand{\bN}{\boldsymbol{N}}

\newcommand{\bP}{\boldsymbol{P}}

% boldsymbols for Matrices

\newcommand{\bi}{\boldsymbol{i}}
\newcommand{\bj}{\boldsymbol{j}}

\newcommand{\br}{\boldsymbol{r}}
\newcommand{\bs}{\boldsymbol{s}}
\newcommand{\bt}{\boldsymbol{t}}

\newcommand{\calC}{\mathcal{C}}
\newcommand{\calD}{\mathcal{D}}

\newcommand{\calF}{\mathcal{F}}
\newcommand{\calG}{\mathcal{G}}
\newcommand{\calH}{\mathcal{H}}

\newcommand{\calL}{\mathcal{L}}
\newcommand{\calM}{\mathcal{M}}

\newcommand{\calP}{\mathcal{P}}

\newcommand{\calR}{\mathcal{R}}
\newcommand{\calS}{\mathcal{S}}
\newcommand{\calT}{\mathcal{T}}

\DeclareMathOperator{\CNN}{CNN}
\DeclareMathOperator{\diag}{diag}
\DeclareMathOperator{\blkdiag}{blkdiag}

\newcommand\signals[2]{\ell_{2e}^{#2}(\bbN_0^{#1})}

\definecolor{mycolor1}{rgb}{0.00000,0.44700,0.74100}%
\definecolor{mycolor2}{rgb}{0.85000,0.32500,0.09800}%
\definecolor{mycolor3}{rgb}{0.92900,0.69400,0.12500}%
\definecolor{mycolor4}{rgb}{0.49400,0.18400,0.55600}%
\definecolor{mycolor5}{rgb}{0.6980,0.7725,0.8980}%
\definecolor{mycolor6}{rgb}{0.7686,0.8745,0.7020}%
\definecolor{mycolor7}{rgb}{1.0,0.9804,0.8039}%

\newcommand{\lsb}{[}
\newcommand{\rsb}{]}

\begin{document}

\title{Lipschitz constant estimation for general neural network architectures using control tools}

\author{Patricia Pauli$^1$, Dennis Gramlich$^2$, and Frank Allg\"ower$^1$
\thanks{*This work was funded by Deutsche Forschungsgemeinschaft (DFG, German Research Foundation) under Germany's Excellence Strategy - EXC 2075 - 390740016 and under grant 468094890. The authors thank the International Max Planck Research School for Intelligent Systems (IMPRS-IS) for supporting Patricia Pauli.}% <-this % stops a space
\thanks{$^{1}$Patricia Pauli and Frank Allgöwer are with the Institute for Systems Theory and Automatic Control, University of Stuttgart, 70550 Stuttgart, Germany (email: \{patricia.pauli,frank.allgower\}@ist.uni-stuttgart.de)}%
\thanks{$^{2}$ Dennis Gramlich is with the Chair of Intelligent Control Systems, RWTH Aachen, 52074 Aachen, Germany (e-mail: dennis.gramlich@ic.rwth-aachen.de).}%
}

\maketitle

\begin{abstract}
    This paper is devoted to the estimation of the Lipschitz constant of general neural network architectures using semidefinite programming. For this purpose, we interpret neural networks as time-varying dynamical systems, where the $k$-th layer corresponds to the dynamics at time $k$. A key novelty with respect to prior work is that we use this interpretation to exploit the series interconnection structure of feedforward neural networks with a dynamic programming recursion. Nonlinearities, such as activation functions and nonlinear pooling layers, are handled with integral quadratic constraints. If the neural network contains signal processing layers (convolutional or state space model layers), we realize them as 1-D/2-D/N-D systems and exploit this structure as well. We distinguish ourselves from related work on Lipschitz constant estimation by more extensive structure exploitation (scalability) and a generalization to a large class of common neural network architectures. To show the versatility and computational advantages of our method, we apply it to different neural network architectures trained on MNIST and CIFAR-10.
\end{abstract}

\begin{IEEEkeywords}
    Neural networks, Lipschitz constant, semidefinite program.
\end{IEEEkeywords}

\section{Introduction}
\IEEEPARstart{N}{eural} networks (NNs) are successfully applied in many fields, e.g., in data analysis, pattern recognition, image and video processing, natural language processing, and learning- and perception-based control \cite{bishop1994neural,li2021survey}. Especially in safety-critical applications like autonomous driving and medical control systems, it is imperative that NNs are safe and reliable \cite{muhammad2020deep}. However, many NN architectures are prone to adversarial examples, i.e., there exist imperceptible input perturbations that drastically change the output of the NN \cite{szegedy2013intriguing}. Inspired by this problem, the field of robust NNs has emerged, offering a range of techniques for robustness certification. These include bound-propagation methods \cite{zhang2018efficient,xu2020automatic,shi2022efficiently} and analysis based on Lipschitz bounds \cite{hein2017formal, tsuzuku2018lipschitz}. Through interval bound propagation \cite{gowal2018effectiveness,zhang2018efficient,xu2020automatic} certify lower bounds on the perturbations necessary to change the classification of NNs using linear and quadratic relaxations to bound the nonlinear activation function. The Lipschitz constant serves as a sensitivity measure to input perturbations and can further be used to derive robustness certificates. Consequently, many works are concerned with Lipschitz constant estimation \cite{scaman2018lipschitz,tsuzuku2018lipschitz,combettes2020lipschitz,fazlyab2019efficient,latorre2020lipschitz,chen2020semialgebraic,shi2022efficiently,pauli2023lipschitza}.

Due to the NP-hard nature of the calculation of the true Lipschitz constant \cite{virmaux2018lipschitz,jordan2020exactly}, there is a high interest to instead find accurate upper bounds on this Lipschitz constant. Trivial methods like the product of the spectral norms of the weights \cite{szegedy2013intriguing} can cheaply be computed by the power iteration method, but the resulting bounds can be quite loose, especially for deep NNs. In contrast, loop transformation based approaches \cite{fazlyab2024certified} and semidefinite programming (SDP) based approaches focusing on $\ell_2$ \cite{fazlyab2019efficient,fazlyab2020safety} and $\ell_\infty$ \cite{latorre2020lipschitz,chen2020semialgebraic} Lipschitz bounds, respectively, provide tighter bounds at the price of a computational overhead. The key idea in \cite{fazlyab2019efficient,fazlyab2020safety} is the over-approximation of nonlinear activation functions using quadratic constraints to facilitate an SDP-based analysis. In this work, we adapt this idea, and extend and generalize previous works \cite{fazlyab2019efficient,pauli2023lipschitza,gramlich2023convolutional} to develop a general SDP-based method for Lipschitz constant estimation for a large class of NN architectures.

SDP-based methods provide the tightest bounds on the $\ell_2$ Lipschitz constant for NNs in polynomial time \cite{fazlyab2019efficient}. However, their scalability to deep state-of-the-art NNs is an open research problem which is actively investigated. \cite{dathathri2020enabling,roig2022globally} develop more scalable SDP solvers and \cite{wang2024on} do so specific to the problem of SDP-based Lipschitz constant estimation, \cite{xue2022chordal} exploit the chordal sparsity pattern of the underlying linear matrix inequality (LMI) constraint for fully connected NNs and \cite{pauli2023lipschitza,gramlich2023convolutional} exploit the structure of convolutions. In this work, we exploit (i) the structure of the individual layer types and (ii) the concatenation structure of the feedforward networks. We do the latter by taking on a dynamic programming perspective and interpreting the layers of the feedforward NN as the time-varying dynamics of a system \cite{liu2019deep,fazlyab2020safety}. This view leads to a recursive formulation of layer-wise constraints, which is computationally favorable. To address the former, we especially exploit the structure and shift invariance of convolutions as we incorporate 1-D/2-D/N-D convolutions into the SDP-based analysis using Roesser-type state space representations, i.e., state space models to describe 2-D and N-D systems \cite{roesser1975discrete,gramlich2023convolutional,pauli2024state}. In contrast to previous works \cite{fazlyab2019efficient,gramlich2023convolutional}, our approach incorporates many popular layer types including convolutional, deconvolutional and state space model layers \cite{gu2021efficiently}, residual layers \cite{he2016deep}, fully connected layers, average and maximum pooling layers, and slope-restricted and GroupSort activation function layers \cite{anil2019sorting}.  

In summary, the main contribution of this work is an SDP-based method for Lipschitz constant estimation for a general class of NNs that outperforms previous SDP-based methods in terms of scalability and yields significantly lower Lipschitz bounds than commonly used methods using spectral norm bounds. To reach our goal, we exploit control theoretic concepts such as N-D systems theory and introduce a dynamic programming perspective for the underlying problem. The remainder of the paper is organized as follows. Section~\ref{sec:problem} formally states the problem, introduces all layer definitions and state space representations for convolutions. Next, Section~\ref{sec:Lip_estimation} involves our dynamic programming based approach for Lipschitz constant estimation for NNs and Section~\ref{sec:conservatism} discusses sources of conservatism. Finally, Section~\ref{sec:experiments}  applies our method on fully connected and fully convolutional networks of different sizes and multiple NN architectures to demonstrate the versatility and improved accuracy and scalability of our approach over previous approaches. We provide easy-to-use code for all considered NN architectures and layer types.

\textbf{Notation:} By $\|\cdot\|_2$ we either mean the Euclidean norm of a vector or the $\ell_2$ norm of a signal. By $\langle\cdot,\cdot \rangle_2$ we denote the $\ell_2$ inner product. By $\bbR^n$ ($\bbR_+^n$), we mean the space of $n$-dimensional vectors with real (positive) entries. By $\bbS^n$ ($\bbS_{++}^n$), we denote (positive definite) symmetric matrices and by $\bbD^n$ ($\bbD_{++}^n$) we mean (positive definite) diagonal matrices of dimension $n$, respectively. Within our paper, we study CNNs processing image signals. For this purpose, an image is understood as a sequence $(u\lsb i_1,\ldots,i_d\rsb )$ with free variables $i_1,\ldots,i_d \in \bbN_0$. In this sequence, $u\lsb i_1,\ldots,i_d\rsb$ is an element of $\bbR^c$, where $c$ is called the \emph{channel dimension} (e.g., $c = 3$ for RGB images). The \emph{signal dimension} $d$ will usually be $d=2$ for images or $d = 3$ for medical images. The space of such signals/sequences is denoted by $\signals{d}{c} := \{ u: \bbN_0^d \to \bbR^c\}$. Images should be understood as sequences in $\signals{d}{c}$ with a finite square as support. For convenience, we will sometimes use multi-index notation for signals, i.\,e., we denote $u\lsb i_1,\ldots,i_d\rsb$ as $u\lsb\bi\rsb$ for $\bi \in \bbN_0^d$. For these multi-indices, we use the notation $\bi + \bj$ for $(i_1+j_1,\ldots,i_d+j_d)$, $\bi\bj = (i_1j_1,\ldots,i_dj_d)$ and $\bi \leq \bj$ for $i_1\leq j_1,\ldots,i_d\leq j_d$. We further denote by $[\bi,\bj] = \{ \bt \in \bbN_0^d \mid \bi \leq \bt \leq \bj \}$ the \emph{interval} of all multi-indices between $\bi,\bj \in \bbN_0^d$ and by $|[\bi,\bj]|$ the number of elements in this interval. Finally, we define the interval $[\bi,\bj[ := [\bi,\bj-1]$.

\section{Problem statement and deep neural networks} \label{sec:problem}
In this work, we understand deep NNs as a concatenation of simple functions, i.\,e., as a composition
\begin{align}
    \mathrm{NN}_\theta = \calL_l \circ \calL_{l-1} \circ \cdots \circ \calL_2 \circ \calL_1 \label{eq:NN}
\end{align}
of layers $\calL_k, k=1,\ldots,l$ where $k$ is the layer index and $\calL \in \{\calF,\calC,\calS,\sigma,\calP,\calR\}$ is either a fully connected layer $\calF$, a convolutional layer $\calC$, a state space model layer $\calS$, an activation function layer $\sigma$, a pooling layer $\calP$, or a reshaping/flattening layer $\calR$. The parameter $\theta$ of $\mathrm{NN}_\theta$ refers to the collection of parameters (weights and biases) $\theta_k$ of all the individual layers. We can also write the NN recursively as the map from $u_1$ to $y_l$ defined by
\begin{align}
    y_k &= \calL_k(u_k) & u_{k+1} &= y_k & k = 1,\ldots , l, \label{eq:inputOutputNN}
\end{align}
where $u_k \in \calD_{k-1}$ and $y_k \in \calD_k$ denote the input and the output of each layer and the real vector spaces $\calD_{k-1}$ and $\calD_k$ are the input and output domains of the layer $\calL_k$. We assume here that the layers are always chosen in such a way that the image space of $\calL_{k}$ and the domain space of $\calL_{k+1}$ coincide. Consequently, our Lipschitz constant analysis applies to any finite concatenation of layers $\calL \in \{\calF,\calC,\calS,\sigma,\calP,\calR\}$. In deep learning, the definition of a layer may sometimes refer to a composition of multiple elements of $\{\calF,\calC,\calS,\sigma,\calP,\calR\}$.  For example, a linear map is grouped with a diagonally repeated activation function or a convolutional layer, an activation function, and a pooling layer are grouped together as a layer. Our approach can handle such concatenated layer definitions, meaning that we additionally allow $\calL \in \{\sigma \circ \calF, \sigma \circ \calC, \calP \circ \sigma \circ \calC\}$ or a concatenation of even more layers, compare (cmp.) Section~\ref{sec:subnetworks}.

Regardless of the layer definition, our examples usually study convolutional neural networks $\CNN_\theta$ with the structure
\begin{align*}
    \calF_{l} \circ \sigma \circ \cdots \circ \sigma \circ \calF_{p+1} &\circ \calR \circ \cdots\\ \cdots &\circ \calP\circ \sigma  \circ \calC_{p} \circ \cdots \circ \calP\circ \sigma \circ \calC_1,
\end{align*}
typically found in image classification. These convolutional neural networks (CNNs) are composed of fully connected layers $\calF$, activation function layers $\sigma$, a flattening operation $\calR$, convolutional layers $\calC$, and (optional) pooling layers $\calP$ in the order shown above.

The goal of this work is to provide an accurate and scalable method that determines an upper bound on the Lipschitz constant of a general feedforward NN \eqref{eq:NN}, \eqref{eq:inputOutputNN}.

\begin{problem}
    \label{prob:Lip}
    For a given neural network $\mathrm{NN}_\theta$ with parameters~$\theta$, find an upper bound on the Lipschitz constant, i.\,e., find a value $\gamma\geq 0$ such that
    \begin{align*}
        \| \mathrm{NN}_\theta (u^1) - \mathrm{NN}_\theta (u^2) \|_2 \leq \gamma \| u^1 - u^2 \|_2 \quad \forall u^1,u^2 \in \calD_0.
    \end{align*}
\end{problem}

We notice that the definition of an NN \eqref{eq:NN}, \eqref{eq:inputOutputNN} resembles a dynamical system $u_{k+1} = \calL_k(u_k)$ with state $u_k$. The interpretation of an NN as a dynamical system with time-varying dynamics \cite{liu2019deep,fazlyab2020safety} is very powerful because it enables us to use tools from control and systems theory to analyze properties of NNs. However, we stress that this interpretation should be taken with caution, since the inputs $u_k$ and $u_j$ for $k \neq j$ usually live in different spaces $\calD_{k-1}$ and $\calD_{j-1}$. As we will see in Section~\ref{sec:layer-definitions}, we allow signal spaces $\calD_k = \signals{d_k}{c_k}$ of $d_k$-dimensional signals as well as vector spaces $\calD_k = \bbR^{c_k}$. Also the vector (= channel) dimension $c_k$ may differ from one layer to another. In addition, the nature of the mappings $\calL_k$ and $\calL_j$ for $k \neq j$ can be completely different, including linear and nonlinear mappings. In some less heterogeneous examples, e.g., fully connected networks $\calF_l\circ\sigma\circ\dots\circ\sigma\circ\calF_1$, or fully convolutional networks and subnetworks $\calC_l\circ\sigma\circ\dots\circ\sigma\circ\calC_1$, the interpretation as dynamical systems is, however,  more natural and it is common practice to design a deep backbone of NNs of the same layer type \cite{hu2023scaling}.

It is the defining selling point of our work that we \emph{exploit the structure of each layer}, as well as the \emph{composition structure of the NN itself} using different perspectives and methods from control. The NN is described as a concatenation of its layers, which is now followed by a definition of each individual layer in Section~\ref{sec:layer-definitions}. Subsequently, in Section~\ref{sec:statespace}, we introduce state space representations for convolutional layers.

\subsection{Layer definitions}\label{sec:layer-definitions}
\paragraph*{Convolutional layer}\label{sec:layer_def_conv}
A convolutional layer $\calC_k$ with layer index $k$ is a mapping from $\calD_{k-1} = \signals{d_{k-1}}{c_{k-1}}$ to $\calD_k = \signals{d_k}{c_k}$ which is defined by a convolution kernel $K_k\lsb\bt\rsb\in\bbR^{c_{k} \times c_{k-1}}$ for $0\leq\bt\leq\br_k$ and a bias $b_k \in \bbR^{c_{k}}$. We write
\begin{align}
    y_{k}[\bi] = b_k + \sum_{0 \leq \bt \leq \br_k} K_k[\bt] u_k[\bi - \bt], \label{eq:conv_alt}
\end{align}
where $u_k[\bi - \bt]$ is set to zero if $\bi - \bt$ is not in the domain of $u_k\lsb\cdot\rsb$, which accounts for possible zero-padding. A convolutional layer retains the dimension $d_{k-1}=d_k=d$ but it may change in channel size from $c_{k-1}$ to $c_k$. The multi-index $\br_k \in \bbN_0^d$ defines the size of the kernel $K_k\lsb\cdot\rsb$.

This compact description of a convolution \eqref{eq:conv_alt} includes N-D convolutions ($d=N$). For instance, a 1-D convolution ($d=1$) 
\begin{align}
    y_{k}[i] = b_k + \sum_{t=0}^{r_k} K_k[t] u_k[i - t] \label{eq:conv_1D}
\end{align}
operates on a 1-D signal, e.g., a time signal, a 2-D convolution ($d=2$)
\begin{align}
    y_{k}[i_1,i_2] = b_k + \sum_{t_1 =0}^{r_{k1}}\sum_{t_2 =0}^{r_{k2}} K_k[t_1,t_2] u_k[i_1 - t_1,i_2-t_2] \label{eq:conv_2D}
\end{align}
operates on signals with two propagation dimensions, e.g., images, and an N-D convolution considers inputs with even more input dimensions, e.g., 3-D convolutions may be used for videos or 3-D medical images. 

An extension of the convolutional layer \eqref{eq:conv_alt} is a strided convolutional layer $\calC_{\bs_k}$ with stride $\bs_k \in \bbN^d$. For convolutions with stride $\bs_k = (s_{k1},\ldots, s_{kd})$, the output is not given by \eqref{eq:conv_alt}, but by
\begin{align}
    y_{k}[\bi] = b_k + \sum_{0 \leq \bt \leq \br_k} K_k[\bt] u_k[\bs_k\bi - \bt]. \label{eq:conv_strided}
\end{align}
This means that we always shift the kernel by $s_{k1},\ldots, s_{kd}$ along the respective signal dimension $1,\ldots, d$.

\paragraph*{Fully connected layer} In the case of a fully connected layer $\calF_k$ with layer index $k$ the domain and image spaces are $\calD_{k-1} = \bbR^{c_{k-1}}$ and $\calD_k = \bbR^{c_k}$, i.\,e., there are only the channel dimensions $c_{k-1}, c_k$ (= number of neurons of the input and output layer) and no signal dimensions $d_{k-1}, d_k$, i.\,e., $d_{k-1}=d_k=0$. We define a fully connected layer as an affine function
\begin{align}
    \calF_k : \bbR^{c_{k-1}} \to \bbR^{c_{k}}, ~ u_k \mapsto y_k = b_k + W_k u_k. \label{eq:linearLayer}
\end{align}
The vector $b_k \in \bbR^{c_{k}}$ is called the bias and $W_k \in \bbR^{c_{k} \times c_{k-1}}$ is called the weight matrix.

\begin{remark}\label{rem:correspondence}
Note that the fully connected layer is a special case of a convolutional layer for $d_{k-1} = d_k = 0$. Indeed, $\bbR^{c_{k-1}} \cong \ell_{2e}^{c_{k-1}}(\{0\}) = \signals{0}{c_{k-1}}$ and $\bbR^{c_{k}} \cong \signals{0}{c_k}$. Furthermore, we can understand $W_k$ as the convolution kernel which, in the case $d_k = 0$, is given by $K_k[0] := W_k$. Consequently, all results presented in this work for convolutional layers automatically also hold for fully connected layers. 
\end{remark}

\paragraph*{Activation function layer}\label{sec:layer_def_act}

An activation function layer $\sigma$ can be applied to any of our domain spaces $\calD_{k-1} = \bbR^{c_{k-1}}$ or $\calD_{k-1} = \signals{d_{k-1}}{c_{k-1}}$, but it requires $\calD_{k} \cong \calD_{k-1}$. We consider activation functions which are defined by scalar activation functions $\sigma : \bbR \to \bbR$, applied element-wise if applied to a vector $u_k \in \bbR^{c_k}$. To this end, for finite-dimensional vector spaces, $\sigma$ is identified with the function
\begin{align*}
    \sigma: \bbR^{c_k} \to \bbR^{c_k},~ u_{k} \mapsto y_{k} = %\sigma(u_{k}) = 
    \begin{bmatrix}
        \sigma(u_{k1}) & \cdots & \sigma(u_{kc_k})
    \end{bmatrix}^\top.
\end{align*}
We further lift the scalar activation function to signal spaces by defining the activation function layer on $\signals{d_k}{c_k}$ as the function $\sigma : \signals{d_k}{c_k} \to \signals{d_k}{c_k}$,
\begin{align*}
    (u_k[\bi])_{\bi \in \bbN_0^{d_k}} \mapsto (y_k[\bi])_{\bi \in \bbN_0^{d_k}} = (\sigma(u_k[\bi]))_{\bi \in \bbN_0^{d_k}}.
\end{align*}

\paragraph*{Pooling layer} Pooling layers are downsampling operations from $\calD_{k-1} = \signals{d_{k-1}}{c_{k-1}}$ to $\calD_k = \signals{d_k}{c_k}$ with $d_{k-1} = d_k=d$ and $c_{k-1} = c_k$ that take a batch of input signal entries $(u_k\lsb\bs_k\bi + \bt\rsb \mid \bt \in [0,\br_k])$ and map them channel-wise into one single output signal entry $y[\bi]$. The two most common pooling layers are average pooling $\calP^{\mathrm{av}} : \signals{d}{c_k} \to \signals{d}{c_k}$,
\begin{align*}
    y_k\lsb\bi\rsb :=& \mathrm{mean} (u_k\lsb\bs_k\bi - \bt\rsb \mid \bt \in [0,\br_k])\\
    =& \frac{1}{|[0,\br_k]|} \sum_{0 \leq \bt \leq \br_k} u_k\lsb\bs_k\bi - \bt\rsb
\end{align*}
and maximum pooling $\calP^{\mathrm{max}} : \signals{d_k}{c_k} \to \signals{d_k}{c_k}$,
\begin{align*}
    y_k\lsb\bi\rsb &:= \max (u_k\lsb\bs_k\bi - \bt\rsb \mid \bt \in [0,\br_k]),
\end{align*}
where the maximum is applied channel-wise. For most pooling layers the kernel size and the stride coincide ($\br_k+1=\bs_k$), yet sometimes, e.g., in AlexNet \cite{krizhevsky2012imagenet}, $\br_k+1>\bs_k$ is chosen.

\paragraph*{Flattening operator} Flattening is a pure reshaping operation, which merges the signal dimensions into the channel dimension. Note that the mapping is not injective, i.e., a square batch $(u_k\lsb\bi\rsb \mid 0 \leq \bi < \bN_{k})$, for example a finite-dimensional image,  is reshaped into the channel dimension and the remaining entries (mostly zeros) are discarded. The typical flattening operation is a vectorization given by
\begin{align*}
	\calR & : \signals{d_{k-1}}{c_{k-1}} \to \bbR^{|\lsb0,\bN_{k}\lsb| \cdot c_{k-1}}, (u_k[\bi] )_{\bi\in\bbN_0^{d_k}} \mapsto y_k,
\end{align*}
where $y_k$ is a stacked vector of $u_k\lsb\bi\rsb,~0\leq \bi < \bN_{k}$, i.\,e., $y_k^\top=\begin{bmatrix} u_k\lsb 0,\dots,0\rsb^\top & \dots & u_k\lsb N_{k1},\dots,N_{kd}\rsb^\top \end{bmatrix}$. We could also define flattening operators $\signals{d_{k-1}}{c_{k-1}} \to \signals{d_k}{c_k}$ with $1 \leq d_{k} < d_{k-1}$ contracting only some of the signal dimensions and not all at once. For example, we can flatten 2-D signals into 1-D signals ($d_k=1$) or into vectors with $d_k=0$.

\paragraph*{State space model layer}
State space model layers have recently gained popularity in the machine learning community \cite{gu2021efficiently}. We define a state space model layer with layer index $k$ as an affine time-invariant system $\calS_k:\ell_{2e}^{c_{k-1}}(\bbN_0^{1})\to\ell_{2e}^{c_{k}}(\bbN_0^{1})$
\begin{equation*}
    \begin{bmatrix}
        x_k[i+1] \\
        y_k[i]
    \end{bmatrix}
    =
    \begin{bmatrix}
        0 & A_k & B_k\\
        g_k & C_k & D_k
    \end{bmatrix}
    \begin{bmatrix}
        1 \\
        x_k[i] \\
        u_k[i]
    \end{bmatrix},    
\end{equation*}
where $x_k[i]\in\bbR^{n_k}$ denotes the state. The state space model is characterized by some matrices ($A_k,B_k,C_k,D_k,g_k$) of appropriate dimensions.

\subsection{State space representations for convolutions}\label{sec:statespace}
In the machine learning literature, convolutional layers are usually represented as in \eqref{eq:conv_alt} using a convolution kernel \cite{Goodfellow-et-al-2016}. However, state space realizations have proven to be more amenable to analysis using tools from robust control than such kernel (impulse response) representations \cite{gramlich2023convolutional}. In the control engineering literature, mappings from $\signals{d}{c_{k-1}}$ to $\signals{d}{c_k}$ are known as N-D systems and, as it is shown in \cite{bett1997linear}, N-D systems with rational transfer functions admit a N-D state space representation. N-D convolutions are finite impulse response (FIR) filters, for which we in the following introduce state space realizations of the Roesser type \cite{roesser1975discrete}. Throughout this section, we drop the layer index $k$ to improve readability. If we refer to the previous layer we use the subscript ``$-$'', e.g., $c_-$ means $c_{k-1}$.

\begin{definition}[Roesser model]
	\label{def:RoesserSystem}
	An affine N-D system $\signals{d}{c_{-}} \to \signals{d}{c}, (u\lsb \bi \rsb ) \mapsto (y\lsb \bi \rsb )$ is described by a Roesser model as
	\begin{align}
		\begin{bmatrix}
			x_1\lsb \bi + e_1 \rsb \\
                \vdots\\
			x_{d}\lsb \bi + e_d \rsb  \\
			y\lsb \bi \rsb
		\end{bmatrix}
		=
		\begin{bmatrix}
			0 & A_{11} & \cdots & A_{1d} & B_1\\
                \vdots & \vdots & \ddots & \vdots & \vdots\\
			0 & A_{d1} & \cdots & A_{dd} & B_d\\
			g & C_1 & \cdots & C_d & D
		\end{bmatrix}
		\begin{bmatrix}
			1\\
			x_1\lsb \bi \rsb \\
                \vdots\\
			x_d\lsb \bi \rsb\\
			u\lsb \bi \rsb
		\end{bmatrix},
		\label{eq:RoesserSys}
	\end{align}
    where $e_i$ denotes the unit vector with $1$ in the $i$-th position. Here, the collection of matrices $A_{11}$, $\ldots$ $C_d$, $D$ is called state space representation of the system, $x_1\lsb \bi \rsb \in \bbR^{n_1}$, $\ldots$ $x_d\lsb \bi \rsb \in \bbR^{n_d}$ are the states, $u\lsb \bi \rsb \in \bbR^{c_{-}}$ is the input and $y\lsb \bi \rsb \in \bbR^{c}$ is the output of the system. We call \eqref{eq:RoesserSys} a linear time-invariant N-D system ($N = d$) if $g = 0$. Otherwise, we call the system affine time-invariant.
\end{definition}

We define
    \begin{equation*}
        \left[
        \begin{array}{c|c|c}
		0 & \bA & \bB\\ \hline
		g & \bC & \bD
        \end{array}\right]:=
        \left[
        \begin{array}{c|ccc|c}
            0 & A_{11} & \cdots & A_{1d} & B_1\\
            \vdots & \vdots & \ddots & \vdots & \vdots\\
		0 & A_{d1} & \cdots & A_{dd} & B_d\\\hline
		g & C_1 & \cdots & C_d & D
	\end{array}\right].
    \end{equation*}

Realizing 1-D convolutions in state space is straightforward \cite{pauli2023lipschitza}. For the important layer type of 2-D convolutions ($d=2$), i.e., the 2-D system
	\begin{align}
		\begin{bmatrix}
			x_1\lsb i_1+1, i_2 \rsb \\
			x_{2}\lsb i_1, i_2+1 \rsb  \\
			y\lsb i_1,i_2 \rsb
		\end{bmatrix}
		=
		\begin{bmatrix}
			0 & A_{11} & A_{12} & B_1\\
			0 & A_{21} & A_{22} & B_2\\
			g & C_1 & C_2 & D
		\end{bmatrix}
		\begin{bmatrix}
			1\\
			x_1\lsb i_1, i_2 \rsb \\
			x_2\lsb i_1, i_2 \rsb \\
			u\lsb i_1, i_2 \rsb
		\end{bmatrix},
		\label{eq:RoesserSys2D}
	\end{align}
we use the construction presented in \cite{pauli2024state} as stated in Lemma~\ref{lem:min_real_2D}.

\begin{lemma}[Realization of 2-D convolutions \cite{pauli2024state}]\label{lem:min_real_2D}
    Consider a convolutional layer $\calC : \signals{2}{c_{-}} \to \signals{2}{c}$ with representation \eqref{eq:conv_2D} characterized by the convolution kernel $K$ and the bias $b$. This layer is realized in state space by the matrices
    \begin{align*}
        &\left[
        \begin{array}{c|c}
            A_{12} & B_1 \\ \hline
            C_2 & D
        \end{array}
        \right]
        =
        \left[
        \begin{array}{ccc|c}
            K\lsb r_1,r_2 \rsb & \cdots & K \lsb r_1,1 \rsb & K \lsb r_1,0 \rsb \\
            \vdots & \ddots & \vdots & \vdots \\
            K\lsb 1,r_2 \rsb & \cdots & K \lsb 1,1 \rsb & K \lsb 1,0 \rsb \\ \hline
            K\lsb 0,r_2 \rsb & \cdots & K \lsb 0,1 \rsb & K \lsb 0,0 \rsb \\
        \end{array}
        \right],\\
        &\left[
        \begin{array}{c}
            A_{11} \\ \hline
            C_1
        \end{array}
        \right]
        =
        \left[
        \begin{array}{cc}
            0 & 0\\
            I_{c(r_1-1)} & 0\\ \hline
            0 & I_{c}
        \end{array}
        \right],~A_{21} = 0, \quad g = b,\\
        &\left[
        \begin{array}{c|c}
            A_{22} & B_2
        \end{array}
        \right]
        =
        \left[
        \begin{array}{cc|c}
            0 & I_{c_-(r_2-1)} & 0\\
            0 & 0 & I_{c_-}
        \end{array}
        \right],
    \end{align*}
    where $K[i_1,i_2]\in\bbR^{c\times c_{-}},~i_1\in\lsb 0,r_1\rsb,~i_2\in\lsb 0,r_2\rsb$. The state signals $(x_1\lsb i_1,i_2\rsb)_{i_1,i_2 \in \bbN_0}$ with $x_1\lsb i_1,i_2\rsb \in \bbR^{n_1}$, $n_1 = c r_1$, and $(x_2\lsb i_1,i_2\rsb)_{i_1,i_2 \in \bbN_0}$ with $x_2\lsb i_1,i_2\rsb \in \bbR^{n_2}$, $n_2 = c_{-} r_2$ are given inductively by \eqref{eq:conv_2D} with $x_1\lsb 0,i_2\rsb = 0$ for all $i_2\in\bbN_0$, and $x_2\lsb i_1,0\rsb = 0$ for all $i_1\in\bbN_0$.
\end{lemma}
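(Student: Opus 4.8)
The proof is a direct verification. The plan is to (i) write down explicit closed‑form expressions for the two state signals in terms of delayed input samples, (ii) check that these expressions satisfy the state recursion of the Roesser model \eqref{eq:RoesserSys2D} with the matrices given in the statement together with the boundary conditions $x_1[0,i_2]=0$ and $x_2[i_1,0]=0$, and (iii) substitute the closed forms into the output row of \eqref{eq:RoesserSys2D} and simplify to recover exactly \eqref{eq:conv_2D}. Throughout, the zero‑padding convention ($u$ evaluated outside its domain equals $0$) is used, and the layer index is suppressed as in the rest of Section~\ref{sec:statespace}.

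For step (i) I read off the intended role of each state from the block patterns of the matrices. Since $A_{21}=0$ and $(A_{22},B_2)$ has the form of a shift register, the natural guess is $x_2[i_1,i_2] = \begin{bmatrix} u[i_1,i_2-r_2]^\top & u[i_1,i_2-r_2+1]^\top & \cdots & u[i_1,i_2-1]^\top \end{bmatrix}^\top \in \bbR^{c_- r_2}$, a length‑$r_2$ window of past inputs along the second axis, which satisfies $x_2[i_1,0]=0$ by zero‑padding. Introducing the column‑filtered signal $z_{t}[i_1,i_2] := \sum_{t_2=0}^{r_2} K[t,t_2]\,u[i_1,i_2-t_2]$ for $t=0,\dots,r_1$, a short computation shows that the $m$‑th block of $A_{12}x_2[i_1,i_2]+B_1u[i_1,i_2]$ equals $z_{r_1-m+1}[i_1,i_2]$, while $A_{11}$ merely shifts $x_1$ by one block along the first axis. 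Unrolling this (scalar in $i_1$) recursion from $x_1[0,i_2]=0$ gives the closed form whose $m$‑th block ($m=1,\dots,r_1$) is $\sum_{t=r_1-m+1}^{r_1} z_{t}[\,i_1-t+r_1-m,\;i_2\,]$, of total dimension $c r_1$. Step (ii) is then the elementary induction on $i_1$ verifying that this expression indeed solves the $x_1$‑recursion, which is immediate once the closed form is written down.

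For step (iii), the output row of \eqref{eq:RoesserSys2D} reads $y[i_1,i_2]=g+C_1x_1[i_1,i_2]+C_2x_2[i_1,i_2]+Du[i_1,i_2]$ with $g=b$. Because $C_1=\begin{bmatrix}0 & I_c\end{bmatrix}$, the term $C_1x_1[i_1,i_2]$ extracts the last ($m=r_1$) block of $x_1$, namely $\sum_{t=1}^{r_1} z_{t}[i_1-t,i_2]$, and $C_2x_2[i_1,i_2]+Du[i_1,i_2]$ collapses to $z_0[i_1,i_2]$. Adding these and expanding each $z_t$ back into its defining sum over $t_2$ yields $b+\sum_{t_1=0}^{r_1}\sum_{t_2=0}^{r_2}K[t_1,t_2]u[i_1-t_1,i_2-t_2]$, which is precisely \eqref{eq:conv_2D}.

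I expect the only real obstacle to be bookkeeping rather than anything conceptual: one must track carefully that the kernel taps are laid out in reversed order ($K[r_1,\cdot]$ down to $K[1,\cdot]$ along the block rows of $A_{12}$/$B_1$, and $K[\cdot,r_2]$ down to $K[\cdot,1]$ along the columns), that the $m$‑th block of $x_1$ carries a delay of exactly $r_1-m$ along the first axis on top of the column filtering, and that the prescribed boundary conditions are exactly what makes the padded‑convolution indices come out right. Once a consistent indexing convention is fixed, every identity reduces to a reindexing of finite sums. As a sanity check one may note, consistently with Remark~\ref{rem:correspondence}, that the construction degenerates correctly when $r_1=0$ or $r_2=0$ (an empty shift register in the corresponding direction).
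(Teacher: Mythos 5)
Your proposal is correct. The paper does not actually prove this lemma itself --- it defers entirely to \cite[Theorem 1]{pauli2024state} --- so your direct verification is a self-contained substitute rather than a reproduction of an argument in the text. I checked the key identities and they hold: with $x_2[i_1,i_2]=\bigl(u[i_1,i_2-r_2]^\top,\ldots,u[i_1,i_2-1]^\top\bigr)^\top$ the pair $(A_{22},B_2)$ is exactly the upward shift register with boundary condition $x_2[i_1,0]=0$; pairing block column $j$ of $A_{12}$ (which carries $K[\cdot,r_2-j+1]$) with the $j$-th block $u[i_1,i_2-(r_2-j+1)]$ of $x_2$ gives your claim that the $m$-th block of $A_{12}x_2+B_1u$ is $z_{r_1-m+1}[i_1,i_2]$; your closed form $x_1[i_1,i_2]_m=\sum_{t=r_1-m+1}^{r_1}z_t[i_1-t+r_1-m,i_2]$ satisfies both the recursion (the $t=r_1-m+1$ term is the fresh injection $z_{r_1-m+1}[i_1,i_2]$ and the remainder reindexes to block $m-1$ at time $i_1$) and the boundary condition $x_1[0,i_2]=0$ via zero padding; and the output row collapses to $b+\sum_{t=0}^{r_1}z_t[i_1-t,i_2]$, which is \eqref{eq:conv_2D}. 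Two cosmetic points: the lemma's phrase ``given inductively by \eqref{eq:conv_2D}'' is evidently a typo for the Roesser recursion \eqref{eq:RoesserSys2D}, which is how you (correctly) read it; and your sanity check for $r_1=0$ or $r_2=0$ requires the empty-matrix convention for $I_{c(r_1-1)}$ and the corresponding state blocks, which is standard but worth stating if you write this out in full.
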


\begin{proof}
    See \cite[Theorem 1]{pauli2024state} for a proof.
\end{proof}

\begin{remark}
    Finding a mapping from $K$ to $(\bA,\bB,\bC,\bD)$ for N-D convolutions and dilated convolutions is also possible, see~\cite{pauli2024state}. We discuss state space representations for strided convolutions in Appendix~\ref{sec:strided_convolutions}.
\end{remark}

\begin{remark}
    Representing a convolution in state space requires the choice of a propagation direction for both dimensions. Usually, for image inputs we pick the upper left corner as the origin with $i_1=i_2=0$. However, any other corner and corresponding propagation directions can also be chosen to represent the convolution equivalently. For state space model layers the propagation dimension, i.e., time is predefined, and cannot be changed.
\end{remark}

\section{Lipschitz constant estimation}\label{sec:Lip_estimation}
To address Problem \ref{prob:Lip} of estimating the Lipschitz constant of an NN, the interpretation \eqref{eq:inputOutputNN} of NNs as dynamical systems $u_{k+1} = \calL_k(u_k)$ is utilized. Namely, we pose the problem of estimating the Lipschitz constant of $\mathrm{NN}_\theta$ as the dynamic optimization problem
\begin{align}
    \min_{\gamma \in \bbR} ~&~ \gamma \label{eq:LipschitzDP}\\
    \mathrm{s.t.}~&~ \|y_l^1 -y_l^2\|_2 \leq \gamma\|u_1^1 - u_1^2\|_2, & \forall u_1^1,u_1^2 \in \calD_0, \nonumber\\
    ~&~ y_k^1 = \calL_k (u_k^1), ~ y_k^2 = \calL_k (u_k^2), & k = 1,\ldots,l, \nonumber\\
    ~&~ u_{k+1}^1 = y_k^1, ~ u_{k+1}^2 = y_k^2, & k = 1,\ldots,l-1. \nonumber
\end{align}

The advantage of the recursive formulation \eqref{eq:LipschitzDP} is that it can be solved using a dynamic programming approach. Namely, interpreting the equality constraints
\begin{align*}
    (u_{k+1}^1,u_{k+1}^2) = (y_k^1,y_k^2) = (\calL_k(u_k^1),\calL_k(u_k^2))
\end{align*}
as the dynamics of a system with tuples $(u_k^1,u_k^2)$ as states, we can recursively define value functions
\begin{align}
    \begin{split}
    V_l(y_l^1,y_l^2) &= \| y_l^1 - y_l^2\|_2^2, \quad y_l^1,y_l^2 \in \calD_l\\
    V_{k-1}(u_k^1,u_k^2) &= V_k(\calL_k(u_k^1),\calL_k(u_k^2)),~ u_k^1,u_k^2 \in \calD_{k-1},
    \end{split} \label{eq:DP_recursion}
\end{align}
for $k = 1,\ldots,l$, starting from the $l$-th layer.
Drawing from dynamic programming, we can think of $\| y_l^1 - y_l^2\|_2^2$ as a terminal cost and the stage cost as being zero. We obtain that \eqref{eq:LipschitzDP} is equivalent to finding the smallest $\gamma \in \bbR_+$ such that $V_0(u_1^1,u_1^2) \leq \gamma^2 \|u_1^1 - u_1^2\|_2^2$ for all $u_1^1,u_1^2 \in \calD_0$. We can therefore pose \eqref{eq:LipschitzDP} as the optimization problem
\begin{subequations}
\label{eq:metricOpt}
\begin{align}
    \min_{\gamma, V_1,\ldots,V_l} ~&~ \gamma \label{eq:metricOpt_Cost}\\
    \mathrm{s.t.} ~&~ V_l(y^1_l,y^2_l) \geq \|y_l^1 - y_l^2\|_2^2  \label{eq:metricOpt_terminal} \\
    ~&~ V_{k-1}(u_k^1,u_k^2) \geq V_k(\calL_k(u_k^1),\calL_k(u_k^2)), ~ k=l,\ldots,2  \label{eq:metricOpt_DP}\\
    ~&~\gamma^2 \|u_1^1 - u_1^2\|_2^2 \geq V_1(\calL_1(u_1^1),\calL_1(u_1^2)) \label{eq:metricOpt_initial} %\\
\end{align}
\end{subequations}
over value functions, where \eqref{eq:metricOpt_terminal} to \eqref{eq:metricOpt_initial} must hold for all $u_k^1,u_k^2 \in \calD_{k-1}$, $k = 1,\ldots, l$. %$u_1^1,u_1^2\in\calD_0$ and $u^1_k,u^2_k$, $k=1,\dots,l$ are obtained by \eqref{eq:inputOutputNN}.
The equivalence of \eqref{eq:LipschitzDP} and \eqref{eq:metricOpt} can be checked by chaining the inequalities \eqref{eq:metricOpt_initial}, \eqref{eq:metricOpt_DP}, \eqref{eq:metricOpt_terminal} as
\begin{align}
    &\gamma^2 \|u_1^1 - u_1^2\|_2^2 \geq V_1(\calL_1(u_1^1),\calL_1(u_1^2)) = V_1(u_2^1,u_2^2) \label{eq:inequChain}\\
    &\geq V_2(\calL_2(u_2^1),\calL_2(u_2^2)) \geq \ldots \geq V_l(y^1_l,y^2_l) \geq \|y_l^1 - y_l^2\|_2^2. \nonumber
\end{align}
Here, the sequence of inequalities \eqref{eq:inequChain} shows that the constraints of \eqref{eq:metricOpt} imply those of \eqref{eq:LipschitzDP}. In addition, inserting the value functions \eqref{eq:DP_recursion} into the constraints of \eqref{eq:LipschitzDP} yields \eqref{eq:inequChain} and shows that the constraints of \eqref{eq:LipschitzDP} also imply \eqref{eq:metricOpt_terminal} to \eqref{eq:metricOpt_initial}. Furthermore, as $\|y_l^1 - y_l^2\|_2^2\geq 0$, we can deduce from the chain of inequalities \eqref{eq:inequChain} that the value functions $V_k,~k=1,\dots,l$ are positive definite. Note that Problem \eqref{eq:metricOpt} involves $l$ constraints of the form \eqref{eq:metricOpt_DP} and \eqref{eq:metricOpt_initial} and that this layer-wise splitting achieved by introducing value functions is computationally favorable over using one large and sparse constraint for the whole NN.

Still, at the present state, \eqref{eq:metricOpt} is an intractable problem due to the optimization over the infinite-dimensional objects (functions) $V_k$ and the infinitely many constraints \eqref{eq:metricOpt_DP} which must hold for all $u_k^1,u_k^2\in \calD_{k-1}$. For this reason, we refer to a very common relaxation from the control literature, namely, quadratic value functions. To this end, we constrain the functions $V_k$ to be of the form
\begin{align}\label{eq:quadratic_value_function}
    V_k(y_k^1,y_k^2) &= V_{X_k}(y_k^1,y_k^2) := \langle y_k^1 - y_k^2, X_k (y_k^1 - y_k^2) \rangle_2
\end{align}
for linear self-adjoint, positive definite operators $X_k$ on $\calD_k$, i.e., $\langle y_k^1 - y_k^2, X_k (y_k^1 - y_k^2) \rangle_2=\langle X_k^* (y_k^1 - y_k^2), y_k^1 - y_k^2
\rangle_2$ for all $y_k^1,y_k^2\in\calD_k$ \cite{akhiezer2013theory}. In the case $\calD_k = \bbR^{c_k}$ we may simply assume that the operators $X_k$ are in matrix representation and obtain 
\begin{equation*}
V_{X_k}(y_k^1,y_k^2) = (y_k^1 - y_k^2)^\top X_k (y_k^1 - y_k^2).
\end{equation*}
In the case $\calD_k = \signals{d_k}{c_k}$, we can represent $X_k$ in terms of a sequence of matrices $(X_k^\prime\lsb \bi,\bj \rsb)_{\bi,\bj \in \bbN_0^{d_k}}$, $X_k^\prime[\bi,\bj]\in\bbR^{c_k\times c_k}$ by
\begin{align*}
    V_{X_k}(y_k^1,y_k^2) = \sum_{\bi,\bj \in \bbN_0^{d}}(y_k^1\lsb \bi \rsb - y_k^2\lsb \bi \rsb )^\top X_k^\prime\lsb \bi,\bj \rsb (y_k^1 \lsb \bj \rsb - y_k^2 \lsb \bj\rsb).
\end{align*}
For this representation, self adjointness of $X_k$ means that $X^\prime_k$ is symmetric, i.\,e., $X_k^\prime\lsb \bi,\bj\rsb = X_k^\prime\lsb \bi,\bj\rsb^\top = X_k^\prime\lsb \bj,\bi \rsb$. This relaxation is a first step towards rendering the optimization tractable. In particular, with the assumption $V_k = V_{X_k}$ we can replace the constraint \eqref{eq:metricOpt_DP} at the $k$-th layer in \eqref{eq:metricOpt} by
\begin{equation}\label{eq:metricOpt_DP_relaxed} 
    V_{X_{k-1}}(u_k^1,u_k^2) \geq V_{X_k}(\calL_k(u_k^1),\calL_k(u_k^2)).
\end{equation}
A second step involves deriving sufficient LMI conditions that imply \eqref{eq:metricOpt_DP_relaxed} for all layer types or subnetworks $\calL \in \{\calF,\calC,\sigma,\calP,\calR\} \cup \{ \sigma \circ \calF, \sigma \circ \calC, \calP \circ \sigma \circ \calC \} \cup \{ \text{residual layer}\}$. We denote these LMIs by $\mathcal{G}_{k}(X_k,X_{k-1},\nu_k) \succeq 0$, $k=1,\dots,l$ with respective slack variables $\nu_k$ which relaxes the optimization problem~\eqref{eq:metricOpt} to
\begin{align}
    \min_{X_0,\ldots,X_l,\nu_1,\ldots,\nu_l,\rho^2} ~&~ \rho^2 \label{eq:finalSDP}\\
    \mathrm{subject~to} ~&~ X_0 = \rho^2 I,\nonumber\\
    ~&~ \mathcal{G}_{k}(X_k,X_{k-1},\nu_k) \succeq 0, & k=1,\ldots,l, \nonumber\\
    ~&~ X_k \in \mathcal{H}^y_{\calL_k} \cap \mathcal{H}^u_{\calL_{k+1}}, & k = 1,\ldots,l-1,\nonumber\\
    ~&~ X_l = I . \nonumber
\end{align}
For certain layers, we impose additional restrictions on $X_k$ to enhance tractability of the problem, whose nature will be discussed in the next sections for the individual layer types. These restrictions are denoted by $\calH^y_{\calL_k}$ and $\calH^u_{\calL_k}$, corresponding to output and input restrictions, respectively. Notice that each operator $X_k$ has to satisfy the restrictions $\mathcal{H}^y_{\calL_k}$ and  $\mathcal{H}^u_{\calL_{k+1}}$ of two layers. The resulting optimization problem \eqref{eq:finalSDP} is an SDP with one LMI constraint per layer, where the index $k=1,\dots,l$ counts through all considered layers/subnetworks. 

In what follows, we provide detailed derivations of the LMI constraints $\mathcal{G}_{k}(X_k,X_{k-1},\nu_k) \succeq 0$ in \eqref{eq:finalSDP}, covering all relevant layer types and subnetworks. For ease of exposition, the layer indices $k$ are omitted and the subscript ``$-$'' is used to refer to the previous layer, e.g., $X_-$ is short for $X_{k-1}$.

\subsection{The convolutional layer}

If $\calL = \calC$ is a convolutional layer, then $\calD_{-} = \signals{d}{c_{-}}$ and $\calD = \signals{d}{c}$. % for $d = d_k = d_{-}.
Convolutional layers described by \eqref{eq:conv_alt} are shift-invariant mappings and a similar property is imposed on the operators $X$. Particularly, we require that $X_{-}^\prime$ and $X^\prime$ are of the form
\begin{align}\label{eq:restriction_conv}
    X_{-}^\prime\lsb \bi,\bj \rsb = \begin{cases}
        \widetilde{X}_{-} & \bi = \bj\\
        0 & \bi \neq \bj,
    \end{cases}
    \quad 
    X^\prime\lsb \bi,\bj \rsb = \begin{cases}
        \widetilde{X} & \bi = \bj\\
        0 & \bi \neq \bj,
    \end{cases}
\end{align}
i.\,e., these operators are parametrized by matrices $\widetilde{X} \in \bbS_{++}^{c}$ and $\widetilde{X}_{-} \in \bbS_{++}^{c_{-}}$ in a \emph{block-diagonally repeated} fashion. We denote the convolution-specific restriction in \eqref{eq:restriction_conv} by $X \in \calH_{\calC}^y$ and $X_{-} \in \calH_{\calC}^u$. 

This parameterization is a key to deriving efficient LMI representations of the inequality \eqref{eq:metricOpt_DP_relaxed}. The assumption that $X$ is a block-diagonal, shift-invariant operator is a restriction that may introduce conservatism. However, the use of e.g. block-diagonal Lyapunov functions for N-D systems \cite{xiao1997stability}, or, more generally, time-invariant value functions for time-invariant systems \cite{bertsekas2005dynamic} has proven useful in the control literature and often comes with only moderate conservatism.
%
%This restriction might seem confining at first sight, but it renders the problem computationally tractable and leverages the structure of convolutional layers, i.e., the shift invariance.
%
Thanks to the diagonally repeated parameterization of $X$, we relax \eqref{eq:metricOpt_DP_relaxed} as an LMI as follows.

\begin{lemma}\label{lem:diss_CNN}
    Consider a convolutional layer $\calL=\calC$. For some operators $X \in \calH_{\calC}^y$ and $X_{-} \in \calH_{\calC}^u$, the convolutional layer \eqref{eq:conv_alt} represented by a Roesser model \eqref{eq:RoesserSys} satisfies \eqref{eq:metricOpt_DP_relaxed} if there exist symmetric matrices $P_m \in \bbS_{++}^{n_{m}}$, $\bP = \blkdiag (P_1,\ldots , P_d)$ such that
    \begin{equation}\label{eq:cert_CNN}
        \begin{bmatrix}
            \bP & 0\\
            0 & \widetilde{X}_{-}
        \end{bmatrix}
        -
        \begin{bmatrix}
            \bA & \bB\\
            \bC & \bD
        \end{bmatrix}^\top
        \begin{bmatrix}
            \bP & 0\\
            0 & \widetilde{X}
        \end{bmatrix}
        \begin{bmatrix}
            \bA & \bB\\
            \bC & \bD
        \end{bmatrix}\succeq 0.
    \end{equation}
\end{lemma}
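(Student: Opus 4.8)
The plan is to show that the LMI \eqref{eq:cert_CNN} is a dissipation inequality for the Roesser realization of the convolution, with storage function built from the block-diagonal matrix $\bP$ on the states and the quadratic form $\widetilde X$ on the output acting as a supply rate. The key observation is that, because of the block-diagonally repeated parameterization \eqref{eq:restriction_conv}, the infinite-dimensional inequality \eqref{eq:metricOpt_DP_relaxed} decouples over the signal index and can be summed up from a pointwise (per-multi-index) matrix inequality along the lines of a standard storage-function telescoping argument.

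First I would reduce to the difference signal. Since the convolution \eqref{eq:conv_alt} is affine, the map $u \mapsto y$ applied to $u^1$ and $u^2$ gives, for the differences $\Delta u := u^1 - u^2$, $\Delta y := y^1 - y^2$, $\Delta x_m := x_m^1 - x_m^2$, exactly the \emph{linear} Roesser recursion (the bias $g$ and the constant ``$1$'' input cancel):
\begin{align*}
    \begin{bmatrix} \Delta x_1[\bi+e_1] \\ \vdots \\ \Delta x_d[\bi+e_d] \\ \Delta y[\bi] \end{bmatrix}
    = \begin{bmatrix} \bA & \bB \\ \bC & \bD \end{bmatrix}
    \begin{bmatrix} \Delta x_1[\bi] \\ \vdots \\ \Delta x_d[\bi] \\ \Delta u[\bi] \end{bmatrix}.
\end{align*}
With the abbreviation $\Delta \bx[\bi] := (\Delta x_1[\bi]^\top, \ldots, \Delta x_d[\bi]^\top)^\top$, left- and right-multiplying \eqref{eq:cert_CNN} by $(\Delta \bx[\bi]^\top, \Delta u[\bi]^\top)$ and its transpose yields the pointwise inequality
\begin{align*}
    \Delta \bx[\bi]^\top \bP\, \Delta \bx[\bi] + \Delta u[\bi]^\top \widetilde X_- \Delta u[\bi]
    \;\geq\; \sum_{m=1}^d \Delta x_m[\bi+e_m]^\top P_m\, \Delta x_m[\bi+e_m] + \Delta y[\bi]^\top \widetilde X\, \Delta y[\bi],
\end{align*}
valid for every $\bi \in \bbN_0^d$. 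Recognizing $\Delta u[\bi]^\top \widetilde X_- \Delta u[\bi]$ as the $\bi$-th term of $V_{X_-}(u^1,u^2)$ and $\Delta y[\bi]^\top \widetilde X \Delta y[\bi]$ as the $\bi$-th term of $V_X(y^1,y^2)$ (using \eqref{eq:restriction_conv}), the inequality \eqref{eq:metricOpt_DP_relaxed} will follow by summing over all $\bi$ provided the ``storage'' terms telescope to something nonnegative.

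The main step — and the place where one must be slightly careful — is the telescoping/summation argument over the lattice $\bbN_0^d$. Define the storage $S(\bi) := \sum_{m=1}^d \Delta x_m[\bi]^\top P_m \Delta x_m[\bi] \geq 0$ (here $\bP \succ 0$ is used). Summing the pointwise inequality over a large box $[0,\bN]$, the left-hand state term $\sum_{\bi} \Delta\bx[\bi]^\top \bP \Delta\bx[\bi]$ and the shifted right-hand term $\sum_{\bi}\sum_m \Delta x_m[\bi+e_m]^\top P_m \Delta x_m[\bi+e_m]$ cancel in the interior; what remains are boundary contributions. On the ``incoming'' boundary faces the relevant states vanish by the initialization in Lemma~\ref{lem:min_real_2D} (e.g. $x_1[0,i_2]=0$, $x_2[i_1,0]=0$ in the 2-D case, and analogously for the $d$ faces $\{i_m = 0\}$ in general), so those boundary terms are zero; on the ``outgoing'' boundary faces the remaining terms are of the form $\Delta x_m[\cdot]^\top P_m \Delta x_m[\cdot] \geq 0$ and hence only help the inequality. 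Since images have finite support, for $\bN$ large enough the state differences are eventually zero outside a finite box (the FIR/finite-impulse-response nature of the convolution), so letting $\bN \to \infty$ the boundary terms vanish and we obtain exactly $V_{X_-}(u^1,u^2) \geq V_X(y^1,y^2)$, i.e. \eqref{eq:metricOpt_DP_relaxed}. The one subtlety to handle carefully is bookkeeping the $d$ different shift directions simultaneously so that each interior state term is cancelled exactly once; this is the standard Roesser-model energy balance and is where I would spend the most care, but it is routine once the per-direction telescoping is set up. I would either write it out for $d=2$ (matching \eqref{eq:RoesserSys2D}) and remark that the general case is identical, or invoke a known N-D dissipation-inequality lemma if one is available in the cited N-D systems literature.
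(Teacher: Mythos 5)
Your proposal is correct and follows essentially the same route as the paper's proof in Appendix A: reduce to the difference signals (biases cancel), left- and right-multiply \eqref{eq:cert_CNN} by $(\Delta\bx[\bi]^\top,\Delta u[\bi]^\top)$ to get the pointwise dissipation inequality, and sum over the lattice so that the state terms telescope, with the incoming boundary terms vanishing by the zero initial conditions of the Roesser realization. The only difference is cosmetic: you sum over a finite box and let it grow, whereas the paper sums directly over $\bbN_0^d$ and justifies convergence via the FIR property.
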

\begin{proof}
    A proof of Lemma~\ref{lem:diss_CNN} is given in \cite[Theorem 4]{gramlich2023convolutional} for 2-D systems and in Appendix \ref{app:proof_lem_diss_CNN} for N-D systems.
\end{proof}

The inequality \eqref{eq:cert_CNN} is an instance of $\calG(X_{-},X,\nu) \succeq 0$ in \eqref{eq:finalSDP} with slack variables $\nu = \bP$. 
We further note that \eqref{eq:cert_CNN} in Lemma \ref{lem:diss_CNN} is an exact characterization of \eqref{eq:metricOpt_DP_relaxed} for our restricted value function parameters $X \in \calH_\calC^y$, $X_{-}\in \calH_\calC^u$ in the cases $d = 0,1$. This is because \eqref{eq:metricOpt_DP_relaxed} is equivalent to dissipativity of the layer $\calL$ with respect to (w.r.t.) the supply rate $(u,y) \mapsto u^\top \widetilde{X}_{-} u - y^\top \widetilde{X} y$. For $d \leq 1$, $\calL$ is at most a 1-D system and dissipativity of 1-D systems is exactly characterized by the dissipation inequality \eqref{eq:cert_CNN} \cite{scherer2000linear}. For $d \geq 2$, this is no longer the case and the conservatism of \eqref{eq:cert_CNN} might stem from the block-diagonal structure of $\bP$ and also depends on the choice of the realization, i.e., the choice of the matrices $(\bA,\bB,\bC,\bD)$ \cite{gramlich2023convolutional}. State space representations are non-unique and interestingly, \cite{gramlich2023convolutional} empirically discovered that a non-minimal state space representation can lead to less conservative results. For the interested reader, we discuss the extension of Lemma~\ref{lem:diss_CNN} to strided convolutions \eqref{eq:conv_strided} in Appendix~\ref{sec:strided_convolutions}.

Note the special structure of \eqref{eq:cert_CNN}. If we understand $\widetilde{X}$/$\widetilde{X}_{-}$ as another block of $\bP$, then this matrix inequality is a Lyapunov inequality for a $(d+1)$-D system \cite{xiao1997stability}, an inequality that often appears in stability analysis of dynamical systems. The $(\bA,\bB,\bC,\bD)$ block plays the role of the $A$-matrix. This system is time-varying along the $k$-axis, which should be viewed as the time axis, and time-invariant along all other axes, which should be viewed as space axes.

\begin{figure}[!t]
\centering
\subfloat[]{\includegraphics[width=0.15\textwidth]{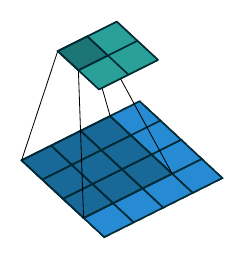}%
\label{fig:no_padding}}
\hfil
\subfloat[]{\includegraphics[width=0.15\textwidth]{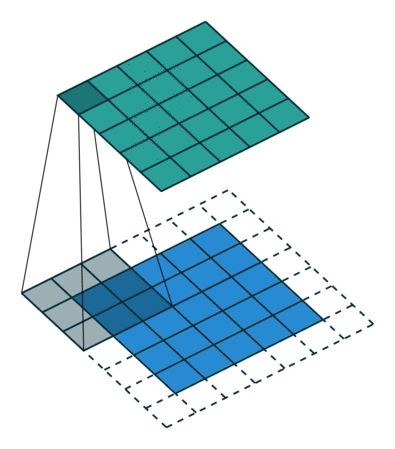}%
\label{fig:same_padding}}
\subfloat[]{\includegraphics[width=0.15\textwidth]{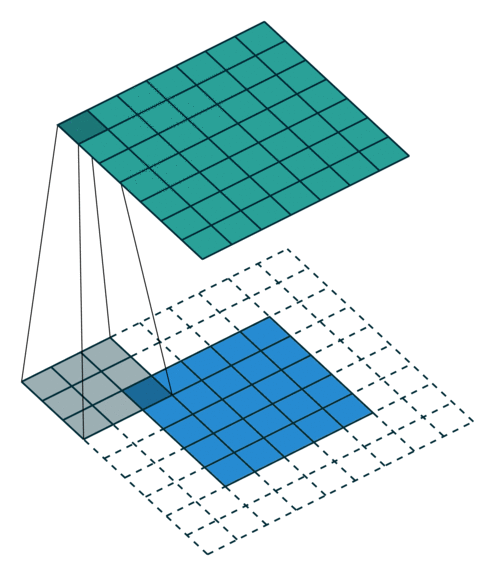}%
\label{fig:full_padding}}
\caption{(a) No padding, (b) same padding, (c) full padding for a $3\times3$ kernel \cite{dumoulin2016guide}.}
\label{fig:padding}
\end{figure}

Another design choice for convolutional layers is the kind of \emph{zero-padding} that is used. There are different kinds of padding as shown in Fig.~\ref{fig:padding} \cite{dumoulin2016guide}. We distinguish between full padding which increases the output dimension, same padding which preserves it and no/valid padding which decreases it. The proof of Lemma~\ref{lem:diss_CNN} in Appendix~\ref{app:proof_lem_diss_CNN} relies on full padding, which over-approximates and thus includes the other cases as we argue in the following. The type of padding decides which finite excerpt $[\bN_1,\bN_2]$ of the infinite signal on $\bbN_0^d$ is passed on to the next layer.

Let $[\bN_1,\bN_2]$ define the excerpt that is used with same or no zero-padding. In case of full padding, the chosen excerpt involves all non-zero entries of $y^1[\cdot]$ and $y^2[\cdot]$ such that its value function is $V_{X}(y^1,y^2)$. Due to the positive-definiteness of $V_{X}$, its evaluation on a finite excerpt of the same signal yields the first inequality in
\begin{equation*}
    \sum_{\bi=\bN_1}^{\bN_2} y[\bi]^\top \widetilde{X} y[\bi]\leq V_{X}(y^1,y^2) \leq V_{X_{-}}(u^1,u^2),
\end{equation*}
and the second inequality, i.e., \eqref{eq:metricOpt_DP_relaxed}, is implied by \eqref{eq:cert_CNN}. This shows that \eqref{eq:cert_CNN} implies \eqref{eq:metricOpt_DP_relaxed} for unpadded and same-padded signals.  

\subsection{The fully connected layer}
If $\calL = \calF$ is a fully connected layer, then $\calD_{-} = \bbR^{c_{-}}$ and $\calD = \bbR^{c}$. In this case, we can understand $X_{-}$ and $X$ as matrices with $V_{X_{-}}(u^1,u^2) = (u^1 - u^2)^\top X_{-} (u^1 - u^2)$ and $V_{X}(y^1,y^2) = (y^1 - y^2)^\top X (y^1 - y^2)$, as mentioned before. We do not impose any further restrictions on $X$, $X_{-}$, i.\,e., $\calH_{\calF}^y = \bbR^{c \times c}$ and $\calH_{\calF}^u = \bbR^{c_{-} \times c_{-}}$. The following lemma describes \eqref{eq:metricOpt_DP_relaxed} as an LMI in a lossless manner.

\begin{lemma}\label{lem:diss_fullyConnected}
    Consider a fully connected layer $\calL=\calF$. With operators $X \in \calH_{\calF}^y$ and $X_{-} \in \calH_{\calF}^u$, a fully connected layer \eqref{eq:linearLayer} satisfies \eqref{eq:metricOpt_DP_relaxed} if and only if
    \begin{align} \label{eq:diss_fullyConnected}
        X_{-} - W^\top X W\succeq 0.
    \end{align}
\end{lemma}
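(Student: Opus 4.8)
The plan is to expand both sides of the relaxed dynamic-programming inequality \eqref{eq:metricOpt_DP_relaxed} explicitly using the definition of a fully connected layer and the quadratic value-function parametrization, and observe that the resulting condition is a quadratic form in the difference of the inputs which is nonnegative for all inputs if and only if the associated matrix is positive semidefinite.

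First I would write $\Delta u := u^1 - u^2 \in \bbR^{c_{-}}$ and note that since $\calF$ is affine, $\calF(u^1) - \calF(u^2) = (b + W u^1) - (b + W u^2) = W\Delta u$, so the bias cancels. Substituting into \eqref{eq:metricOpt_DP_relaxed} with the matrix representations $V_{X_{-}}(u^1,u^2) = \Delta u^\top X_{-} \Delta u$ and $V_{X}(\calF(u^1),\calF(u^2)) = (W\Delta u)^\top X (W\Delta u) = \Delta u^\top W^\top X W \Delta u$, the inequality \eqref{eq:metricOpt_DP_relaxed} becomes
\begin{align*}
    \Delta u^\top \left( X_{-} - W^\top X W \right) \Delta u \geq 0 \qquad \forall\, \Delta u \in \bbR^{c_{-}}.
\end{align*}
Here I use that as $u^1,u^2$ range over all of $\calD_{-} = \bbR^{c_{-}}$, the difference $\Delta u$ ranges over all of $\bbR^{c_{-}}$.

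The final step is to invoke the elementary fact that a symmetric matrix $M = X_{-} - W^\top X W$ satisfies $v^\top M v \geq 0$ for all $v \in \bbR^{c_{-}}$ if and only if $M \succeq 0$; symmetry of $M$ holds because $X$ and $X_{-}$ are self-adjoint (hence symmetric in matrix form) and $W^\top X W$ is symmetric whenever $X$ is. This yields the claimed equivalence \eqref{eq:diss_fullyConnected}, and since no restrictions $\calH_{\calF}^y, \calH_{\calF}^u$ are imposed the characterization is lossless. There is essentially no obstacle here: the only thing to be slightly careful about is recording that the reachable set of $\Delta u$ is the full space (so no spurious conservatism is hidden in restricting to realizable input differences), and that $X \succ 0$ is not needed for the argument — only symmetry — although positive definiteness of the $X_k$'s is guaranteed anyway by the chain \eqref{eq:inequChain}.
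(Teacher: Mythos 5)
Your proof is correct. It is not quite the route the paper takes: the paper gives no self-contained argument for this lemma, instead observing that \eqref{eq:diss_fullyConnected} is the degenerate $d=0$ case of the Lyapunov inequality \eqref{eq:cert_CNN} for convolutional layers (Lemma~\ref{lem:diss_CNN}), with $\bA,\bB,\bC,\bP$ empty and $\bD=W$, so that the appendix proof of that lemma specializes to the fully connected case. Your direct expansion of the two quadratic value functions is exactly the computation that this specialization collapses to, so the underlying mechanism is the same, but your presentation is more elementary and, more importantly, it cleanly delivers the \emph{only if} direction: Lemma~\ref{lem:diss_CNN} is stated only as a sufficient condition, and the paper justifies losslessness for $d\leq 1$ only in the surrounding prose via exactness of the dissipation inequality for 1-D systems. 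Your observation that $\Delta u$ ranges over all of $\bbR^{c_-}$, together with the elementary fact that a symmetric quadratic form is nonnegative on the whole space iff the matrix is positive semidefinite, supplies that converse directly. Your side remarks are also accurate: the bias cancels in the difference, symmetry of $X_- - W^\top X W$ follows from self-adjointness of $X_-$ and $X$, and positive definiteness of the operators is not needed for this particular equivalence.
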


LMI \eqref{eq:diss_fullyConnected} is a special case of the Lyapunov inequality \eqref{eq:cert_CNN} for $d = 0$, cmp. Remark~\ref{rem:correspondence}, and hence already proven. In this case, $\bA, \bB,\bC,\bP$ are empty matrices and $\bD$ corresponds to $W$. We denote the inequality \eqref{eq:diss_fullyConnected} by $\calG(X_{-},X,\nu) \succeq 0$, where $\nu = \lsb ~ \rsb$ (the empty matrix).

\subsection{The activation function layer}
If $\calL = \sigma$ is an activation function layer, then $\calD = \calD_{-} = \bbR^{c}$ and $\calD = \calD_{-} = \signals{d}{c}$ are both possible. Recall that the channel dimension $c_{-} = c$ and signal dimension $d_{-} = d$ stay the same for this layer type. In case $d>0$, we choose the operators $X$ and $X_{-}$ to be block diagonal and time-invariant, i.\,e., they satisfy \eqref{eq:restriction_conv}. The restriction of time-invariance is not needed for this layer type, which means that block-diagonal multipliers with varying blocks $\widetilde{X}\lsb\bi\rsb$ can also be used. However, we use the restriction \eqref{eq:restriction_conv}  for simplicity and computational tractability reasons. 

The most common activation functions such as ReLU, tanh, and sigmoid are slope-restricted, i.\,e., they satisfy the quadratic constraint \eqref{QC:slope_restriction} of the following lemma.

    \begin{lemma}[Slope-restriction \cite{fazlyab2023efficient,pauli2021training}] \label{lem:slope_restriction}
        Consider an activation function $\sigma:\bbR^c\to\bbR^c$ that is slope-restricted on $[0,1]$. For any  $\Lambda\in\bbD_{++}^c$, $\sigma$ satisfies
    \begin{equation}\label{QC:slope_restriction}
        \begin{bmatrix}
            x-y\\
            \sigma(x)-\sigma(y)
        \end{bmatrix}^\top
        \begin{bmatrix}
            0 & \Lambda\\
            \Lambda & -2\Lambda
        \end{bmatrix}
        \begin{bmatrix}
            x-y\\
            \sigma(x)-\sigma(y)
        \end{bmatrix}\geq 0, \,\forall x,y\in\bbR^c.
    \end{equation}
    \end{lemma}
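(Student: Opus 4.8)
The plan is to reduce the vector-valued quadratic constraint \eqref{QC:slope_restriction} to a sum of $c$ scalar inequalities, one per coordinate, each of which is an immediate consequence of the definition of slope-restriction on $[0,1]$. First I would expand the quadratic form. Abbreviating $\delta := x - y$ and $s := \sigma(x) - \sigma(y)$ and using that $\Lambda = \diag(\lambda_1,\dots,\lambda_c)$ with $\lambda_j > 0$ is symmetric, the left-hand side of \eqref{QC:slope_restriction} is
\begin{align*}
    \begin{bmatrix}\delta\\ s\end{bmatrix}^\top
    \begin{bmatrix} 0 & \Lambda\\ \Lambda & -2\Lambda\end{bmatrix}
    \begin{bmatrix}\delta\\ s\end{bmatrix}
    = 2\,\delta^\top \Lambda s - 2\, s^\top \Lambda s
    = 2\sum_{j=1}^c \lambda_j\, s_j\,(\delta_j - s_j).
\end{align*}
Since each $\lambda_j > 0$, it therefore suffices to establish $s_j(\delta_j - s_j) \ge 0$ for every coordinate $j$.

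Next I would exploit that $\sigma$ acts element-wise through the scalar map $\sigma:\bbR\to\bbR$, so that $\delta_j = x_j - y_j$ and $s_j = \sigma(x_j) - \sigma(y_j)$, reducing the claim to a statement about the scalar function alone. By definition, $\sigma$ being slope-restricted on $[0,1]$ means $0 \le \frac{\sigma(a) - \sigma(b)}{a-b} \le 1$ for all $a \neq b$; equivalently, $s_j \delta_j \ge 0$ and, multiplying $\big(\tfrac{s_j}{\delta_j}\big)^2 \le \tfrac{s_j}{\delta_j}$ by $\delta_j^2 \ge 0$, also $s_j^2 \le s_j\delta_j$, whenever $\delta_j \neq 0$; and $s_j = 0$ whenever $\delta_j = 0$ (by continuity / the $1$-Lipschitz property). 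In either case $s_j(\delta_j - s_j) = s_j\delta_j - s_j^2 \ge 0$. Substituting back and summing with the positive weights $\lambda_j$ yields \eqref{QC:slope_restriction}, and since no sign assumptions were placed on $\delta$ or $s$ beyond those forced by slope-restriction, the inequality holds for all $x,y \in \bbR^c$.

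I do not expect a genuine obstacle in this argument; the only point requiring mild care is the bridge from the hypothesis "slope in $[0,1]$" to the two scalar inequalities $s_j\delta_j \ge 0$ and $s_j^2 \le s_j\delta_j$ — in particular, treating the degenerate case $\delta_j = 0$ correctly and noting that the squaring step is justified because it amounts to multiplying a valid inequality by the nonnegative quantity $\delta_j^2$. Everything else is the one-line expansion of the quadratic form above. The statement is standard and can also be referenced to \cite{fazlyab2023efficient,pauli2021training}, but the proof sketched here is self-contained.
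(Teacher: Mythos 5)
Your proof is correct. The paper does not prove this lemma itself but cites it from \cite{fazlyab2023efficient,pauli2021training}; your argument is exactly the standard one used there: expand the quadratic form into the coordinate-wise sum $2\sum_j \lambda_j s_j(\delta_j - s_j)$ and observe that slope-restriction on $[0,1]$ gives $s_j(\delta_j - s_j)\geq 0$ for each scalar coordinate (equivalently, $(s_j - 0\cdot\delta_j)(s_j - 1\cdot\delta_j)\leq 0$), with the degenerate case $\delta_j = 0$ forcing $s_j = 0$ trivially since then $x_j = y_j$. Your handling of the element-wise structure is consistent with how the paper defines activation function layers, so nothing is missing.
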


Note that the published version of \cite{fazlyab2023efficient}, i.e., \cite{fazlyab2019efficient}, falsely used full matrix multipliers instead of diagonal $\Lambda$ which was later corrected by \cite{pauli2021training}. For slope-restricted activation functions, \eqref{eq:metricOpt_DP_relaxed} can be relaxed by an LMI as follows.

\begin{lemma}\label{lem:LMI_slope_restriction}
   Consider an activation function layer $\calL=\sigma$ that is slope-restricted on $[0,1]$. For some operators $X \in \calH_{\calC}^y$ and $X_{-} \in \calH_{\calC}^u$, this activation function layer satisfies \eqref{eq:metricOpt_DP_relaxed} if there exist $\Lambda\in \bbD^{c}_{++}$ such that
    \begin{align}
        \begin{bmatrix}
            \widetilde{X}_{-} & -\Lambda\\
            -\Lambda & 2\Lambda- \widetilde{X}
        \end{bmatrix} \succeq 0. \label{eq:diss_activation function}
    \end{align}
\end{lemma}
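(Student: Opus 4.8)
The plan is to run an S-procedure argument: couple the slope-restriction quadratic constraint of Lemma~\ref{lem:slope_restriction} with the matrix inequality \eqref{eq:diss_activation function} to obtain the relaxed dissipation inequality \eqref{eq:metricOpt_DP_relaxed} \emph{pointwise} in the signal index, and then sum over the index using the block-diagonally repeated structure \eqref{eq:restriction_conv} imposed on $X$ and $X_{-}$ (for $d=0$ this structure is vacuous, so that case is covered automatically).

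First I would dispose of the finite-dimensional case $\calD=\calD_{-}=\bbR^{c}$, where \eqref{eq:restriction_conv} just means $X_{-}=\widetilde{X}_{-}$ and $X=\widetilde{X}$. For arbitrary $u^1,u^2\in\bbR^{c}$ put $\Delta u:=u^1-u^2$, $\Delta y:=\sigma(u^1)-\sigma(u^2)$ and $z:=\begin{bmatrix}\Delta u\\ \Delta y\end{bmatrix}$. The whole argument rests on the elementary identity
\[ V_{X_{-}}(u^1,u^2)-V_{X}(\sigma(u^1),\sigma(u^2)) \;=\; z^\top M_1 z + z^\top M_2 z, \]
\[ M_1:=\begin{bmatrix}\widetilde{X}_{-} & -\Lambda\\ -\Lambda & 2\Lambda-\widetilde{X}\end{bmatrix}, \qquad M_2:=\begin{bmatrix}0 & \Lambda\\ \Lambda & -2\Lambda\end{bmatrix}, \]
valid for every diagonal $\Lambda$ because the off-diagonal $\Lambda$-terms and the $2\Lambda$-terms cancel. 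Now $z^\top M_1 z\ge 0$ is precisely the hypothesis \eqref{eq:diss_activation function} (which needs $\Lambda\in\bbD_{++}^{c}$), while $z^\top M_2 z\ge 0$ is Lemma~\ref{lem:slope_restriction}, applicable because $\sigma$ acts element-wise, is slope-restricted on $[0,1]$, and $\Lambda$ is diagonal. Hence the left-hand side is nonnegative, i.e., \eqref{eq:metricOpt_DP_relaxed} holds.

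Second, for the signal-space case $\calD=\calD_{-}=\signals{d}{c}$, the block-diagonally repeated form \eqref{eq:restriction_conv} gives $V_{X_{-}}(u^1,u^2)=\sum_{\bi\in\bbN_0^{d}}(u^1[\bi]-u^2[\bi])^\top\widetilde{X}_{-}(u^1[\bi]-u^2[\bi])$, and, since the activation layer acts index-wise ($\sigma(u^j)[\bi]=\sigma(u^j[\bi])$), $V_{X}(\sigma(u^1),\sigma(u^2))=\sum_{\bi\in\bbN_0^{d}}(\sigma(u^1[\bi])-\sigma(u^2[\bi]))^\top\widetilde{X}(\sigma(u^1[\bi])-\sigma(u^2[\bi]))$. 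Applying the pointwise estimate of the first step at each $\bi$ (with $\Delta u_{\bi}:=u^1[\bi]-u^2[\bi]$, $\Delta y_{\bi}:=\sigma(u^1[\bi])-\sigma(u^2[\bi])$) and summing the resulting nonnegative terms over $\bi$ yields $V_{X_{-}}(u^1,u^2)-V_{X}(\sigma(u^1),\sigma(u^2))\ge 0$, which is \eqref{eq:metricOpt_DP_relaxed}; the sums are well-defined since the signals of interest have finite support.

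I do not expect a genuine obstacle: the argument is a one-line S-lemma relaxation plus a sum. The two points worth stressing in the write-up are (i) the multiplier $\Lambda$ must be \emph{diagonal}, i.e.\ taken in $\bbD_{++}^{c}$, since a full-matrix $\Lambda$ would break the quadratic constraint of Lemma~\ref{lem:slope_restriction} for the element-wise activation (cf.\ the remark after that lemma); and (ii) using the \emph{same} $\Lambda$ for the whole layer is a deliberate simplification, whereas \eqref{eq:restriction_conv} could be relaxed to index-dependent blocks $\widetilde{X}[\bi]$ with matching index-dependent multipliers at the price of a larger optimization problem.
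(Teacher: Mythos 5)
Your proof is correct and is essentially the paper's own argument: the algebraic identity $V_{X_{-}}-V_{X}=z^\top M_1 z+z^\top M_2 z$ is just a restatement of the paper's step of left/right multiplying \eqref{eq:diss_activation function} by $\begin{bmatrix}\Delta u[\bi]^\top & \Delta y[\bi]^\top\end{bmatrix}$ and invoking Lemma~\ref{lem:slope_restriction}, followed by the same summation over $\bi\in\bbN_0^d$. Your added remarks on the necessity of diagonal $\Lambda$ and the possible index-dependent relaxation match the paper's surrounding discussion.
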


\begin{proof}
    For two arbitrary inputs $(u^1\lsb \bi \rsb ), (u^2\lsb \bi \rsb ) \in \signals{d}{c_{-}}$ with corresponding outputs $(y^1\lsb \bi \rsb ) , (y^2\lsb \bi \rsb )$, we left and right multiply \eqref{eq:diss_activation function} with $\begin{bmatrix}
        \Delta u[\bi]^\top & \Delta y[\bi]^\top
    \end{bmatrix}$,
    wherein $\Delta u[\bi]=u^1[\bi]-u^2[\bi]$ and $\Delta y[\bi]=y^1[\bi]-y^2[\bi]$, and its transpose, respectively, which yields
    \begin{align*}
        &\Delta u[\bi]^\top \widetilde{X}_{-} \Delta u[\bi] -\Delta y[\bi]^\top \widetilde{X} \Delta y[\bi]\\  &\geq 
        \begin{bmatrix}
            \Delta u[\bi]\\
            \Delta y[\bi]
        \end{bmatrix}^\top
        \begin{bmatrix}
            0 & \Lambda\\
            \Lambda & -2\Lambda
        \end{bmatrix}
        \begin{bmatrix}
            \Delta u[\bi]\\
            \Delta y[\bi]
        \end{bmatrix}.
    \end{align*}
    Subsequently, we sum over $\bi\in\bbN_0^d$ and obtain
    \begin{align*}
        &V_{X_{-}}(u^1,u^2) -
        V_{X}(y^1,y^2)\\  &\geq 
        \sum_{\bi\in\bbN_0^d}
        \begin{bmatrix}
            \Delta u[\bi]\\
            \Delta y[\bi]
        \end{bmatrix}^\top
        \begin{bmatrix}
            0 & \Lambda\\
            \Lambda & -2\Lambda
        \end{bmatrix}
        \begin{bmatrix}
            \Delta u[\bi]\\
            \Delta y[\bi]
        \end{bmatrix} \geq 0,
    \end{align*}
    wherein the last inequality follows from Lemma \ref{lem:slope_restriction}.
\end{proof}

Note that Lemma~\ref{lem:LMI_slope_restriction} also includes activation functions applied to vector spaces that occur subsequent to fully connected layers, where technically we need to infer $X \in \calH_{\calF}^y$ and $X_{-} \in \calH_{\calF}^u$ instead of $X \in \calH_{\calC}^y$ and $X_{-} \in \calH_{\calC}^u$. In fact, $X \in\calH_{\calF}^y$ and $X_{-} \in \calH_{\calF}^u$ are special cases of $X \in\calH_{\calC}^y$ and $X_{-} \in \calH_{\calC}^u$ for $d=0$, cmp. Remark~\ref{rem:correspondence}. We denote  the constraint \eqref{eq:diss_activation function} by $\calG(X_{-},X,\nu) \succeq 0$ where $\nu = \Lambda$.

Beside slope-restricted activations, another class of activation functions that has recently gained popularity are gradient norm preserving activations such as GroupSort and MaxMin \cite{anil2019sorting}. These activations are not applied element-wise but to a vector input $u[\bi]\in\bbR^{c}$ consisting of all preactivations at $\bi$. GroupSort separates the $c$ preactivations into $N$ groups each of size $n_g$, i.e., $c = Nn_g$, and then sorts these groups in ascending order. With the restriction
\begin{align*}
    X^\prime\lsb \bi, \bj \rsb = \begin{cases}
        \widetilde{X} \in\calT^c_{n_g} & \bi = \bj\\
        0 & \bi \neq \bj
    \end{cases}
\end{align*}
and an equivalent definition for $X^\prime_{-} \lsb \bi, \bj \rsb$, where
\begin{align*}
    \calT^c_{n_g} =\{T\in\bbS^c\mid T= \diag(\lambda)\otimes I_{n_g}+\diag(\gamma)\otimes \mathbf{1}_{n_g} \mathbf{1}_{n_g}^\top,~\\
    \lambda\in\bbR_+^{c/{n_g}},\gamma\in\bbR^{c/{n_g}} \},
\end{align*}
we can handle GroupSort activation functions using the following lemma \cite{pauli2024novel}. We denote these  structural constraints by $X \in \calH_{\sigma^{\mathrm{GS}}}^y$ and $X_{-} \in \calH_{\sigma^{\mathrm{GS}}}^u$. 

\begin{lemma}\label{lem:GroupSort}
    Consider a GroupSort activation function $\calL=\sigma^{\mathrm{GS}}$. For some operators $X\in\calH_{\sigma^{\mathrm{GS}}}^y$ and $X_{-}\in\calH_{\sigma^{\mathrm{GS}}}^u$, the GroupSort activation function satisfies \eqref{eq:metricOpt_DP_relaxed} if the matrices $\widetilde{X}$ and $\widetilde{X}_{-}$ satisfy $0\preceq \widetilde{X}\preceq \widetilde{X}_{-}$.
\end{lemma}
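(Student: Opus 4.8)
The plan is to reduce the dissipation inequality \eqref{eq:metricOpt_DP_relaxed} to a pointwise (per-multi-index) inequality, exactly as in the proof of Lemma~\ref{lem:LMI_slope_restriction}, and then exploit the key algebraic property of the GroupSort map: it is a permutation acting within each group of size $n_g$. Concretely, for two inputs $(u^1[\bi]), (u^2[\bi]) \in \signals{d}{c_-}$ with outputs $(y^1[\bi]), (y^2[\bi])$, the block-diagonal time-invariant structure $X \in \calH_{\sigma^{\mathrm{GS}}}^y$, $X_- \in \calH_{\sigma^{\mathrm{GS}}}^u$ gives
\begin{align*}
    V_{X_-}(u^1,u^2) - V_X(y^1,y^2) = \sum_{\bi \in \bbN_0^d} \left( \Delta u[\bi]^\top \widetilde{X}_- \Delta u[\bi] - \Delta y[\bi]^\top \widetilde{X} \Delta y[\bi] \right),
\end{align*}
with $\Delta u[\bi] = u^1[\bi] - u^2[\bi]$ and $\Delta y[\bi] = y^1[\bi] - y^2[\bi]$. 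It therefore suffices to show that for each fixed $\bi$,
\begin{align*}
    \Delta u[\bi]^\top \widetilde{X}_- \Delta u[\bi] \geq \Delta y[\bi]^\top \widetilde{X} \Delta y[\bi],
\end{align*}
and since $\widetilde{X} \preceq \widetilde{X}_-$ it is in turn enough to prove $\Delta u[\bi]^\top \widetilde{X} \Delta u[\bi] \geq \Delta y[\bi]^\top \widetilde{X} \Delta y[\bi]$ for every $\widetilde{X} \in \calT^c_{n_g}$ with $\widetilde{X} \succeq 0$.

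The second step is the heart of the argument. Drop the index $\bi$ and write $u^1 = (u^1_{(1)},\dots,u^1_{(N)})$, $u^2 = (u^2_{(1)},\dots,u^2_{(N)})$ decomposed into the $N$ groups of size $n_g$; the GroupSort output is $y^j_{(m)} = \Pi_{j,m} u^j_{(m)}$, where $\Pi_{j,m}$ is the sorting permutation matrix of group $m$ for input $j$. Using the decomposition $\widetilde{X} = \diag(\lambda) \otimes I_{n_g} + \diag(\gamma) \otimes \mathbf{1}_{n_g}\mathbf{1}_{n_g}^\top$, the quadratic form splits across groups as
\begin{align*}
    \Delta u^\top \widetilde{X} \Delta u = \sum_{m=1}^N \left( \lambda_m \|\Delta u_{(m)}\|_2^2 + \gamma_m (\mathbf{1}_{n_g}^\top \Delta u_{(m)})^2 \right),
\end{align*}
and likewise for $\Delta y$ with $\Delta y_{(m)} = \Pi_{1,m} u^1_{(m)} - \Pi_{2,m} u^2_{(m)}$. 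Since permutations preserve the sum of entries, $\mathbf{1}_{n_g}^\top \Delta y_{(m)} = \mathbf{1}_{n_g}^\top \Delta u_{(m)}$, so the $\gamma$-terms are identical on both sides and it remains to show $\|\Delta u_{(m)}\|_2^2 \geq \|\Delta y_{(m)}\|_2^2$ for each $m$ (the coefficients $\lambda_m \geq 0$ are nonnegative, which is where positivity of $\widetilde{X}$ enters). But $\|\Delta y_{(m)}\|_2^2 = \|\Pi_{1,m} u^1_{(m)} - \Pi_{2,m} u^2_{(m)}\|_2^2$, and since sorting is precisely the operation that, among all pairings, minimizes the Euclidean distance between two multisets (the rearrangement inequality for $\|\cdot\|_2^2$, or equivalently that sorting both vectors is optimal transport on the line), we have $\|\Pi_{1,m} u^1_{(m)} - \Pi_{2,m} u^2_{(m)}\|_2^2 \leq \|u^1_{(m)} - u^2_{(m)}\|_2^2$. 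Summing over $m$ closes the pointwise inequality.

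I expect the main obstacle to be making the rearrangement step fully rigorous, i.e., carefully justifying $\|\mathrm{sort}(a) - \mathrm{sort}(b)\|_2 \leq \|a - b\|_2$ for $a,b \in \bbR^{n_g}$. The cleanest route is: sorting is $1$-Lipschitz in $\ell_2$ on $\bbR^{n_g}$ (it is the metric projection onto the monotone cone, which is nonexpansive, or alternatively a composition of nonexpansive pairwise max/min swaps), hence $\|\mathrm{sort}(a) - \mathrm{sort}(b)\|_2 \leq \|a-b\|_2$; then note $\Delta y_{(m)} = \mathrm{sort}(u^1_{(m)}) - \mathrm{sort}(u^2_{(m)})$ since GroupSort literally sorts each group. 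One should also double-check the edge cases where entries tie (the permutation is then non-unique, but any choice of sorting permutation works since the sorted vector is unique) and confirm the claimed split of the quadratic form using $(\diag(\lambda)\otimes I)$ and $(\diag(\gamma)\otimes \mathbf{1}\mathbf{1}^\top)$ acting blockwise. Everything else — the passage from the pointwise inequality to $V_{X_-} - V_X \geq 0$ via summation over $\bbN_0^d$, and the reduction using $\widetilde{X} \preceq \widetilde{X}_-$ — is routine, mirroring Lemma~\ref{lem:LMI_slope_restriction}.
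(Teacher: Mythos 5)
Your proof is correct, and its outer skeleton (restricting to the pointwise quadratic form at each $\bi$ via the block-diagonal, shift-invariant structure of $X,X_-$, then summing over $\bbN_0^d$) is exactly the paper's. The difference lies in how the key incremental quadratic constraint is obtained: the paper simply cites it as a black box from prior work (\cite{pauli2024novel}, restated as Lemma~\ref{lemma:GroupSort2}), namely that any $T\in\calT^c_{n_g}$ satisfies $(x-y)^\top T(x-y)\geq(\sigma^{\mathrm{GS}}(x)-\sigma^{\mathrm{GS}}(y))^\top T(\sigma^{\mathrm{GS}}(x)-\sigma^{\mathrm{GS}}(y))$, and then sandwiches $0\preceq\widetilde{X}\preceq T\preceq\widetilde{X}_-$; you instead prove this constraint from scratch by splitting $\widetilde{X}=\diag(\lambda)\otimes I_{n_g}+\diag(\gamma)\otimes\mathbf{1}\mathbf{1}^\top$ groupwise, observing that sorting preserves $\mathbf{1}^\top(\cdot)$ so the $\gamma$-terms cancel, and invoking the rearrangement inequality $\|\mathrm{sort}(a)-\mathrm{sort}(b)\|_2\leq\|a-b\|_2$ together with $\lambda\in\bbR_+^{c/n_g}$ (which is built into the definition of $\calT^c_{n_g}$, so you need not derive it from $\widetilde{X}\succeq 0$; this also sidesteps the degenerate case $n_g=1$). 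Your route is self-contained and arguably more illuminating about \emph{why} the multiplier class $\calT^c_{n_g}$ is the right one — the $I$-part works because sorting is an $\ell_2$-isometry on each argument and hence nonexpansive on differences, the $\mathbf{1}\mathbf{1}^\top$-part because sorting is sum-preserving — at the cost of having to make the rearrangement step rigorous; the paper's route is shorter but defers the substance to the cited reference. The remaining details you flag (ties in the sorting permutation, the blockwise split of the quadratic form) are indeed routine and close without difficulty.
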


\begin{proof}
    The proof is deferred to Appendix~\ref{app:GroupSort}.
\end{proof}

\subsection{The pooling layer}
If $\calL = \calP$ is a pooling layer, then $\calD_{-} = \signals{d}{c_{-}}$ and $\calD = \signals{d}{c}$ for $d = d_{-}$ and $c = c_{-}$. The handling of pooling layers is very similar to the handling of activation function layers. As discussed in \cite{pauli2023lipschitza}, for both layer types there exist quadratic constraints exploiting their channel-wise Lipschitz properties, based on which we find LMI constraints for the respective layers.

Since pooling layers, i.e., subsampling layers, only make sense on the signal spaces $\signals{d}{c}$, we consider these signal spaces as domain and image spaces and restriction \eqref{eq:restriction_conv} on the operators $X$ and $X_{-}$. Note that theoretically, we could study this problem in the non-static, non-shift-invariant case. However, pooling layers will be concatenated with shift-invariant convolutional layers whose operators $X$ and $X_{-}$ are also chosen to be shift-invariant and static. The following lemma shows how we handle the dynamic programming inequality \eqref{eq:metricOpt_DP_relaxed} with average pooling layers.

\begin{lemma}
    For the average pooling layer, the dynamic programming inequality \eqref{eq:metricOpt_DP_relaxed} is satisfied if the matrices $\widetilde{X}$ and $\widetilde{X}_{-}$ satisfy the simple matrix inequality $0 \preceq \mu^2 \widetilde{X} \preceq \widetilde{X}_{-}$, $\mu$ being the Lipschitz constant of the average pooling layer.
\end{lemma}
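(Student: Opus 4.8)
The plan is to use that average pooling $\calP^{\mathrm{av}}$ is \emph{linear} and acts \emph{channel-wise}, which reduces the claim to the defining property of its Lipschitz constant $\mu$. Throughout, write $\Delta u[\bi] = u^1[\bi]-u^2[\bi]$ and $\Delta y[\bi] = \calP^{\mathrm{av}}(u^1)[\bi]-\calP^{\mathrm{av}}(u^2)[\bi]$; by linearity $\Delta y = \calP^{\mathrm{av}}(\Delta u)$. Under the restriction \eqref{eq:restriction_conv} imposed on $X$ and $X_{-}$, the inequality \eqref{eq:metricOpt_DP_relaxed} to be proven reads
\begin{align*}
    \sum_{\bi\in\bbN_0^{d}} \Delta u[\bi]^\top \widetilde{X}_{-} \Delta u[\bi] \;\geq\; \sum_{\bi\in\bbN_0^{d}} \Delta y[\bi]^\top \widetilde{X}\, \Delta y[\bi].
\end{align*}

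First I would factor $\widetilde{X}_{-} = L^\top L$ with $L = \widetilde{X}_{-}^{1/2}$ (legitimate since $\widetilde{X}_{-}\in\bbS_{++}^{c_{-}}$) and observe that $L$, being a fixed channel-mixing matrix, commutes with the channel-wise spatial averaging $\calP^{\mathrm{av}}$, so that $\calP^{\mathrm{av}}(L\Delta u)[\bi] = L\,\calP^{\mathrm{av}}(\Delta u)[\bi] = L\Delta y[\bi]$. Applying the definition of $\mu$ as the Lipschitz constant of $\calP^{\mathrm{av}}$ with respect to the $\ell_2$-signal norm to the signal $L\Delta u$ then gives
\begin{align*}
    \sum_{\bi} \Delta y[\bi]^\top\widetilde{X}_{-}\Delta y[\bi] \;=\; \sum_{\bi}\|L\Delta y[\bi]\|_2^2 \;\leq\; \mu^2\sum_{\bi}\|L\Delta u[\bi]\|_2^2 \;=\; \mu^2\sum_{\bi}\Delta u[\bi]^\top\widetilde{X}_{-}\Delta u[\bi].
\end{align*}
Second, I would use the hypothesis $\mu^2\widetilde{X}\preceq\widetilde{X}_{-}$ together with $\mu>0$ (averaging is not the zero map) to obtain the pointwise bound $\Delta y[\bi]^\top\widetilde{X}\Delta y[\bi]\leq\mu^{-2}\Delta y[\bi]^\top\widetilde{X}_{-}\Delta y[\bi]$, sum it over $\bi$, and chain with the previous display; the factors $\mu^2$ and $\mu^{-2}$ cancel and yield exactly the desired inequality, i.e.\ \eqref{eq:metricOpt_DP_relaxed}. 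The remaining part of the hypothesis, $\widetilde{X}\succeq 0$, is what keeps $V_X$ a valid (positive semidefinite) value function and makes this last step meaningful.

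The bulk of this is routine bookkeeping; the only point that needs care is the commutation step — that any fixed linear map on the channel coordinates commutes with channel-wise spatial averaging — together with making sure $\mu$ is understood as the operator norm of $\calP^{\mathrm{av}}$ on the $\ell_2$-signal space used to build the value functions, so that $\sum_{\bi}\|L\Delta y[\bi]\|_2^2\leq\mu^2\sum_{\bi}\|L\Delta u[\bi]\|_2^2$ is licensed. Equivalently, one can argue operator-theoretically: letting $\mathbf{X}_{-}$ denote the operator on $\signals{d}{c_{-}}$ induced by applying $\widetilde{X}_{-}$ channel-wise and $\mathbf{L}$ the one induced by $L$, one has $\calP^{\mathrm{av}*}\mathbf{X}_{-}\calP^{\mathrm{av}} = \mathbf{L}^*(\calP^{\mathrm{av}*}\calP^{\mathrm{av}})\mathbf{L}$, which by $\|\calP^{\mathrm{av}}\|=\mu$ is $\preceq \mu^2\mathbf{L}^*\mathbf{L} = \mu^2\mathbf{X}_{-}$; the inequality $\mu^2\widetilde{X}\preceq\widetilde{X}_{-}$ then closes the argument in the same way.
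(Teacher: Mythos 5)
Your proposal is correct. The paper states this lemma without a written proof (it only points to the idea of quadratic constraints exploiting the channel-wise Lipschitz property of pooling, in analogy with the activation-function lemma), and your argument supplies exactly the missing content in that spirit: the commutation of a fixed channel-mixing matrix $L=\widetilde{X}_{-}^{1/2}$ with the channel-wise spatial averaging is precisely what justifies allowing a \emph{full} matrix $\widetilde{X}$ for average pooling (in contrast to the diagonal restriction needed for max pooling), and the chaining with $\mu^2\widetilde{X}\preceq\widetilde{X}_{-}$ is routine. As a minor polish, factoring $\widetilde{X}=M^\top M$ instead of $\widetilde{X}_{-}$ gives $\sum_{\bi}\Delta y[\bi]^\top\widetilde{X}\Delta y[\bi]\leq\mu^2\sum_{\bi}\Delta u[\bi]^\top\widetilde{X}\Delta u[\bi]\leq\sum_{\bi}\Delta u[\bi]^\top\widetilde{X}_{-}\Delta u[\bi]$ directly, avoiding the division by $\mu^2$ and the (harmless) side condition $\mu>0$.
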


Normally, we will set $\widetilde{X}_{-} = \mu^2\widetilde{X}$. For maximum pooling layers, we require an additional restriction, namely $\widetilde{X}=\diag(\lambda), \lambda\in\bbR^{c}$,  $\widetilde{X}_{-}=\diag(\lambda_{-}), \lambda_{-}\in\bbR^{c_{-}}$, yielding the next lemma.

\begin{lemma}
    For the maximum pooling layer, the dynamic programming inequality \eqref{eq:metricOpt_DP_relaxed} is satisfied if the matrices $\widetilde{X}$ and $\widetilde{X}_{k-1}$, that are parametrized as $\widetilde{X}=\diag(\lambda)$, $\widetilde{X}_{-}=\diag(\lambda_{-})$, satisfy $0\leq \mu^2 \lambda^j \leq \lambda_{-}^j$ for $j=1,\ldots,c$ with Lipschitz constant $\mu$ of the maximum pooling layer.
\end{lemma}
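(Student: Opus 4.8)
The plan is to mirror the structure of the proof of Lemma~\ref{lem:LMI_slope_restriction} for slope-restricted activations: establish a pointwise quadratic constraint that the max-pooling operation satisfies channel-by-channel, then sum over the signal domain to recover the value-function inequality \eqref{eq:metricOpt_DP_relaxed}. The key structural fact about maximum pooling is that it is Lipschitz with constant $\mu$ in a \emph{channel-wise} sense: for the $j$-th channel, $|y^1[\bi]_j - y^2[\bi]_j|^2 \leq \mu^2 \sum_{0 \leq \bt \leq \br} |u^1[\bs\bi - \bt]_j - u^2[\bs\bi - \bt]_j|^2$. This follows because $|\max_t a_t - \max_t b_t| \leq \max_t |a_t - b_t| \leq (\sum_t |a_t-b_t|^2)^{1/2}$, and here $\mu$ is exactly the constant that makes this bound tight given the batch/stride geometry (if the batches $(u[\bs\bi - \bt])$ tile the domain without overlap, $\mu = 1$; if they overlap as in AlexNet with $\br+1 > \bs$, $\mu$ accounts for the maximal multiplicity with which a single input entry appears across output locations).

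First I would fix two inputs $u^1, u^2 \in \signals{d}{c}$, write $\Delta u[\bi] = u^1[\bi] - u^2[\bi]$, $\Delta y[\bi] = y^1[\bi] - y^2[\bi]$, and record the pointwise channel-wise inequality above. Next, using the diagonal parametrization $\widetilde{X} = \diag(\lambda)$, I would write $\Delta y[\bi]^\top \widetilde{X}\Delta y[\bi] = \sum_{j=1}^c \lambda^j |\Delta y[\bi]_j|^2$ and bound each term via the pointwise inequality and $\lambda^j \geq 0$ (which is guaranteed since $0 \leq \mu^2\lambda^j$), obtaining $\Delta y[\bi]^\top \widetilde{X}\Delta y[\bi] \leq \sum_{j=1}^c \mu^2\lambda^j \sum_{0\leq\bt\leq\br} |\Delta u[\bs\bi-\bt]_j|^2$. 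Then I would sum over $\bi \in \bbN_0^d$; on the right-hand side the double sum rearranges into a sum over all input multi-indices $\bm$, each entry $|\Delta u[\bm]_j|^2$ appearing with some nonnegative integer multiplicity $\kappa_{\bm} \leq \mu^2$ (this is where the definition of $\mu$ is used). Combined with $\mu^2 \lambda^j \leq \lambda_{-}^j$, this gives $\sum_{\bi} \Delta y[\bi]^\top\widetilde{X}\Delta y[\bi] \leq \sum_{\bm}\sum_j \lambda_{-}^j |\Delta u[\bm]_j|^2 = \sum_{\bm}\Delta u[\bm]^\top \widetilde{X}_{-}\Delta u[\bm] = V_{X_{-}}(u^1,u^2)$, and the left side is exactly $V_X(y^1,y^2)$ by the block-diagonally repeated form \eqref{eq:restriction_conv}, establishing \eqref{eq:metricOpt_DP_relaxed}.

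The main obstacle is the bookkeeping in the summation step: making precise the claim that rearranging $\sum_{\bi}\sum_{0\leq\bt\leq\br}|\Delta u[\bs\bi - \bt]_j|^2$ assigns each input index a multiplicity bounded by $\mu^2$, and confirming that this is consistent with the value of $\mu$ cited as "the Lipschitz constant of the maximum pooling layer." In the non-overlapping case ($\br+1 = \bs$) this is immediate with $\mu = 1$; in the overlapping case one must argue that the worst-case Lipschitz constant of the pooling map, squared, is an upper bound on the overlap multiplicity, which is where I would need to be careful — but since the lemma takes $\mu$ as given to be that Lipschitz constant, this reduces to a consistency check rather than a computation. A secondary subtlety is the handling of zero-padding (entries of $\Delta u$ outside the domain are zero, so they contribute nothing and do not affect the inequality), which I would note in passing. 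I would also remark that, as in the average pooling case, one typically sets $\lambda_{-}^j = \mu^2\lambda^j$ to make the constraint tight.
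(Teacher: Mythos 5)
Your overall strategy --- a channel-wise Lipschitz estimate for the max operation, combined with the diagonal multipliers $\widetilde{X}=\diag(\lambda)$, $\widetilde{X}_{-}=\diag(\lambda_{-})$, and a summation over the output grid --- is exactly the argument the paper has in mind (the paper gives no explicit proof of this lemma, deferring to the channel-wise Lipschitz quadratic constraints of the cited reference), and in the standard case $\br+1=\bs$, $\mu=1$ your chain of inequalities closes correctly. However, as written the proof double-counts the constant $\mu$. You introduce $\mu^2$ already in the pointwise inequality $|\Delta y[\bi]_j|^2\le\mu^2\sum_{0\le\bt\le\br}|\Delta u[\bs\bi-\bt]_j|^2$, and then again when you claim that, after summing over $\bi$, each input entry appears with multiplicity $\kappa_{\bm}\le\mu^2$. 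Carrying both factors through gives $\sum_{\bi}\Delta y[\bi]^\top\widetilde{X}\Delta y[\bi]\le\sum_{j}\mu^2\lambda^j\sum_{\bm}\kappa_{\bm}|\Delta u[\bm]_j|^2\le\sum_{j}\mu^4\lambda^j\sum_{\bm}|\Delta u[\bm]_j|^2$, so the hypothesis $\mu^2\lambda^j\le\lambda_{-}^j$ only suffices when $\kappa_{\bm}\le 1$; in the overlapping case $\br+1>\bs$, which the paper explicitly allows (citing AlexNet), your argument would instead require $\mu^4\lambda^j\le\lambda_{-}^j$.

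The fix is to let $\mu$ enter exactly once. The correct pointwise estimate carries constant one: since $|\max_{\bt}a_{\bt}-\max_{\bt}b_{\bt}|\le\max_{\bt}|a_{\bt}-b_{\bt}|$, one has $|\Delta y[\bi]_j|^2\le\sum_{0\le\bt\le\br}|\Delta u[\bs\bi-\bt]_j|^2$. Summing over $\bi$ then yields $\sum_{\bi}|\Delta y[\bi]_j|^2\le\sum_{\bm}\kappa_{\bm}|\Delta u[\bm]_j|^2\le\kappa_{\max}\sum_{\bm}|\Delta u[\bm]_j|^2$, where $\kappa_{\max}$ is the maximal number of pooling windows covering a single input location, and the consistency check you defer does go through: placing a single positive spike at a location covered by $\kappa_{\max}$ windows (all other entries zero) shows that the channel-wise Lipschitz constant of maximum pooling equals $\sqrt{\kappa_{\max}}$, so $\kappa_{\max}=\mu^2$. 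With this single factor, $0\le\mu^2\lambda^j\le\lambda_{-}^j$ gives $V_{X}(y^1,y^2)\le V_{X_{-}}(u^1,u^2)$ as required. Your remarks on zero-padding and on setting $\lambda_{-}^j=\mu^2\lambda^j$ in practice are fine.
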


Again, we will normally require $\mu^2\lambda^j = \lambda_{-}^j$, $j=1,\dots,c$. It is common to choose the kernel size $\br+1$ and the stride $\bs$ of pooling layers to be the same. In that case the Lipschitz constant of a maximum pooling layer is $1$. We denote the restriction of the operators for maximum pooling layers, including the diagonality constraints, by $X \in \calH_{\calP^{\mathrm{max}}}^y$ and $X_{-} \in \calH_{\calP^{\mathrm{max}}}^u$. Furthermore, we denote by $\calG(X_{-},X,\nu) \succeq 0$ the respective constraint for these layers with $\nu = [~]$.

\subsection{Flattening operations}
In our setup, flattening operations have the role of reshaping tensor outputs from $\signals{d_{-}}{c_{-}}$ into vectors. In particular, they rearrange the output of a convolutional layer with $d_{-}>0$ as a vector to enable its use as an input for a fully connected layer. We have mentioned that, theoretically, a flattening operation could also map/project an element from $\signals{d_-}{c_{-}}$ to $\signals{d}{c}$, where $d_{-} > d$. However, we consider only the most relevant case of $d = 0$ in this section.

In this case, $\calR$ maps a patch $( u \lsb \bi \rsb \mid \bN_{1} \leq \bi \leq\bN_{2} )$
to the stacked vector of $u [\bi]$, i.\,e., $
    y = \begin{bmatrix}
        u \lsb \bN_{1} \rsb^\top & \cdots & u\lsb\bN_{2}\rsb^\top
    \end{bmatrix}^\top $.
Thus, we obtain the following lemma.

\begin{lemma}
    Consider a flattening operation $\calR : \signals{d_-}{c_-} \to \bbR^{c}$, with support $[\bN_{1},\bN_{2}]$ and $c = c_-|[\bN_{1},\bN_{2}]|$. The value function $V_{X_-}$ can be denoted as
    \begin{align*}
        V_{X_-}(u^1,u^2)=\!
        \sum_{\bi,\bj \in \bbN_0^{d}} (u^1\lsb \bi \rsb - u^2 \lsb \bi \rsb )^\top X^\prime_- \lsb \bi,\bj\rsb (u^1\lsb \bj \rsb - u^2 \lsb \bj \rsb ),
    \end{align*}
    whereas $V_{X}(y^1,y^2) = (y^1 - y^2 )^\top X (y^1 - y^2 )$. Then the dynamic programming inequality \eqref{eq:metricOpt_DP_relaxed} is satisfied if and only if
    \begin{align}
        \begin{bmatrix}
            X_-^\prime \lsb \bN_{1},\bN_{1} \rsb & \cdots & X^\prime_- \lsb \bN_{1},\bN_{2} \rsb\\
            \vdots & \ddots & \vdots\\
            X_-^\prime \lsb \bN_{2},\bN_{1} \rsb & \cdots & X^\prime_- \lsb \bN_{2},\bN_{2} \rsb
        \end{bmatrix}  
        \succeq X. \label{eq:flatteningDPIneq}
    \end{align}
\end{lemma}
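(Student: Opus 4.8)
The plan is to make both sides of the dynamic programming inequality \eqref{eq:metricOpt_DP_relaxed} explicit in terms of the patch differences and then recognize \eqref{eq:flatteningDPIneq} as the resulting quadratic-form inequality. Fix $u^1,u^2\in\signals{d_-}{c_-}$ and write $\Delta u[\bi]=u^1[\bi]-u^2[\bi]$. By the definition of the flattening operation, the output difference $\Delta y:=\calR(u^1)-\calR(u^2)$ is the stacked vector with blocks $\Delta u[\bN_1],\dots,\Delta u[\bN_2]$, so that $V_X(\calR(u^1),\calR(u^2))=\Delta y^\top X\Delta y$. Denoting by $\widehat{X}_-$ the block matrix on the left-hand side of \eqref{eq:flatteningDPIneq}, a direct expansion yields $\Delta y^\top\widehat{X}_-\Delta y=\sum_{\bN_1\le\bi,\bj\le\bN_2}\Delta u[\bi]^\top X_-^\prime[\bi,\bj]\,\Delta u[\bj]$; that is, $\widehat{X}_-$ is exactly the block of the operator $X_-$ indexed by $[\bN_1,\bN_2]$, and $\Delta y^\top\widehat{X}_-\Delta y$ coincides with the part of $V_{X_-}(u^1,u^2)$ that runs over index pairs inside the patch.

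For the ``only if'' direction I would evaluate \eqref{eq:metricOpt_DP_relaxed} only at inputs supported on the patch, e.g.\ $u^2=0$ and $u^1$ a finitely supported signal whose support lies in $[\bN_1,\bN_2]$; such signals realize every prescribed tuple $(\Delta u[\bi])_{\bN_1\le\bi\le\bN_2}$ and hence every $\Delta y\in\bbR^c$. For these inputs $V_{X_-}(u^1,u^2)=\Delta y^\top\widehat{X}_-\Delta y$, so \eqref{eq:metricOpt_DP_relaxed} reads $\Delta y^\top\widehat{X}_-\Delta y\ge\Delta y^\top X\Delta y$ for all $\Delta y\in\bbR^c$, which is precisely $\widehat{X}_-\succeq X$. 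For the ``if'' direction I would combine $\widehat{X}_-\succeq X$ with the structural restriction that \eqref{eq:finalSDP} imposes on $X_-$: since the flattening layer is preceded by a convolutional or pooling layer, $X_-$ has the block-diagonal form \eqref{eq:restriction_conv} with $\widetilde{X}_-\succ 0$. Then for arbitrary $u^1,u^2$ one has $V_{X_-}(u^1,u^2)=\sum_{\bi\in\bbN_0^{d_-}}\Delta u[\bi]^\top\widetilde{X}_-\Delta u[\bi]\ge\sum_{\bN_1\le\bi\le\bN_2}\Delta u[\bi]^\top\widetilde{X}_-\Delta u[\bi]=\Delta y^\top\widehat{X}_-\Delta y\ge\Delta y^\top X\Delta y=V_X(\calR(u^1),\calR(u^2))$, which is \eqref{eq:metricOpt_DP_relaxed}.

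The step that needs care, and the main obstacle, is the bookkeeping of the summands of $V_{X_-}$ involving indices outside $[\bN_1,\bN_2]$. For necessity it suffices that these summands can be switched off by restricting to patch-supported inputs; for sufficiency one needs them to be nonnegative, which is exactly where positive definiteness of the diagonal blocks $\widetilde{X}_-$ enters. Without assuming $X_-$ block-diagonal the ``if'' direction would fail verbatim --- a coupling between a patch coordinate and an outside coordinate can make $V_{X_-}$ strictly smaller than $\Delta y^\top\widehat{X}_-\Delta y$ --- and one would additionally need the Schur complement of $\widehat{X}_-$ in $X_-$ to be positive semidefinite; in our setting this holds automatically because flattening always follows a layer that enforces \eqref{eq:restriction_conv}, so I would record it as a remark rather than complicate the lemma.
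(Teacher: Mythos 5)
Your proposal is correct. The paper in fact offers no proof of this lemma at all -- it is presented as an immediate consequence of the definition of $\calR$ -- so your argument supplies the omitted verification, and it does so in the natural way: identify $V_X(\calR(u^1),\calR(u^2))$ with $\Delta y^\top X\Delta y$, identify the block matrix $\widehat{X}_-$ on the left of \eqref{eq:flatteningDPIneq} with the patch-restricted part of $V_{X_-}$, obtain necessity by testing \eqref{eq:metricOpt_DP_relaxed} on inputs supported in $[\bN_1,\bN_2]$ (which realize every $\Delta y\in\bbR^c$), and obtain sufficiency by discarding the nonnegative off-patch contributions. Your observation that the ``if'' direction is \emph{not} valid for arbitrary self-adjoint $X_-$, but requires the block-diagonal restriction \eqref{eq:restriction_conv} with $\widetilde{X}_-\succ 0$ inherited from the preceding convolutional/pooling layer, is a genuine point that the paper only acknowledges informally in the paragraph following the lemma (``usually, the value function will be both static and time-invariant due to output restrictions on $X_-$ of the previous layer''); your counterexample mechanism -- a sign-indefinite coupling between a patch index and an outside index -- is the right one. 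The only blemish is the side remark on what would replace \eqref{eq:flatteningDPIneq} for a general $X_-$: the correct condition is that the Schur complement of the \emph{outside-outside} block of $X_-$ (restricted to the patch indices) dominates $X$, rather than positive semidefiniteness of ``the Schur complement of $\widehat{X}_-$ in $X_-$''; since this is tangential to the lemma as used in \eqref{eq:finalSDP}, it does not affect the validity of your proof.
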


The matrix inequality \eqref{eq:flatteningDPIneq} is an instance of $\calG(X_-,X,\nu) \succeq 0$ with $\nu=[~]$ and $\calH_{\calR}^y$ and $\calH_{\calR}^u$ technically impose no additional restrictions on $X_-$ and $X$ of the flattening operation. However, usually, the value function $V_{X}$ will be both static and time-invariant due to output restrictions on $X_-$ of the previous layer, e.g., $X_-\in\calH_{\calC}^{y}$, i.\,e., $X_-^\prime \lsb \bi,\bj\rsb = \widetilde{X}_-$ for $\bi = \bj$ and zero otherwise. 
In addition, we can require equality in \eqref{eq:flatteningDPIneq}, in which case $X = I_{|[\bN_{1},\bN_{2}]|} \otimes\widetilde{X}_-$ is a block-diagonal matrix with $|[\bN_{1},\bN_{2}]|$ copies of $\widetilde{X}_-$ on its diagonal. 

\subsection{The state space model layer}
The state space model layer $\calL=\calS$ is a generalization of the 1-D convolutional layer. The proof of Lemma~\ref{lem:diss_CNN} is independent of the structure of $A$, $B$, $C$, and $D$, which mark the difference between 1-D convolutional and state space model layers. Accordingly, we use \eqref{eq:cert_CNN} as a constraint $\calG(X_{-},X,\nu)\succeq 0$. %\eqref{eq:metricOpt_DP}.

\subsection{Subnetworks}\label{sec:subnetworks}
Up to now, we considered all building blocks of \eqref{eq:NN} as individual entities and require individual constraints \eqref{eq:metricOpt_DP_relaxed} for all these layers. However, for the implementation of \eqref{eq:metricOpt} and computational reasons, it is convenient to combine multiple layers as a subnetwork. We then include a constraint of type \eqref{eq:metricOpt_DP_relaxed} for the subnetwork. A typical concatenation is the combination of linear layers with the succeeding nonlinear activation functions, i.\,e., $\sigma\circ\calC$ for convolutional layers or $\sigma\circ\calF$ for fully connected layers.

\begin{lemma}\label{lem:conv+act}
    Consider the concatenation of a convolutional layer and an activation function layer, that is slope-restricted on $[0,1]$, $\calL=\sigma\circ\calC$.
    For some $X \in \calH_{\calC}^y$ and $X_{-} \in \calH_{\calC}^u$, the concatenation $\sigma\circ\calC$ satisfies \eqref{eq:metricOpt_DP_relaxed} if there exist symmetric matrices $P_m \in \bbS_{++}^{n_{m}}$, $\bP = \blkdiag (P_1,\ldots , P_d)$ and a diagonal matrix $\Lambda\in\bbD_{++}^{c}$ such that
    \begin{equation}\label{eq:cert_conv+act}
        \begin{bmatrix}
            \bP-\bA^\top \bP\bA  & -\bA^\top \bP\bB & -\bC^\top\Lambda\\
            -\bB^\top \bP\bA &  \widetilde{X}_{-}-\bB^\top \bP\bB  & -\bD^\top\Lambda\\
            -\Lambda \bC & -\Lambda \bD & 2\Lambda-\widetilde{X}
        \end{bmatrix}\succeq 0.
    \end{equation}
\end{lemma}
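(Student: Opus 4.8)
\emph{Proof idea.} The strategy is to merge the Lyapunov/telescoping argument behind Lemma~\ref{lem:diss_CNN} with the layer-wise slope-restriction constraint of Lemma~\ref{lem:slope_restriction}, the diagonal matrix $\Lambda$ in \eqref{eq:cert_conv+act} being exactly the multiplier that couples the two. First I would fix two inputs $u^1,u^2\in\signals{d}{c_{-}}$ of the subnetwork and write $\calC$ in its Roesser form \eqref{eq:RoesserSys}, producing state sequences $(\bx^j[\bi])$ and preactivations $w^j[\bi]=g+\bC\bx^j[\bi]+\bD u^j[\bi]$, $j=1,2$; the output of the subnetwork is $y^j[\bi]=\sigma(w^j[\bi])$. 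Writing $\Delta u,\Delta\bx,\Delta w,\Delta y$ for the pairwise differences of these quantities, the affine terms $g$ cancel, so for every $\bi\in\bbN_0^d$ one has the linear relations $\Delta w[\bi]=\bC\Delta\bx[\bi]+\bD\Delta u[\bi]$ and $\Delta\bx^{+}[\bi]:=(\Delta x_1[\bi+e_1]^\top,\ldots,\Delta x_d[\bi+e_d]^\top)^\top=\bA\Delta\bx[\bi]+\bB\Delta u[\bi]$, while $\Delta y[\bi]=\sigma(w^1[\bi])-\sigma(w^2[\bi])$ relates nonlinearly to $\Delta w[\bi]$.

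The algebraic heart of the proof is the observation that the matrix in \eqref{eq:cert_conv+act} is exactly
\[
\begin{bmatrix}\bP & 0 & 0\\ 0 & \widetilde{X}_{-} & 0\\ 0 & 0 & -\widetilde{X}\end{bmatrix}-\begin{bmatrix}\bA & \bB & 0\end{bmatrix}^{\!\top}\bP\begin{bmatrix}\bA & \bB & 0\end{bmatrix}-\begin{bmatrix}\bC & \bD & 0\\ 0 & 0 & I\end{bmatrix}^{\!\top}\begin{bmatrix}0 & \Lambda\\ \Lambda & -2\Lambda\end{bmatrix}\begin{bmatrix}\bC & \bD & 0\\ 0 & 0 & I\end{bmatrix}\succeq 0 .
\]
Left- and right-multiplying this inequality by $\zeta[\bi]:=(\Delta\bx[\bi]^\top,\Delta u[\bi]^\top,\Delta y[\bi]^\top)^\top$ and its transpose, and substituting the relations above, yields for every $\bi\in\bbN_0^d$
\[
\Delta\bx[\bi]^\top\bP\Delta\bx[\bi]-\Delta\bx^{+}[\bi]^\top\bP\Delta\bx^{+}[\bi]+\Delta u[\bi]^\top\widetilde{X}_{-}\Delta u[\bi]-\Delta y[\bi]^\top\widetilde{X}\Delta y[\bi]\geq\begin{bmatrix}\Delta w[\bi]\\ \Delta y[\bi]\end{bmatrix}^{\!\top}\begin{bmatrix}0 & \Lambda\\ \Lambda & -2\Lambda\end{bmatrix}\begin{bmatrix}\Delta w[\bi]\\ \Delta y[\bi]\end{bmatrix}.
\]
Since $\Delta w[\bi]=w^1[\bi]-w^2[\bi]$ and $\Delta y[\bi]=\sigma(w^1[\bi])-\sigma(w^2[\bi])$, the right-hand side is nonnegative by Lemma~\ref{lem:slope_restriction} applied at $x=w^1[\bi]$, $y=w^2[\bi]$; hence the left-hand side is nonnegative pointwise.

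Next I would sum this pointwise inequality over $\bi\in\bbN_0^d$. Because $\bP=\blkdiag(P_1,\ldots,P_d)$, the state contribution equals $\sum_{m=1}^{d}\sum_{\bi}\big(\Delta x_m[\bi]^\top P_m\Delta x_m[\bi]-\Delta x_m[\bi+e_m]^\top P_m\Delta x_m[\bi+e_m]\big)$; summing the coordinate $i_m$ first telescopes each summand to boundary contributions that vanish, the zero initial condition of the realization ($x_m[\bi]=0$ whenever $i_m=0$, cf.\ Lemma~\ref{lem:min_real_2D}) removing the lower boundary and the FIR property together with the finite support of the image signals forcing $\Delta x_m[\bi]\to 0$ and removing the upper boundary. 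What survives is $\sum_{\bi}\Delta u[\bi]^\top\widetilde{X}_{-}\Delta u[\bi]\geq\sum_{\bi}\Delta y[\bi]^\top\widetilde{X}\Delta y[\bi]$, which, under $X_{-}\in\calH_{\calC}^{u}$ and $X\in\calH_{\calC}^{y}$, is precisely $V_{X_{-}}(u^1,u^2)\geq V_{X}(y^1,y^2)=V_{X}(\calL(u^1),\calL(u^2))$, i.e.\ \eqref{eq:metricOpt_DP_relaxed}.

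The one genuinely delicate point is this last telescoping/convergence step for $N$-D Roesser systems, including the argument that the full-padding case over-approximates the same- and no-padding cases; since it coincides with the corresponding step in the proof of Lemma~\ref{lem:diss_CNN} in Appendix~\ref{app:proof_lem_diss_CNN}, I would simply invoke it there rather than repeat it. I would also remark that for $d=0$ (the $\sigma\circ\calF$ subnetwork) the matrices $\bA,\bB,\bC,\bP$ are empty and $\bD=W$, so \eqref{eq:cert_conv+act} reduces to a Schur-type inequality and the argument collapses to a single application of Lemma~\ref{lem:slope_restriction}.
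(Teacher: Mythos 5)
Your proposal is correct and follows essentially the same route as the paper's proof in Appendix~\ref{sec:conv+act}: realize $\calC$ as a Roesser system, left/right-multiply the LMI by the stacked difference vector $(\Delta \bx[\bi],\Delta u[\bi],\Delta y[\bi])$, telescope the $\bP$-terms over $\bbN_0^d$ using the zero boundary conditions and FIR property (as in the proof of Lemma~\ref{lem:diss_CNN}), and discharge the cross terms with the slope-restriction constraint of Lemma~\ref{lem:slope_restriction}. Your explicit additive decomposition of \eqref{eq:cert_conv+act} into a storage part and a multiplier part, and applying the quadratic constraint pointwise before summing rather than after, are only cosmetic differences.
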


\begin{proof}
    The proof follows along the lines of the proof of Lemma \ref{lem:diss_CNN}, additionally using typical arguments from robust control \cite{scherer2000linear}. It can be found in Appendix \ref{sec:conv+act}.
\end{proof}

The condition \eqref{eq:cert_conv+act} is treated as an instance of $\calG(X,X_{-},\nu)$ with $\nu=(\bP,\Lambda)$. For an additional pooling layer, i.\,e. $\calP\circ\sigma\circ\calC$, we can extend Lemma \ref{lem:conv+act} easily by replacing $\widetilde{X}_-$ with $\frac{1}{\mu^2}\widetilde{X}_-$ and considering the output restriction $X\in\calH_{\calP_\mathrm{max}}^y$ in case a maximum pooling layer is added.

\begin{lemma}\label{lem:FC+act}
    Consider the concatenation of a fully connected layer and an activation function layer, that is slope-restricted on $[0,1]$, $\calL=\sigma\circ\calF$. For some $X \in \calH_{\calF}^y$ and $X_{-} \in \calH_{\calF}^u$, the concatenation $\sigma\circ\calF$ satisfies \eqref{eq:metricOpt_DP_relaxed} if there exists a diagonal matrix $\Lambda\in\bbD_{++}^{c}$ such that
    \begin{equation}\label{eq:cert_FC+act}
        \begin{bmatrix}
            X_{-}  & -W^\top\Lambda\\
            -\Lambda W & 2\Lambda-X
        \end{bmatrix}\succeq 0.
    \end{equation}
\end{lemma}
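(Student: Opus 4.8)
The plan is to mirror the proof of Lemma~\ref{lem:LMI_slope_restriction}, now folding the affine map $\calF$ into the quadratic constraint. In fact, since a fully connected layer is the $d=0$ special case of a convolutional layer (Remark~\ref{rem:correspondence}), one can also simply observe that \eqref{eq:cert_FC+act} is precisely \eqref{eq:cert_conv+act} with $\bP,\bA,\bB,\bC$ empty and $\bD$ replaced by $W$, so the statement is already contained in Lemma~\ref{lem:conv+act}; for completeness I would nevertheless give the short self-contained argument below. It reduces to one application of the slope-restriction quadratic constraint (Lemma~\ref{lem:slope_restriction}) plus a single congruence transformation of the LMI \eqref{eq:cert_FC+act}.

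First I would fix arbitrary inputs $u^1,u^2\in\bbR^{c_-}$ and set $\Delta u = u^1-u^2$, $w^i = \calF(u^i) = b + Wu^i$, and $\Delta y = \sigma(w^1)-\sigma(w^2)$. Since the bias cancels, $w^1 - w^2 = W\Delta u$. Applying Lemma~\ref{lem:slope_restriction} to $\sigma$ at the points $x = w^1$, $y = w^2$ and substituting $w^1 - w^2 = W\Delta u$ gives
\[
    \begin{bmatrix}\Delta u \\ \Delta y\end{bmatrix}^\top
    \begin{bmatrix}0 & W^\top\Lambda \\ \Lambda W & -2\Lambda\end{bmatrix}
    \begin{bmatrix}\Delta u \\ \Delta y\end{bmatrix} \;\geq\; 0 .
\]
Next I would left- and right-multiply the LMI \eqref{eq:cert_FC+act} by $\begin{bmatrix}\Delta u^\top & \Delta y^\top\end{bmatrix}$ and its transpose; expanding and rearranging the resulting scalar inequality yields
\[
    \Delta u^\top X_- \Delta u - \Delta y^\top X \Delta y \;\geq\;
    \begin{bmatrix}\Delta u \\ \Delta y\end{bmatrix}^\top
    \begin{bmatrix}0 & W^\top\Lambda \\ \Lambda W & -2\Lambda\end{bmatrix}
    \begin{bmatrix}\Delta u \\ \Delta y\end{bmatrix} \;\geq\; 0 ,
\]
where the second inequality is the previous display. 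Since $\Delta u^\top X_- \Delta u = V_{X_-}(u^1,u^2)$ and $\Delta y^\top X\Delta y = V_X(\sigma(\calF(u^1)),\sigma(\calF(u^2)))$, and $u^1,u^2$ were arbitrary, this is exactly \eqref{eq:metricOpt_DP_relaxed} for $\calL = \sigma\circ\calF$.

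There is essentially no hard step: the whole argument is a congruence transformation combined with insertion of the slope quadratic constraint, and the hypothesis $\Lambda\in\bbD_{++}^c$ is precisely what Lemma~\ref{lem:slope_restriction} demands. The only points requiring a little care are the sign bookkeeping when expanding the quadratic form of \eqref{eq:cert_FC+act} and the (trivial) observation that the bias drops out of $w^1-w^2$; both are routine, so I expect no genuine obstacle.
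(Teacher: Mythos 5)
Your proposal is correct and matches the paper's proof, which likewise disposes of the lemma by observing that \eqref{eq:cert_FC+act} is the $d=0$ special case of \eqref{eq:cert_conv+act} (cmp.\ Remark~\ref{rem:correspondence}) and deferring to the proof of Lemma~\ref{lem:conv+act}. Your additional self-contained argument is just that proof specialized to $d=0$ (empty state, bias cancelling, one application of Lemma~\ref{lem:slope_restriction} and a congruence transformation), and its sign bookkeeping checks out.
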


\begin{proof}
    We can view condition \eqref{eq:cert_FC+act} as a special case of \eqref{eq:cert_conv+act}, cmp. Remark~\ref{rem:correspondence}, and therefore, we refer to the proof of Lemma~\ref{lem:conv+act} in Appendix~\ref{sec:conv+act}.
\end{proof}

Note that we can also combine more layers, yielding larger and sparser LMIs but renouncing the decision variables $X$ at the transition between the layers. Extensions of Lemmas~\ref{lem:conv+act} and \ref{lem:FC+act} of this kind can be found in Appendix \ref{app:subnetworks}. If we combine all layers of a fully connected NN, we obtain the LMI originally proposed in \cite{fazlyab2019efficient}.

\begin{remark}
Throughout this subsection, we consider slope-restricted activations. However, all LMIs can also be formulated for GroupSort activations based on Lemma~\ref{lem:GroupSort} \cite{pauli2024novel}.
\end{remark}

\begin{remark}
    Beyond its use in Lipschitz analysis, SDP-based methods have been utilized for synthesis of Lipschitz-bounded NNs by parameterizing NN architectures such that they satisfy the underlying LMI conditions of an SDP by design \cite{revay2020lipschitz,revay2021recurrent,wang2023direct,pauli2023lipschitz}. In this sense, the derived LMI conditions of this section build a basis for such a construction \cite{pauli2024lipkernel}.
\end{remark}

\subsection{Residual layers and skip connections}\label{sec:ResLayer}
In deep learning, NN structures that include skip connections, called residual NNs or ResNets, have proven to avoid vanishing and exploding gradients \cite{he2016deep}. We define such residual layers as follows
\begin{equation}\label{eq:ResNet_layer_general}
    y=u+\calM(u),
\end{equation}
where $\calM(u)$ is a feedforward NN \eqref{eq:NN} of arbitrary length and $u \in \calD_{-}$ and $y \in \calD$. We in addition require $\calD = \calD_{-}$ as well as $\calM : \calD_{-} \to \calD_{-}$. For example, a ResNet layer that skips a fully connected network with one hidden layer reads
\begin{equation}\label{eq:ResNet_layer}
   y=u+W_2\sigma(W_1 u+b_1)+b_2.
\end{equation}
with $W_1\in\bbR^{n_{v}\times c_{-}}$, $W_2\in\bbR^{c\times n_{v}}$, $b_1\in\bbR^{n_v}$, $b_2\in\bbR^{c}$, $n_{v}$ being the dimension of $v\coloneqq\sigma(W_1 u+b_1)$.

In the following lemma, we describe how the simple skip connection \eqref{eq:ResNet_layer} leads to an LMI relaxation for \eqref{eq:metricOpt_DP_relaxed}. %More general skip connections can be treated with the same arguments; see Appendix~\ref{app:ResNets}.
\begin{lemma}\label{lem:ResNet_1_layer}
    Consider a residual layer \eqref{eq:ResNet_layer} with activation functions that are slope-restricted on $[0,1]$. For some $X \in \calH_{\calF}^y$ and $X_{-} \in \calH_{\calF}^u$, the ResNet layer \eqref{eq:ResNet_layer} satisfies \eqref{eq:metricOpt_DP_relaxed} if there exist $\Lambda\in\bbD_{++}^{n_{v}}$ such that
    \begin{equation}\label{eq:cert_ResNet}
        \begin{bmatrix}
            X_{-}-X       & -W_1^\top\Lambda-XW_2\\
            -\Lambda W_1-W_2^\top X & 2\Lambda-W_2^\top X W_2
        \end{bmatrix}\succeq 0.
    \end{equation}
\end{lemma}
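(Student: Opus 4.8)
The plan is to show that the LMI \eqref{eq:cert_ResNet} certifies the dissipation-type inequality \eqref{eq:metricOpt_DP_relaxed} for the residual layer \eqref{eq:ResNet_layer} by a standard difference-of-trajectories argument combined with the slope-restriction quadratic constraint of Lemma~\ref{lem:slope_restriction}. First I would fix two inputs $u^1,u^2 \in \calD_{-} = \bbR^{c_{-}}$, write $\Delta u = u^1 - u^2$, let $v^i = \sigma(W_1 u^i + b_1)$ with $\Delta v = v^1 - v^2$, and note that the outputs satisfy $\Delta y = y^1 - y^2 = \Delta u + W_2 \Delta v$ (the biases $b_1,b_2$ cancel). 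The goal \eqref{eq:metricOpt_DP_relaxed} then reads $\Delta u^\top X_{-} \Delta u \ge \Delta y^\top X \Delta y = (\Delta u + W_2 \Delta v)^\top X (\Delta u + W_2\Delta v)$.

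Next I would invoke Lemma~\ref{lem:slope_restriction} applied to the pair of preactivations $x = W_1 u^1 + b_1$, $y = W_1 u^2 + b_1$, so that $x - y = W_1 \Delta u$ and $\sigma(x) - \sigma(y) = \Delta v$. For any $\Lambda \in \bbD_{++}^{n_v}$ this yields
\begin{equation*}
    \begin{bmatrix} W_1 \Delta u \\ \Delta v \end{bmatrix}^\top
    \begin{bmatrix} 0 & \Lambda \\ \Lambda & -2\Lambda \end{bmatrix}
    \begin{bmatrix} W_1 \Delta u \\ \Delta v \end{bmatrix} \ge 0,
\end{equation*}
i.e. $2\Delta v^\top \Lambda W_1 \Delta u - 2\Delta v^\top \Lambda \Delta v \ge 0$. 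Then I would form the vector $\zeta = \begin{bmatrix} \Delta u^\top & \Delta v^\top \end{bmatrix}^\top$ and verify by direct expansion that
\begin{equation*}
    \Delta u^\top X_{-}\Delta u - \Delta y^\top X \Delta y
    + \Bigl(2\Delta v^\top \Lambda W_1 \Delta u - 2\Delta v^\top \Lambda \Delta v\Bigr)
    = \zeta^\top \calG \, \zeta,
\end{equation*}
where $\calG$ is precisely the matrix on the left-hand side of \eqref{eq:cert_ResNet}; here one substitutes $\Delta y = \Delta u + W_2\Delta v$, expands the quadratic form $\Delta y^\top X \Delta y = \Delta u^\top X \Delta u + 2\Delta u^\top X W_2 \Delta v + \Delta v^\top W_2^\top X W_2 \Delta v$, and collects the blocks to read off $X_{-} - X$ in the $(1,1)$ slot, $-W_1^\top\Lambda - X W_2$ in the $(1,2)$ slot (symmetrically in $(2,1)$), and $2\Lambda - W_2^\top X W_2$ in the $(2,2)$ slot. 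If \eqref{eq:cert_ResNet} holds, then $\zeta^\top \calG \zeta \ge 0$, and combined with the slope-restriction inequality being nonnegative we obtain $\Delta u^\top X_{-} \Delta u - \Delta y^\top X \Delta y \ge \zeta^\top \calG \zeta - (\text{nonneg. term}) $; more precisely, rearranging, $\Delta u^\top X_{-}\Delta u - \Delta y^\top X \Delta y = \zeta^\top \calG \zeta - (2\Delta v^\top\Lambda W_1\Delta u - 2\Delta v^\top\Lambda\Delta v) \ge 0 - 0$ is not quite the right bracketing — rather, since both $\zeta^\top\calG\zeta \ge 0$ and the slope term is $\ge 0$, and their difference equals $\Delta u^\top X_{-}\Delta u - \Delta y^\top X\Delta y$ only up to sign bookkeeping, I would instead write the identity as $\Delta u^\top X_{-}\Delta u - \Delta y^\top X\Delta y = \zeta^\top\calG\zeta + (\text{slope QC term})$, so that nonnegativity of both summands gives \eqref{eq:metricOpt_DP_relaxed} directly. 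Since $V_{X_{-}}(u^1,u^2) = \Delta u^\top X_{-}\Delta u$ and $V_X(y^1,y^2) = \Delta y^\top X \Delta y$ by the matrix representation of $X, X_{-}$ for fully connected domains, this is exactly \eqref{eq:metricOpt_DP_relaxed}.

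The only real bookkeeping obstacle is getting the sign and placement of the cross terms in the $(1,2)$ block right: the term $-X W_2$ comes from the $\Delta y^\top X \Delta y$ expansion while $-W_1^\top\Lambda$ comes from the slope QC, and one must be careful that the S-procedure combination is an \emph{addition} of the (nonnegative) slope term to the target expression, not a subtraction, so that \eqref{eq:cert_ResNet} is a \emph{sufficient} condition. I would double-check the factor of $2$ in $2\Lambda$ (inherited from the $-2\Lambda$ in \eqref{QC:slope_restriction}) and confirm there is no factor-of-$\tfrac12$ discrepancy. Everything else is routine matrix algebra, and the argument is the fully-connected (signal dimension $d=0$) analogue of the combined convolution-plus-activation computation already carried out for Lemma~\ref{lem:conv+act}, specialized to the residual wiring $\Delta y = \Delta u + W_2\Delta v$ rather than $\Delta y = W_2 \Delta v$.
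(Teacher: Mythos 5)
Your proposal is correct and follows essentially the same route as the paper: left/right multiply \eqref{eq:cert_ResNet} by $\begin{bmatrix}\Delta u^\top & \Delta v^\top\end{bmatrix}$ and its transpose, use $\Delta y = \Delta u + W_2\Delta v$, and absorb the slope-restriction quadratic constraint of Lemma~\ref{lem:slope_restriction} applied to the preactivations $W_1 u^i + b_1$. The first identity you wrote has the slope term on the wrong side, but your self-correction to $\Delta u^\top X_{-}\Delta u - \Delta y^\top X\Delta y = \zeta^\top\calG\zeta + \bigl(2\Delta v^\top\Lambda W_1\Delta u - 2\Delta v^\top\Lambda\Delta v\bigr)$ is exactly the right bookkeeping, and nonnegativity of both summands finishes the argument as in the paper.
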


\begin{proof}
    A proof can be found in Appendix~\ref{app:ResNet_cert}. It follows the same arguments as the proof of \cite[Theorem 4]{araujoICLR2023}.
\end{proof}

\begin{remark}
In a similar fashion, we can pose LMIs for ResNet layers which skip more than two layers or include convolutional layers. Due to space limitations, we do not discuss these cases.
\end{remark}

\section{Analysis of the conservatism}\label{sec:conservatism}

With the exact dynamic programming recursion \eqref{eq:DP_recursion} it is (theoretically) possible to compute the exact Lipschitz constant of an NN using \eqref{eq:metricOpt}. For reasons of computational tractability, however, we propose the relaxation \eqref{eq:finalSDP}. For the derivation of this SDP, several relaxation steps were made resulting in the following sources of conservatism.
\begin{enumerate}%[label=(\roman*)]
    \item[(i)] Quadratic constraints that over-approximate nonlinearities.
    \item[(ii)] Layer specific restrictions $X_{k-1} \in \calH_{\calL_k}^u$ and $X_{k} \in \calH_{\calL_k}^y$.
    \item[(iii)] Cut-off errors caused by handling convolutional layers as mappings on infinite-dimensional sequence spaces, whereas in reality only finitely supported image signals are processed.
    \item[(iv)] Quadratic value functions \eqref{eq:quadratic_value_function}.
\end{enumerate}
Number (i) enables the incorporation of nonlinear activation functions and pooling layers into an SDP-based analysis \cite{fazlyab2020safety} and therefore this relaxation is essential for the tractability of the problem. Moreover, (iii) can be viewed as a special case of (ii), since considering general operators $X_k$ instead of space-shift invariant operators would resolve this issue at the price of significantly larger LMIs. This shows that there is a trade-off between scalability and accuracy.

The quadratic value functions (iv) allow us to leverage the feedforward interconnection structure of the NN, yielding layer-wise LMIs and resulting in computational advantages and superior scalability compared to \cite{fazlyab2019efficient,gramlich2023convolutional}. Next, we justify why layer-wise splitting is \emph{not} more restrictive than the use of one large and sparse LMI constraint as done in \cite{fazlyab2019efficient,gramlich2023convolutional}. To this end, we analyze a fully connected NN of the form
\begin{align*}
    \mathrm{FNN}_\theta = \calF_l \circ \sigma \circ \cdots \circ \sigma \circ \calF_1,
\end{align*}
also considered in LipSDP \cite{fazlyab2019efficient}. As suggested in \cite{pauli2023lipschitza}, we can use the semidefinite constraint 
\begin{align}\label{eq:diss_coupling}
{\small
	\begin{bmatrix}
		Q_l - I & S_l\\
		S_l^\top & R_l + Q_{l-1} & S_{l-1}\\
		& S_{l-1}^\top & R_{l-1} + Q_{l-2} & \ddots\\
		  & & \ddots & \ddots & S_1\\
		  & & & S_1^\top & R_1 + \gamma^2 I
	\end{bmatrix} \preceq 0,}
\end{align}
where
\begin{align}
    \begin{bmatrix} \label{eq:dissipativityCond}
        -2\Lambda_k & \Lambda_k W_k\\
        W_k^\top \Lambda_k & 0
    \end{bmatrix} \preceq
    \begin{bmatrix}
        Q_k & S_k\\
        S_k^\top & R_k
    \end{bmatrix}, \quad k = 1,\dots,l
\end{align}
is used instead of \eqref{eq:cert_FC+act} as layer-wise LMI constraints for fully connected layers for Lipschitz constant estimation. To recover LipSDP \cite{fazlyab2019efficient} simply replace the conic inequality in \eqref{eq:dissipativityCond} with an equality.

The following theorem implies that it poses no restriction to parameterize the dissipativity blocks as $\begin{bsmallmatrix}
        Q_k & S_k\\
        S_k^\top & R_k
    \end{bsmallmatrix} = \begin{bsmallmatrix}
        X_k & 0\\
        0 & -X_{k-1}
    \end{bsmallmatrix}$, where $X_l = I$ and $X_0 = \gamma^2 I$, as done in this work.
\begin{theorem}
    \label{thm:noMoreConservativeThanQSR}
    Assume that the matrix inequality \eqref{eq:diss_coupling} is satisfied. Then there exists a sequence of matrices $X_0,\ldots,X_l$ such that $X_0 = \gamma^2 I$, $X_l = I$ and
    \begin{align}\label{eq:LMI_Thm1}
        \begin{bmatrix}
        Q_k & S_k\\
        S_k^\top & R_k
    \end{bmatrix} \preceq \begin{bmatrix}
        X_k & 0\\
        0 & -X_{k-1}
    \end{bmatrix},\quad k=1,\dots l.
    \end{align}
\end{theorem}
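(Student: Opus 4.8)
The plan is to read the matrix inequality \eqref{eq:diss_coupling} as a block-tridiagonal negative semidefinite matrix $\mathcal{M}\preceq 0$ and to \emph{telescope} it into the layer-wise inequalities \eqref{eq:LMI_Thm1} by a Schur-complement recursion along the block diagonal --- essentially a dynamic-programming / Riccati sweep, in the spirit of the rest of the paper. Concretely I would prove, by induction on the number of layers $l$, the slightly more general statement: if the $(l+1)$-block matrix obtained from \eqref{eq:diss_coupling} by replacing the bottom-right block $R_1+\gamma^2 I$ with $R_1+P$ for an arbitrary symmetric $P$ is negative semidefinite, then there exist symmetric $X_1,\dots,X_{l-1}$ such that, with $X_0:=P$ and $X_l:=I$, \eqref{eq:LMI_Thm1} holds for $k=1,\dots,l$. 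Theorem~\ref{thm:noMoreConservativeThanQSR} is then the case $P=\gamma^2 I$.

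The base case $l=1$ requires no construction: \eqref{eq:LMI_Thm1} for $k=1$ with $X_1=I$ and $X_0=P$ is literally the hypothesis $\begin{bsmallmatrix} Q_1-I & S_1\\ S_1^\top & R_1+P\end{bsmallmatrix}\preceq 0$. For the induction step I would eliminate the bottom-right diagonal block. Since every principal submatrix of a negative semidefinite matrix is negative semidefinite, $R_1+P\preceq 0$; moreover, from $\begin{bsmallmatrix} R_2+Q_1 & S_1\\ S_1^\top & R_1+P\end{bsmallmatrix}\preceq 0$ and the elementary fact that $\begin{bsmallmatrix} A & B\\ B^\top & C\end{bsmallmatrix}\preceq 0$ forces $\operatorname{range}(B^\top)\subseteq\operatorname{range}(C)$, one obtains the range condition $\operatorname{range}(S_1^\top)\subseteq\operatorname{range}(R_1+P)$. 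Hence the generalized (Moore--Penrose) Schur complement $\mathcal{M}/(R_1+P)$ is well defined, is again negative semidefinite, and --- because the bottom block of a tridiagonal matrix couples only to its neighbour --- is again block tridiagonal with the same top-left block $Q_l-I$ and with $l-1$ layers, the only change being that the second-from-bottom diagonal block $R_2+Q_1$ is replaced by $R_2+Q_1-S_1(R_1+P)^{\dagger}S_1^\top$. Setting
\[
   X_1 := Q_1 - S_1(R_1+P)^{\dagger}S_1^\top ,
\]
this reduced matrix is precisely the $(l-1)$-layer matrix to which the induction hypothesis applies, with boundary parameter $P'=X_1$; it therefore supplies symmetric $X_2,\dots,X_{l-1}$ (and $X_l=I$) verifying \eqref{eq:LMI_Thm1} for $k=2,\dots,l$. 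It remains to check \eqref{eq:LMI_Thm1} for $k=1$, i.e.\ $\begin{bsmallmatrix} X_1-Q_1 & -S_1\\ -S_1^\top & -(R_1+P)\end{bsmallmatrix}\succeq 0$: its bottom block $-(R_1+P)$ is positive semidefinite, the range condition holds, and --- using $(-Z)^{\dagger}=-Z^{\dagger}$ --- its Schur complement with respect to that block equals $X_1-Q_1+S_1(R_1+P)^{\dagger}S_1^\top=0$ by the choice of $X_1$; the positive-semidefinite Schur-complement criterion then delivers the claim and closes the induction.

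The one delicate point --- and the main obstacle --- is that the diagonal blocks being eliminated, starting with $R_1+\gamma^2 I$, need not be invertible; they are only negative semidefinite. The recursion must therefore use the generalized Schur complement and carry the range conditions along. Both the range conditions and the negative semidefiniteness of the successive Schur complements are precisely what one reads off from $\mathcal{M}\preceq 0$ via its principal submatrices, so no strictness or regularity assumption is required. (One could equally peel off the \emph{top} layer, obtaining the dual recursion $X_{k-1}=-R_k-S_k^\top(X_k-Q_k)^{\dagger}S_k$ started from $X_l=I$, or perturb $\mathcal{M}$ to the strictly definite case and pass to the limit; the argument above is the most self-contained.)
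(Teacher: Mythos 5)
Your proposal is correct and follows essentially the same route as the paper: an induction that peels one boundary block off the block-tridiagonal matrix via a Schur complement, generalizing the boundary parameter so that the reduced matrix again has the hypothesized form. The only differences are technical rather than conceptual --- you eliminate from the bottom (input) end using the Moore--Penrose generalized Schur complement with explicit range conditions, whereas the paper eliminates from the top (output) end and handles the possibly singular diagonal block by an eigendecomposition that isolates its strictly negative part before applying an ordinary Schur complement.
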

\begin{proof}
    See Appendix \ref{app:proof_proposition}.
\end{proof}

It follows from Theorem \ref{thm:noMoreConservativeThanQSR} that the LMIs \eqref{eq:diss_coupling}, \eqref{eq:dissipativityCond} are equivalent to \eqref{eq:cert_FC+act}, $k=1,\dots,l$, $X_0=\gamma^2I$, $\Lambda_l=I$, $X_k=I$, i.e, that layer-wise splitting does not introduce conservatism. Consequently, 
%Theorem \ref{thm:noMoreConservativeThanQSR} shows that 
the optimal value of $\gamma$ found solving 
\begin{equation}\label{eq:SDP1}
    \min_{\gamma^2,\Lambda,Q,S,R}~\gamma^2\quad\text{s.\,t.}\quad \eqref{eq:diss_coupling},\eqref{eq:dissipativityCond},
\end{equation}
where $\Lambda=(\Lambda_1,\dots,\Lambda_l)$, $Q=(Q_1,\dots,Q_l)$, $S=(S_1,\dots,S_l)$ and $R=(R_1,\dots,R_l)$, and the optimal value of $\gamma$ from 
\begin{equation}\label{eq:SDP2}
    \min_{\gamma^2,\Lambda,X}~\gamma^2~\text{s.\,t.}~ \eqref{eq:cert_FC+act},~k=1,\dots,l,~X_0=\gamma^2I,X_l=I,\Lambda_l=I
\end{equation}
where $\Lambda=(\Lambda_1,\dots,\Lambda_l)$, $X=(X_1,\dots,X_{l-1})$ using the parameterization of $Q_k$, $S_k$, $R_k$ via $X_k$, are equivalent. We note that \eqref{eq:SDP2} is an instance of \eqref{eq:finalSDP} for a fully connected NN. Another consequence of the result in Theorem~\ref{thm:noMoreConservativeThanQSR} is that our approach of choosing the sequence of matrices $(X_k)$ that satisfy \eqref{eq:LMI_Thm1} is not more conservative than using the large and sparse constraint in LipSDP. The relation of \eqref{eq:finalSDP} to \cite{gramlich2023convolutional}, that includes convolutional layers, can be shown in a similar fashion.

\begin{remark}\label{rem:recasting}
    Convolutional layers can be recast as fully connected layers \cite{Goodfellow-et-al-2016} and the experiments in \cite{gramlich2023convolutional} show that this recasting can reduce conservatism in comparison to \eqref{eq:finalSDP}. While (i) also causes conservatism in LipSDP \cite{fazlyab2019efficient}, through the recasting of convolutional layers as fully connected layers, the relaxations (ii) and (iii) can be avoided in LipSDP. However, as it also becomes apparent in \cite{gramlich2023convolutional}, this relaxation has a high computational cost.
\end{remark}

\begin{remark}
    For the special case of a fully connected NN our proposed layer-wise LMI constraints \eqref{eq:cert_FC+act} correspond to the decomposition of the LMI in LipSDP by chordal sparsity \cite{xue2022chordal}, also yielding a set of LMI constraints that are equivalent to LipSDP \cite{fazlyab2019efficient}.
\end{remark}
\begin{remark}
    The result of Theorem \ref{thm:noMoreConservativeThanQSR} can be interpreted as the statement that for a series interconnection of QSR-dissipative mappings, it suffices to consider supply rates $ s(u_k^1 - u_k^2,y_k^1 - y_k^2)$ of the form
    \begin{align*}
        \left\langle
        \begin{bmatrix}
            u_k^1 - u_k^2\\
            y_k^1 - y_k^2
        \end{bmatrix}
        ,
        \begin{bmatrix}
            X_k & 0\\
            0 & -X_{k-1}
        \end{bmatrix}
        \begin{bmatrix}
            u_k^1 - u_k^2\\
            y_k^1 - y_k^2
        \end{bmatrix}
        \right\rangle_2.
    \end{align*}
\end{remark}

To summarize, the SDP \eqref{eq:finalSDP} exploits the structure of NNs in two ways. Firstly, it exploits the concatenation structure of NNs to generate $l$ small LMI constraints instead of one large and sparse constraint, and, secondly, it utilizes the fact that convolutional layers and state space model layers are dynamical systems. This gives \eqref{eq:finalSDP} one advantage over \cite{gramlich2023convolutional}, where only the dynamical system nature of convolutional layers is exploited and two advantages over LipSDP \cite{fazlyab2019efficient} in terms of scalability.

\section{Experiments}\label{sec:experiments}
In this section, we compare our newly proposed method GLipSDP (general LipSDP) to state-of-the-art SDP-based methods for Lipschitz constant estimation w.r.t. the $\ell_2$ norm, namely LipSDP \cite{fazlyab2019efficient} and CLipSDP (convolutional LipSDP) \cite{gramlich2023convolutional} and to a recent loop transformation based method LipLT \cite{fazlyab2024certified}. In addition, we compute a trivial matrix norm product bound (MP), the product of the spectral norms of the weights \cite{szegedy2013intriguing}. We first compare the accuracy and scalability of the methods on fully connected and fully convolutional networks, respectively, in Subsections~\ref{sec:Exp_fully_connected} and \ref{sec:Exp_fully_convolutional} and discuss the influences of the sources of conservatism that were introduced in \ref{sec:conservatism}. In Subsection~\ref{sec:Exp_different_architectures}, we then apply GLipSDP to different architectures for CNNs to illustrate its versatility.  The code is written in a modular fashion such that it can be applied easily to any NN architecture involving layers considered in this paper. All computations are carried out on a standard i7 note book with 32 GB RAM using Yalmip \cite{yalmip04} with the solver Mosek \cite{mosek} in Matlab for LipSDP, GLipSDP and CLipSDP and Python and Pytorch for LipLT and MP\footnote{We provide our code at \url{https://github.com/ppauli/GLipSDP}.}.

\subsection{Scalability on fully connected networks}\label{sec:Exp_fully_connected}
\begin{figure}
    \centering
    % This file was created by matlab2tikz.
%
%The latest updates can be retrieved from
%  http://www.mathworks.com/matlabcentral/fileexchange/22022-matlab2tikz-matlab2tikz
%where you can also make suggestions and rate matlab2tikz.
%
\definecolor{mycolor6}{rgb}{0.00000,0.44700,0.74100}%
\definecolor{mycolor2}{rgb}{0.85000,0.32500,0.09800}%
\definecolor{mycolor3}{rgb}{0.92900,0.69400,0.12500}%
\definecolor{mycolor4}{rgb}{0.49400,0.18400,0.55600}%
\definecolor{mycolor5}{rgb}{0.46600,0.67400,0.18800}%
\definecolor{mycolor1}{rgb}{0.30100,0.74500,0.93300}%
\definecolor{mycolor7}{rgb}{0.63500,0.07800,0.18400}%
\begin{tikzpicture}
\begin{axis}[%
width=0.8in,
height=0.44in,
at={(0in,0in)},
scale only axis,
bar shift auto,
xmin=0.509090909090909,
xmax=3.49090909090909,
xtick={1,2,3,4},
xticklabels={{2},{4},{8},{16}},
ytick={0,0.5,1},
xlabel style={font=\color{white!15!black}},
xlabel={Depth},
ymin=0,
ymax=1,
ylabel style={font=\color{white!15!black}, yshift=-0.3cm},
ylabel={$\gamma$},
%ylabel near ticks,
ybar=0pt,
axis background/.style={fill=white},
legend style={at={(0.48,2.7)}, anchor=north, legend columns=1, legend cell align=left, align=left, draw=none}
]
\addplot[ybar, bar width=3, fill=mycolor1, draw=black, area legend] table[row sep=crcr] {%
1	0.648476712405906\\
2	0.289742405302762\\
3	0.0527767046155736\\
};
\addplot[forget plot, color=white!15!black] table[row sep=crcr] {%
0.509090909090909	0\\
3.49090909090909	0\\
};
\addlegendentry{LipSDP}

\addplot[ybar, bar width=3, fill=mycolor2, draw=black, area legend] table[row sep=crcr] {%
1	0.648476715567287\\
2	0.289742414628371\\
3	0.0527769652946622\\
};
\addplot[forget plot, color=white!15!black] table[row sep=crcr] {%
0.509090909090909	0\\
3.49090909090909	0\\
};
\addlegendentry{GLipSDP}

\addplot[ybar, bar width=3, fill=mycolor3, draw=black, area legend] table[row sep=crcr] {%
1	0.813518783137531\\
2	0.534794431138905\\
3	0.21197974071739\\
};
\addplot[forget plot, color=white!15!black] table[row sep=crcr] {%
0.509090909090909	0\\
3.49090909090909	0\\
};
\addlegendentry{LipLT}

\addplot[ybar, bar width=3, fill=mycolor4, draw=black, area legend] table[row sep=crcr] {%
1	1\\
2	1\\
3	1\\
4   1\\
};
\addplot[forget plot, color=white!15!black] table[row sep=crcr] {%
0.509090909090909	0\\
3.49090909090909	0\\
};
\addlegendentry{MP}
\end{axis}
\node[align=center,font=\bfseries] at (0.65in,0.53in) {$c=32$};

\begin{axis}[%
width=2.12in,
height=1.5in, % Adjusted height
at={(1.4in,0in)},
xmode=log,
xmin=2,
xmax=64,
xtick={ 2,  4,  8,  16, 32, 64, 128},
xticklabels={ 2,  4,  8,  16, 32, 64, 128},
ylabel={Computation Time (sec)},
xlabel={Depth},
ymode=log,
ymin=0.01,
ymax=3000,
yminorticks=true,
ylabel style={font=\color{white!15!black}, yshift=-0.05cm},
axis line style={-},
tick align=outside,
tick pos=left,
legend style={at={(0.43,1.3)}, anchor=north, legend columns=3, legend cell align=left, draw=none},
every axis plot/.append style={very thick} % Adjusted to make lines thicker
]
\addplot [color=black, forget plot]
  table[row sep=crcr]{%
2	0.0617122\\
};
\label{solid}
\addplot [color=mycolor5, mark=square, line width=1pt]
  table[row sep=crcr]{%
2	0.0617122\\
4	0.0401969\\
8	0.0715163\\
16	0.204158\\
32	0.3969887\\
64	0.9105674\\
};
\addlegendentry{$c=16$}

\addplot [color=mycolor6, mark=triangle, line width=1pt]
  table[row sep=crcr]{%
2	0.0588946\\
4	0.315951\\
8	1.0585204\\
16	3.1369859\\
32	6.7633474\\
64	10.3227533\\
};
\addlegendentry{$c=32$}

\addplot [color=mycolor7, mark = diamond, line width=1pt]
  table[row sep=crcr]{%
2	1.0681704\\
4	6.8938716\\
8	20.7272597\\
16	69.5903507\\
32	126.6435941\\
64	274.3208863\\
};
\addlegendentry{$c=64$}
\addplot [color=black, dashed, forget plot, line width=1pt]
  table[row sep=crcr]{%
2	0.0139982\\
};\label{dashed}

%\addplot [color=mycolor4, mark = o, line width=1pt]
%  table[row sep=crcr]{%
%4	8.4603274\\
%8	23.8803126\\
%16	73.5247766\\
%32	148.0413328\\
%64	0\\
%128	0\\
%};
%\addlegendentry{$c=64$}

\addplot [color=mycolor5, mark=square, mark options={solid}, dashed, forget plot, line width=1pt]
table[row sep=crcr]{%
2	0.0139982\\
4	0.0311817\\
8	0.1324681\\
16	0.675343\\
32	3.8403408\\
64	25.6006774\\
};
\addplot [color=mycolor6, mark = triangle, mark options={solid}, dashed, forget plot, line width=1pt]  
table[row sep=crcr]{%
2	0.0292963\\
4	0.1149181\\
8	0.7382071\\
16	6.2245605\\
32	27.7334686\\
64	243.3759809\\
};
\addplot [color=mycolor7, mark = diamond, mark options={solid}, dashed, forget plot, line width=1pt]  
table[row sep=crcr]{%
2	0.1474984\\
4	1.0813596\\
8	5.0801539\\
16	40.7650554\\
32	298.4798369\\
64	0\\
};
%\addplot [color=mycolor4, mark = o, mark options={solid}, dashed, line width=1pt]
%  table[row sep=crcr]{%
%4	1.1838669\\
%8	6.2093277\\
%16	47.0452086\\
%32	318.029827\\
%64	0\\
%128	0\\
%};
\end{axis}

\end{tikzpicture}%
    \caption{Lipschitz bounds $\gamma$ using LipSDP, GLipSDP, LipLT and the matrix product bound (MP) on fully connected NNs with depths $d=\{2,4,8\}$ and hidden layer size $c=32$ (LipSDP/GLipSDP bounds are close to zero for deeper NNs) (left). Computation times for fully connected NNs with depths $d=\{2,4,8,16,32,64\}$ and channel sizes $c=\{16,32,64\}$ for GLipSDP (\ref{solid}) and LipSDP (\ref{dashed}) (right).}
    \label{fig:Lipschitz_bounds_FC}
\end{figure}
In this subsection, we consider fully connected NNs $\calF\circ\sigma\circ\calF\cdots\sigma\circ\calF$ of different widths $c=\{16,32,64\}$ and depths $d=\{2,4,8,16,32,64\}$ and compare GLipSDP (ours), LipSDP, LipLT and MP in terms of accuracy and the SDP-based variants GLipSDP and LipSDP in terms of  computation times. To do so we randomly generated weights that we subsequently normalized such that the MP bound is 1 and Fig.~\ref{fig:Lipschitz_bounds_FC} (left) shows the bounds computed using LipSDP, GLipSDP and LipLT for NNs of different depths $d=\{ 2, 4, 8 \}$. We observe that LipSDP and GLipSDP yield the same bounds, as expected from Theorem~\ref{thm:noMoreConservativeThanQSR}, which are lower than the ones obtained by LipLT. As the depth increases the gap between MP and the other bounds becomes larger. In Fig. \ref{fig:Lipschitz_bounds_FC} (right), we now compare the computation times of LipSDP and GLipSDP. While LipSDP relies on one large and sparse LMI constraint, GLipSDP considers one dense LMI per layer. We notice that for shallow NNs of depth of up to 8 layers, LipSDP is faster than layer-wise GLipSDP. However, with increasing depths the layer-wise splitting of GLipSDP is advantageous in terms of computation times. This trade-off is due to the fact that the splitting also introduces additional decision variables $X$ at the layer transitions. A direct consequence of this phenomenon is that it may be computationally advantageous to combine some fully connected layers as subnetworks.

Next, we only consider the 32-layer fully connected networks with 32 and 64 neurons ($d=32$, $c=\{32,64\}$) and we apply GLipSDP but vary the number of layers combined in subnetworks, cmp. Section~\ref{sec:subnetworks}. More specifically, we compute an upper Lipschitz bound using layer-wise LMI constraints, i.e., 32 LMI constraints, and then combine 2, 4, 8, 16, 32 layers to form subnetworks, then applying GLipSDP with 16, 8, 4, 2, and 1 LMI constraints instead of 32. It is an immediate result of Theorem \ref{thm:noMoreConservativeThanQSR} that GLipSDP yields the same Lipschitz bounds for all subnetwork configurations, yet it requires different computation times that are shown in Fig.~\ref{fig:Comp_times_subnetworks} (left). The least computation time is achieved using 8 subnetworks of depth 4 in the 32-neuron case and 4 subnetworks of depth 8 in the 64-neuron variant, again illustrating the trade-off due to smaller LMI constraints on the one hand and more decision variables on the other hand.

\subsection{Scalability on fully convolutional networks}\label{sec:Exp_fully_convolutional}
\begin{figure}
    \centering
    % This file was created by matlab2tikz.
%
%The latest updates can be retrieved from
%  http://www.mathworks.com/matlabcentral/fileexchange/22022-matlab2tikz-matlab2tikz
%where you can also make suggestions and rate matlab2tikz.
%
\definecolor{mycolor6}{rgb}{0.00000,0.44700,0.74100}%
\definecolor{mycolor2}{rgb}{0.85000,0.32500,0.09800}%
\definecolor{mycolor3}{rgb}{0.92900,0.69400,0.12500}%
\definecolor{mycolor4}{rgb}{0.49400,0.18400,0.55600}%
\definecolor{mycolor5}{rgb}{0.46600,0.67400,0.18800}%
\definecolor{mycolor1}{rgb}{0.30100,0.74500,0.93300}%
\definecolor{mycolor7}{rgb}{0.63500,0.07800,0.18400}%
\begin{tikzpicture}
\begin{axis}[%
width=0.85in,
height=0.44in,
at={(0in,0in)},
scale only axis,
bar shift auto,
xmin=0.511111111111111,
xmax=4.48888888888889,
xtick={1,2,3,4},
xticklabels={{2},{4},{8},{16}},
ytick={0,5,10,15},
xlabel style={font=\color{white!15!black}},
xlabel={Depth},
ymin=0,
ymax=12,
ylabel style={font=\color{white!15!black}, yshift=-0.5cm},
ylabel={$\log(\gamma)$},
%ylabel near ticks,
ybar=0pt,
axis background/.style={fill=white},
legend style={at={(0.48,2.7)}, anchor=north, legend columns=1, legend cell align=left, align=left, draw=none}
]
\addplot[ybar, bar width=3, fill=mycolor1, draw=black, area legend] table[row sep=crcr] {%
1	1.39514120827907\\
2	3.00339768882389\\
3	4.2209251138703\\
4	8.37512291061585\\
};
\addplot[forget plot, color=white!15!black] table[row sep=crcr] {%
0.509090909090909	0\\
4.49090909090909	0\\
};
\addlegendentry{CLipSDP}

\addplot[forget plot, color=white!15!black] table[row sep=crcr] {%
0.509090909090909	0\\
4.49090909090909	0\\
};
\addplot[ybar, bar width=3, fill=mycolor2, draw=black, area legend] table[row sep=crcr] {%
1	1.39514120827907\\
2	3.00339768882389\\
3	4.2209251138703\\
4	8.37512291061585\\
};
\addplot[forget plot, color=white!15!black] table[row sep=crcr] {%
0.509090909090909	0\\
4.49090909090909	0\\
};
\addlegendentry{GLipSDP}

\addplot[ybar, bar width=3, fill=mycolor3, draw=black, area legend] table[row sep=crcr] {%
1	1.44590670605103\\
2	3.17491975760197\\
3	4.82012994120741\\
4	9.96383906882735\\
};
\addplot[forget plot, color=white!15!black] table[row sep=crcr] {%
0.509090909090909	0\\
4.49090909090909	0\\
};
\addlegendentry{LipLT}

\addplot[ybar, bar width=3, fill=mycolor4, draw=black, area legend] table[row sep=crcr] {%
1	1.50983854721135\\
2	3.40625125596483\\
3	5.50744312609404\\
4	11.4669443517204\\
};
\addplot[forget plot, color=white!15!black] table[row sep=crcr] {%
0.509090909090909	0\\
4.49090909090909	0\\
};
\addlegendentry{MP}
\end{axis}
\node[align=center,font=\bfseries] at (0.65in,0.53in) {$c=16$};

\begin{axis}[%
width=2.15in,
height=1.5in, % Adjusted height
at={(1.5in,0in)},
xmode=log,
xmin=2,
xmax=16,
xtick={ 2,  4,  8, 16},
xticklabels={ 2,  4,  8, 16},
ylabel={Computation Time (sec)},
xlabel={Depth},
ymode=log,
ymin=0.888861,
ymax=30000,
yminorticks=true,
ylabel style={font=\color{white!15!black}, yshift=-0.1cm},
axis line style={-},
tick align=outside,
tick pos=left,
legend style={at={(0.42,1.3)}, anchor=north, legend columns=3, legend cell align=left, draw=none},
every axis plot/.append style={very thick} % Adjusted to make lines thicker
]
\addplot [color=black, forget plot]
  table[row sep=crcr]{%
2	0.888861\\
};
\label{solid}
\addplot [color=mycolor5, mark=square, line width=1pt]
  table[row sep=crcr]{%
2	0.888861\\
4	1.54314\\
8	3.14382\\
16	7.600219\\
};
\addlegendentry{$c=8$}

\addplot [color=mycolor6, mark=triangle, line width=1pt]
  table[row sep=crcr]{%
2	10.841696\\
4	34.772299\\
8	75.496793\\
16	171.258854\\
};
\addlegendentry{$c=16$}

\addplot [color=mycolor7, mark = diamond, line width=1pt]
  table[row sep=crcr]{%
2	431.403134\\
4	1769.020011\\
8	4951.040923\\
16	0\\
};
\addlegendentry{$c=32$}

\addplot [color=black, dashed, forget plot, line width=1pt]
  table[row sep=crcr]{%
2	1.102187\\
};\label{dashed}
\addplot [color=mycolor5, mark=square, mark options={solid}, dashed, forget plot, line width=1pt]
  table[row sep=crcr]{%
2	1.102187\\
4	5.652111\\
8	29.704354\\
16	137.131204\\
};
\addplot [color=mycolor6, mark = triangle, mark options={solid}, dashed, forget plot, line width=1pt]
  table[row sep=crcr]{%
2	19.608339\\
4	169.656385\\
8	920.229262\\
16	3397.828552\\
};
\addplot [color=mycolor7, mark = diamond, mark options={solid}, dashed, forget plot, line width=1pt]
  table[row sep=crcr]{%
2	751.589044\\
4	7747.938968\\
8	0\\
16	0\\
};
%\addplot [color=mycolor1, mark=square, only marks, mark=o, mark options={solid, mycolor1}, forget plot, line width=1pt]
%  table[row sep=crcr]{%
%2	4573.160195\\
%};\label{o}
\end{axis}

\end{tikzpicture}%
    \caption{Lipschitz bounds $\gamma$ using CLipSDP, GLipSDP, LipLT and the matrix product bound (MP) on fully convolutional NNs with depths $d=\{2,4,8,16\}$ and channel sizes $c=16$ (left). Computation times for fully convolutional networks with depths $d=\{2,4,8,16\}$ and channel sizes $c=\{8,16,32\}$ for GLipSDP (\ref{solid}) and CLipSDP (\ref{dashed}) (right).} %\ref{o} indicates the computation time using LipSDP. For larger networks, LipSDP runs into memory issues (right).}
    \label{fig:Lipschitz_bounds}
\end{figure}

\begin{figure}
    \centering
    \definecolor{mycolor6}{rgb}{0.00000,0.44700,0.74100}%
\definecolor{mycolor2}{rgb}{0.85000,0.32500,0.09800}%
\definecolor{mycolor3}{rgb}{0.92900,0.69400,0.12500}%
\definecolor{mycolor4}{rgb}{0.49400,0.18400,0.55600}%
\definecolor{mycolor5}{rgb}{0.46600,0.67400,0.18800}%
\definecolor{mycolor1}{rgb}{0.30100,0.74500,0.93300}%
\definecolor{mycolor7}{rgb}{0.63500,0.07800,0.18400}%

\begin{tikzpicture}

\begin{axis}[%
width=1.65in,
height=1.6in, % Adjusted height
xmin=1,
xmax=6,
xtick={1,2,3,4,5,6},
xticklabels={{32},{16},{8},{4},{2},{1}},
ylabel={Computation Time (sec)},
xlabel={\# of subnetworks},
ymode=log,
ymin=2,
ymax=300,
yminorticks=true,
axis line style={-},
tick align=outside,
tick pos=left,
legend style={at={(0.35,1.24)}, anchor=north, legend columns=2, legend cell align=left, draw=none},
every axis plot/.append style={very thick} % Adjusted to make lines thicker
]
\addplot [color=mycolor6, mark=triangle, mark options={solid}]
  table[row sep=crcr]{%
1	6.2205022\\
2	3.8018979\\
3	3.084915\\
4	3.5077172\\
5	7.3516327\\
6   22.9424708\\
};
\addlegendentry{$c=32$}
\addplot [color=mycolor7, mark=diamond, mark options={solid}]
  table[row sep=crcr]{%
1	124.8171004\\
2	84.5072613\\
3	57.0682013\\
4	55.2163157\\
5	87.3525793\\
6   259.6408457\\
};
\addlegendentry{$c=64$}
\end{axis}

\begin{axis}[%
width=1.65in,
height=1.6in, % Adjusted height
at={(1.7in,0in)},
xmin=1,
xmax=5,
xtick={1,2,3,4,5},
xticklabels={{16},{8},{4},{2},{1}},
ylabel={Computation Time (sec)},
xlabel={\# of subnetworks},
ymode=log,
ymin=5,
ymax=5000,
yminorticks=true,
axis line style={-},
tick align=outside,
tick pos=left,
legend style={at={(0.35,1.24)}, anchor=north, legend columns=2, legend cell align=left, draw=none},
every axis plot/.append style={very thick} % Adjusted to make lines thicker
]
\addplot [color=mycolor5, mark=square, mark options={solid}]
  table[row sep=crcr]{%
1	7.414865\\
2	17.039385\\
3	27.089757\\
4	51.231301\\
5	114.759862\\
};
\addlegendentry{$c=8$}
\addplot [color=mycolor6, mark=triangle, mark options={solid}]
  table[row sep=crcr]{%
1	215.886687\\
2	484.162856\\
3	997.515512\\
4	2258.09747\\
5	3673.156972\\
};
\addlegendentry{$c=16$}
\end{axis}

\end{tikzpicture}%
    \caption{Computation times for GLipSDP using 32, 16, 8, 4, 2, 1 subnetworks for a 32-layer fully connected network with 32 and 64 neurons (left). Computation times for GLipSDP using 16, 8, 4, 2, 1 subnetworks for a 16-layer fully convolutional network with 8 and 16 channels (right). The resulting Lipschitz bound is the same for all computations.}
    \label{fig:Comp_times_subnetworks}
\end{figure}

Next, we consider fully convolutional NNs. In particular, we train CNNs with backbones of depths $d=\{2,4,8,16\}$ and channel sizes $c=\{8,16,32\}$ on the MNIST dataset \cite{mnist}. We then analyze the fully convolutional backbones of these NNs, i.e., a subnetwork $\sigma\circ\calC\cdots\sigma\circ\calC$  which only consists of convolutional layers. The input size to the backbone is $14\times 14$ and is kept constant throughout all backbone layers.

Fig.~\ref{fig:Lipschitz_bounds} (left) shows the Lipschitz bounds obtained using GLipSDP (ours), CLipSDP, and MP. We refrain from a comparison to LipSDP as the underlying SDP runs into memory issues for all networks except the smallest one ($c=8$, $d=2$). We first note that GLipSDP and CLipSDP produce the same bounds which are tighter than the ones found with LipLT and significantly tighter than the trivial matrix product bound MP. In Fig.~\ref{fig:Lipschitz_bounds} (right) we compare the computation times of GLipSDP and CLipSDP. What distinguishes our method GLipSDP from CLipSDP is that we consider layer-wise LMIs rather than one large and sparse LMI constraint. In the case of fully convolutional layers, we observe that the deeper the network the larger the computational advantage of using GLipSDP. 

In Fig.~\ref{fig:Comp_times_subnetworks} (right), we apply GLipSDP defining subnetworks of different depths to analyze the 16-layer fully convolutional network with channel sizes 8 and 16 ($d=16$, $c=\{8,16\}$). In this experiment, we see that it is computationally advantageous to use multiple smaller LMI constraints, i.e., exploit the layer-by-layer structure of the network, in fully convolutional NNs. It follows from Theorem \ref{thm:noMoreConservativeThanQSR} that the obtained Lipschitz bounds are the same for all chosen splits.

\subsection{Convolutional neural networks for image classification}\label{sec:Exp_different_architectures}
Next, we compute upper bounds on the Lipschitz constant for typical CNN architectures, including  LeNet-5 \cite{lecun1998gradient}, the NNs used in \cite{leino2021globally}, and 18-layer residual NNs. These experiments are conducted on the MNIST \cite{mnist} and CIFAR-10 datasets, utilizing ReLU activation functions, the cross-entropy loss, the Adam optimizer, and default and recommended hyperparameters. We us weight decay for regularization such that the Lipschitz constants are reasonably low. Details on the architectures are deferred to Appendix~\ref{app:NNarchitectures}. In Table \ref{table:main_results2}, we compare our method (GLipSDP) to LipLT \cite{fazlyab2024certified}, the trivial matrix product bound (MP) and two variations of LipSDP and GLipSDP:

\begin{itemize}
  \item \textbf{S-LipSDP}:  As LipSDP runs into memory issues for the chosen architectures, we apply LipSDP on possibly large subnetworks that are analyzed separately. The product of the Lipschitz estimates for the subnetworks yields an upper bound for the network. 
  \item \textbf{S-GLipSDP}: We also compute S-GLipSDP (Split GLipSDP), splitting the NN into subnetworks that are convenient to handle and apply GLipSDP to the subnetworks. Again, the product of the bounds of the subnetworks gives an upper bound for the entire NN. 
\end{itemize}
All splits into subnetworks for S-GLipSDP and S-LipSDP are listed in Table~\ref{table:splits_S-GLipSDP} and \ref{table:splits_S-LipSDP}, respectively, in Appendix~\ref{app:NNarchitectures}. To illustrate the remaining conservatism of the bounds, we also empirically compute lower bounds.

In Table \ref{table:main_results2}, we summarize the resulting Lipschitz bounds and computation times for the different NNs. We observe that GLipSDP outperforms the S-GLipSDP, S-LipSDP, LipLT, and MP bounds in all NNs. If a value is not stated, the corresponding method runs into memory issues or is not applicable. Especially for the FC-R18, a residual NN with fully connected layers, GLipSDP achieves a bound very close to the empirical lower bound and in other NNs that involve convolutions, there also are significant improvements over MP bounds and clear advantages compared to LipLT. LipLT and LipSDP do not accommodate pooling layers, which are a building block of LeNet-5, showing the versatility of GLipSDP in comparison to these other methods. In the LeNet-5 examples, while GLipSDP provides the best bounds, we notice that S-LipSDP gives better bounds than S-GLipSDP. This is due to the conservatism introduced through (ii) and (iii) in handling convolutional layers, cmp. Remark~\ref{rem:recasting}. On the example of 2C2F, we recognize a clear computational advantage of GLipSDP and S-GLipSDP over S-LipSDP together with the tighter bound found with GLipSDP. Depending on the chosen split of the subnetworks the computation time of S-GLipSDP can also be larger than the one for S-LipSDP, as for example for 4C3F (the splits are different for S-GLipSDP and S-LipSDP, see Tables \ref{table:splits_S-GLipSDP} and \ref{table:splits_S-LipSDP}). In this example, we, however, achieve a much lower bound with S-GLipSDP. Using a 32GB RAM note book, GLipSDP runs into memory issues on 4C3F and 6C2F, yet using S-GLipSDP, we still achieve much better bounds than the MP bounds and also outperform LipLT in our 4C3F example. In summary, our newly proposed method enables us to achieve tighter bounds than those provided by LipLT and MP, and it is more scalable than other SDP-based methods.

\begin{table*}[t]
  \centering
  \caption{Lipschitz bounds (computation times in seconds) for different models with stated accuracies.
}
    %\resizebox{0.7\textwidth}{!}{%
    \begin{tabular}{llrrrrrrrrrr}
    \toprule
    \textbf{Dataset} & \textbf{Model}  & \textbf{Acc.} & \textbf{Emp. LB} & \multicolumn{2}{c}{\textbf{GLipSDP}} & \multicolumn{2}{c}{\textbf{S-GLipSDP}} & \multicolumn{2}{c}{\textbf{S-LipSDP}} & \textbf{LipLT} & \textbf{MP} \\
    \midrule 
    \multirow{5}{*}{MNIST} 
     %& FC & 94.2\% &\\
     & LeNet-5  & 97.5\% & 3.607 & {\bf 12.252} & (99) & 15.855 & (85) & 15.086 & (10) & -- & 15.620 \\
     & 2C2F  & 97.2\% & 3.388 &  {\bf 6.725} &  (360) & 8.223 & (232) & 7.689 & (7526) & 7.136 & 10.892\\
     & 4C3F  & 96.2\% & 5.077 & -- &  &  {\bf 22.240} & (14573) & 53.739 & (293) & 26.021 & 55.237 \\
     & FC-R18 & 96.0\% & 13.32 & {\bf 13.644} & (229) & 24.372 & (4) & 24.893 & (1) & 18.619 & 25.973\\
     & C-R18 & 96.5\% & 15.315 &{\bf 101.426} & (89) & 176.329 & (37) & -- & & 132.048 & 343.149 \\
     \midrule
     \multirow{2}{*}{CIFAR-10} 
     & LeNet-5  & 57.1\% & 9.9 & {\bf 82.209} & (108) & 107.728 & (114) &  99.839 & (19) & -- & 109.543 \\
     & 6C2F     & 64.2\% &  31.091   & -- &  & 1273.2 & (26857) & -- & & {\bf 786.919} &  2057.4  \\
    \bottomrule 
    \end{tabular}% 
    %} 
  \label{table:main_results2}%
\end{table*}%

\section{Conclusion}
We presented a versatile and scalable approach for Lipschitz constant estimation for a large class of NN architectures. Our approach views the NN as a time-varying dynamical system, where we interpret the layer indices as time indices. This view allows us to exploit the layer-wise composition structure of NNs. In addition, we leverage the structure of the individual layers, especially of convolutional layers that we represent as N-D systems of the Roesser type. Our method is more accurate than other optimization-free methods and more scalable than previous SDP-based methods.

\section*{Acknowledgments}
The authors thank Andrea Iannelli for helpful comments on the manuscript.

{\appendices

\section{Additional proofs}
%In this section, we provide proofs for Lemma \ref{lem:diss_CNN} and Theorem~\ref{app:proof_proposition}.
\subsection{Proof of Lemma \ref{lem:diss_CNN}}
\label{app:proof_lem_diss_CNN}

We assume that the convolutional layer $\calL = \calC$ is realized as a Roesser system \eqref{eq:RoesserSys}. This means that for any $(u\lsb \bi \rsb ) \in \signals{d}{c_{-}}$, there exists a uniquely defined $(x\lsb \bi \rsb ) \in \signals{d}{n}$ with $x\lsb \bi \rsb ^\top = \begin{bmatrix}
    x_1 \lsb \bi \rsb ^\top & \cdots & x_{d} \lsb \bi \rsb ^\top
\end{bmatrix}$ and
\begin{align}\label{eq:zero_initial_conditions}
    x_j \lsb \bi \rsb &= 0 & \forall \bi \in \bbN_0^{d}, i_j = 0,
\end{align}
such that $(u\lsb \bi \rsb , x\lsb \bi \rsb , y \lsb \bi \rsb )$ with $y = \calC (u)$ satisfies \eqref{eq:RoesserSys}, where $i_j$ denotes the $j$-th index in $\bi$. 

Hence, let two arbitrary inputs $(u^1\lsb \bi \rsb ), (u^2\lsb \bi \rsb ) \in \signals{d}{c_{-}}$ be given and let $(y^1\lsb \bi \rsb )$, $(y^2\lsb \bi \rsb )$, $(x^1\lsb \bi \rsb )$, $(x^2\lsb \bi \rsb )$ denote the corresponding state and output response of the layer $\calC$. %We use the subscript in $x_j[\bi]$ to count through $j=1,\dots,d$.  
Multiplying the matrix inequality \eqref{eq:cert_CNN} from the left by the vector
$\begin{bmatrix}
    \Delta x \lsb \bi \rsb^\top & \Delta u \lsb \bi \rsb^\top
\end{bmatrix}$,
where $\Delta x \lsb \bi \rsb = x^1\lsb \bi \rsb-x^2\lsb \bi \rsb$ and $\Delta u \lsb \bi \rsb = u^1\lsb \bi \rsb - u^2\lsb \bi \rsb$, and from the right by its transpose yields the inequality
\begin{align*}
    &\sum_{j=1}^d \Delta x_j\lsb \bi \rsb ^\top P_j \Delta x_j\lsb \bi \rsb +\Delta u \lsb \bi \rsb ^\top \widetilde{X}_{-} \Delta u \lsb \bi \rsb \\
    &\geq
    \sum_{j=1}^d \Delta x_j \lsb \bi + e_j \rsb ^\top P_j \Delta x_j \lsb \bi + e_j  \rsb +
    \Delta y \lsb \bi \rsb ^\top \widetilde{X} \Delta y \lsb \bi \rsb.
\end{align*}
Note that the bias terms do not need to be considered, since they cancel out when computing the differences $\Delta x_j\lsb \bi + e_j \rsb$.
Summing this inequality over all $\bi \in \bbN_0^{d}$ yields
\begin{align*}
    &\sum_{\bi \in \bbN_0^d}\sum_{j=1}^d \Delta x_j\lsb \bi \rsb^\top P_j \Delta x_j\lsb \bi \rsb
    +
    \sum_{\bi \in \bbN_0^d} \Delta u\lsb \bi \rsb^\top \widetilde{X}_{-} \Delta u\lsb \bi \rsb\\
    &\geq
    \sum_{j=1}^d\sum_{\bi \in \bbN_0^d,i_j \geq 1} \Delta x_j\lsb \bi \rsb^\top P_j \Delta x_j\lsb \bi \rsb +
    \sum_{\bi \in \bbN_0^d} \Delta y \lsb \bi \rsb^\top \widetilde{X} \Delta y \lsb \bi \rsb,
\end{align*}
wherein $\Delta y\lsb \bi \rsb = y^1\lsb \bi \rsb - y^2\lsb \bi \rsb$. These sums all converge since all signals are in $\signals{d}{}$,  %since all signals are in $\signals{d_{k-1}}{}$
as the convolutional layer is a finite impulse response filter and, therefore, it is stable. Canceling terms on both sides yields
\begin{align*}
    &\sum_{\bi \in \bbN_0^d} \Delta u \lsb \bi \rsb^\top \widetilde{X}_{-} \Delta u\lsb \bi \rsb \\
    &\geq
    \sum_{j=1}^d\sum_{\bi \in \bbN_0^d,i_j = 0} \Delta x_j \lsb \bi\rsb^\top P_j \Delta x_j \lsb \bi \rsb +
    \sum_{\bi \in \bbN_0^d} \Delta y \lsb \bi \rsb^\top \widetilde{X} \Delta y \lsb \bi \rsb.
\end{align*}
Here, the sum over the boundary terms $\sum_{j=1}^d\sum_{\bi \in \bbN_0^d,i_j = 0} \Delta x_j\lsb \bi \rsb^\top P_j \Delta x_j \lsb \bi \rsb$ is zero, cmp. \eqref{eq:zero_initial_conditions}, such that this inequality is exactly what we had to show.

\subsection{Proof of Lemma~\ref{lem:GroupSort}} \label{app:GroupSort}
%\begin{proof}
    %For ease of notation, we assume $\calD_{k-1}=\bbR^{c}$ throughout the proof, i.e., $\widetilde{X}_{k-1}=X_{k-1}$, $\widetilde{X}_k=X_k$. The extension to $\calD_{k-1}=\ell_{2e}^{c}(\bbN_0^d)$ is straightforward given that the activation function is applied to all vectors $u[\bi]\in\bbR^c$, $\bi\in\bbN_0^d$. 
    To prove Lemma~\ref{lem:GroupSort}, we require the following lemma, which is a simplification of \cite[Lemma 1]{pauli2024novel} that directly follows for $P=S=0$ in \cite[Lemma 1]{pauli2024novel}.
    \begin{lemma} \label{lemma:GroupSort2}
        Consider a GroupSort activation $\sigma^{\mathrm{GS}}:\bbR^c\to\bbR^c$ with group size $n_g$. %Let $N=\frac{c}{n_g}$.
        For any  $T\in\calT_{n_g}^c$, $\sigma^{\mathrm{GS}}$ satisfies
    \begin{equation*}
        \begin{bmatrix}
            x-y\\
            \sigma(x)-\sigma(y)
        \end{bmatrix}^\top
        \begin{bmatrix}
            T & 0\\
            0 & -T
        \end{bmatrix}
        \begin{bmatrix}
            x-y\\
            \sigma(x)-\sigma(y)
        \end{bmatrix}\geq 0, ~ \forall~ x,y\in\bbR^c.
    \end{equation*}
    \end{lemma}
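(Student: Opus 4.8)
To prove Lemma~\ref{lemma:GroupSort2}, the plan is to reduce the claimed quadratic constraint, group by group, to the elementary fact that coordinate-wise sorting is non-expansive in the Euclidean norm (equivalently, to the rearrangement inequality). Write $N = c/n_g$ and, for $z \in \bbR^c$, let $z^{(i)} \in \bbR^{n_g}$, $i = 1,\dots,N$, denote the $i$-th block of $z$ in the grouping used by $\sigma^{\mathrm{GS}}$. Since $\sigma^{\mathrm{GS}}$ sorts each block independently, $[\sigma^{\mathrm{GS}}(x)]^{(i)}$ is the ascending rearrangement of $x^{(i)}$. First I would expand the quadratic form of the statement using $T = \diag(\lambda)\otimes I_{n_g} + \diag(\gamma)\otimes\mathbf{1}_{n_g}\mathbf{1}_{n_g}^\top$. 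Because both $I_{n_g}$ and $\mathbf{1}_{n_g}\mathbf{1}_{n_g}^\top$ act block-wise, the form equals, with the abbreviations $p = x^{(i)}$, $q = y^{(i)}$, $\hat p = \mathrm{sort}(p)$, $\hat q = \mathrm{sort}(q)$ within each group,
\[
    \sum_{i=1}^N\Big[\lambda_i\big(\|p-q\|_2^2 - \|\hat p-\hat q\|_2^2\big) + \gamma_i\big((\mathbf{1}^\top(p-q))^2 - (\mathbf{1}^\top(\hat p-\hat q))^2\big)\Big],
\]
so it suffices to show that each summand is nonnegative.

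Second, I would use that sorting a block is a permutation, and hence preserves both the sum of entries and the Euclidean norm: $\mathbf{1}^\top\hat p = \mathbf{1}^\top p$, $\mathbf{1}^\top\hat q = \mathbf{1}^\top q$, and $\|\hat p\|_2 = \|p\|_2$, $\|\hat q\|_2 = \|q\|_2$. Preservation of sums makes every $\gamma_i$-term vanish identically. Preservation of norms collapses $\|p-q\|_2^2 - \|\hat p-\hat q\|_2^2$ to $2(\langle\hat p,\hat q\rangle - \langle p,q\rangle)$, so the $\lambda_i$-term is nonnegative (using $\lambda_i \ge 0$) precisely when $\langle\hat p,\hat q\rangle \ge \langle p,q\rangle$. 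This is exactly the rearrangement inequality: among all pairings of the entries of $p$ with those of $q$, the inner product is maximized when both are listed in the same (ascending) order, and the pairing realizing $\langle p,q\rangle$ is one such pairing. Summing the nonnegative group contributions then yields the lemma.

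The only step with genuine content is the second one, and the subtlety I would flag is that $\sigma^{\mathrm{GS}}$ applies a data-dependent and in general \emph{different} permutation to $x$ and to $y$, so $\hat p - \hat q$ is not obtained by applying one fixed orthogonal map to $p - q$; the rearrangement inequality is precisely the tool that handles this mismatch (equivalently: block-wise sorting is $1$-Lipschitz in $\ell_2$). Everything else is bookkeeping with the Kronecker structure of $T$. As the surrounding text notes, one could instead obtain the statement in one line by specializing \cite[Lemma~1]{pauli2024novel} to $P = S = 0$; the argument above simply unpacks why that specialization holds. With Lemma~\ref{lemma:GroupSort2} available, Lemma~\ref{lem:GroupSort} then follows by choosing block-diagonal, shift-invariant $X \in \calH_{\sigma^{\mathrm{GS}}}^y$ and $X_- \in \calH_{\sigma^{\mathrm{GS}}}^u$, applying the quadratic constraint pointwise at each multi-index $\bi$ with $T = \widetilde{X}_- - \widetilde{X}$ (so that $0 \preceq \widetilde{X} \preceq \widetilde{X}_-$ makes $T \in \calT_{n_g}^c$), and summing over $\bi \in \bbN_0^d$, exactly as in the proof of Lemma~\ref{lem:LMI_slope_restriction}.
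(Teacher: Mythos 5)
Your proof of the lemma is correct. The paper itself does not prove this statement at all --- it simply cites \cite[Lemma~1]{pauli2024novel} with $P=S=0$ --- so your argument is a genuine addition: the Kronecker structure of $T\in\calT^c_{n_g}$ reduces the quadratic form to independent group-wise contributions, the $\gamma_i$-terms vanish because sorting permutes entries and hence preserves $\mathbf{1}^\top(\cdot)$, and the $\lambda_i$-terms reduce (via norm preservation) to $\langle\hat p,\hat q\rangle\geq\langle p,q\rangle$, which is exactly the rearrangement inequality; with $\lambda\in\bbR_+^{c/n_g}$ this gives nonnegativity. You are also right to flag that the two blocks are sorted by \emph{different} data-dependent permutations, which is why one cannot argue via a single orthogonal change of variables and why the rearrangement inequality is the right tool. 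The one imprecision is in your closing remark on how Lemma~\ref{lem:GroupSort} then follows: applying the quadratic constraint with $T=\widetilde{X}_--\widetilde{X}$ yields $\Delta u^\top T\Delta u\geq\Delta y^\top T\Delta y$, which is not by itself the required $\Delta u^\top\widetilde{X}_-\Delta u\geq\Delta y^\top\widetilde{X}\Delta y$; you would additionally need to invoke the constraint a second time with multiplier $\widetilde{X}$ (or, as the paper does, pick a single $T$ with $\widetilde{X}\preceq T\preceq\widetilde{X}_-$ and sandwich). Since that concerns the subsequent lemma rather than the statement at hand, it does not affect the correctness of your proof of Lemma~\ref{lemma:GroupSort2}.
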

    If $\widetilde{X}_{-}\in\calT_{n_g}^c$ and $\widetilde{X}\in\calT_{n_g}^c$ satisfy $0\preceq \widetilde{X}\preceq \widetilde{X}_{-}$, there exists a multiplier $T\in\calT_{n_g}^c$ that satisfies
    $0\preceq \widetilde{X}\preceq T \preceq \widetilde{X}_{-}$, for which we equivalently write
    \begin{equation}\label{eq:GroupSort}
        \begin{bmatrix}
            \widetilde{X}_{-} - T & 0\\
            0 & -\widetilde{X} + T
        \end{bmatrix}\succeq 0.
    \end{equation}
    Let $(u^1\lsb \bi \rsb ), (u^2\lsb \bi \rsb ) \in \signals{d}{c_{-}}$ be two arbitrary inputs with corresponding outputs $(y^1\lsb \bi \rsb ) , (y^2\lsb \bi \rsb )$ of the GroupSort activation layer. We multiply \eqref{eq:GroupSort} with $\begin{bmatrix} \Delta u \lsb \bi \rsb^\top &   \Delta y \lsb \bi \rsb^\top \end{bmatrix}$ from the left, where $\Delta x \lsb \bi \rsb = x^1\lsb \bi \rsb-x^2\lsb \bi \rsb$ and $\Delta u \lsb \bi \rsb = u^1\lsb \bi \rsb - u^2\lsb \bi \rsb$, and its transpose from the right and further sum over $\bi\in\bbN_0^d$, to obtain
    \begin{align*}
        &V_{X_{-}}(u^1,u^2) -
        V_{X}(y^1,y^2)\\  &\geq 
        \sum_{\bi\in\bbN_0^d}
        \begin{bmatrix}
            \Delta u \lsb \bi \rsb\\
            \Delta y \lsb \bi \rsb
        \end{bmatrix}^\top
        \begin{bmatrix}
            T & 0\\
            0 & -T
        \end{bmatrix}
        \begin{bmatrix}
            \Delta u \lsb \bi \rsb\\
            \Delta y \lsb \bi \rsb
        \end{bmatrix} \geq 0,
    \end{align*}
    where the last inequality follows from Lemma \ref{lemma:GroupSort2}.
%\end{proof}

\subsection{Proof of Lemma~\ref{lem:conv+act}}\label{sec:conv+act}
%\begin{proof}
Let two arbitrary inputs $(u^1\lsb \bi \rsb ) , (u^2\lsb \bi \rsb ) \in \signals{d}{c_{-}}$ be given and let $(y^1\lsb \bi \rsb ) , (y^2\lsb \bi \rsb )$, $(x^1\lsb \bi \rsb ) , (x^2\lsb \bi \rsb )$ denote the corresponding state and output response of the layer $\calC$, where $x^m\lsb \bi \rsb = \begin{bmatrix}
    x^m_1 \lsb \bi \rsb ^\top & \cdots & x^m_{d} \lsb \bi \rsb ^\top
\end{bmatrix}^\top$, $m=1,2$. We left/right multiply \eqref{eq:cert_CNN} with
$\begin{bmatrix}
    \Delta x \lsb \bi \rsb ^\top & \Delta u \lsb \bi \rsb ^\top & \Delta y\lsb \bi \rsb^\top
\end{bmatrix}$,
where $\Delta x \lsb \bi \rsb = x^1\lsb \bi \rsb-x^2\lsb \bi \rsb$, $\Delta u \lsb \bi \rsb = u^1\lsb \bi \rsb - u^2\lsb \bi \rsb$ and $\Delta y \lsb \bi \rsb = y^1\lsb \bi \rsb - y^2\lsb \bi \rsb$,
and its transpose, respectively, and obtain
\begin{align*}
    &\sum_{j=1}^d \Delta x_j\lsb \bi \rsb^\top P_j \Delta x_j\lsb \bi \rsb+
    \Delta u \lsb \bi \rsb^\top \widetilde{X}_{-} \Delta u \lsb \bi \rsb\\
    & + 2\Delta y \lsb \bi \rsb^\top \Lambda \Delta y \lsb \bi \rsb - 2 \Delta y \lsb \bi \rsb^\top\Lambda (\bC\Delta x \lsb \bi \rsb +\bD \Delta u \lsb \bi \rsb)\\
    &\geq
    \sum_{j=1}^d \Delta x_j\lsb \bi + e_j \rsb ^\top P_j \Delta x_j \lsb \bi + e_j  \rsb+
    \Delta y \lsb \bi \rsb ^\top \widetilde{X} \Delta y \lsb \bi \rsb.
\end{align*}
Subsequent summation over all $\bi\in\bbN_0^{d}$ then yields
\begin{align*}
    &\sum_{\bi \in \bbN_0^d} \Delta u \lsb \bi \rsb^\top \widetilde{X}_{-} \Delta u \lsb \bi \rsb\\
    & + 2\Delta y \lsb \bi \rsb^\top\Lambda \Delta y \lsb \bi \rsb- 2 \Delta y \lsb \bi \rsb^\top \Lambda (\bC\Delta x\lsb \bi \rsb+\bD\Delta u \lsb \bi \rsb)\\
    %\sum_{j=1}^d\sum_{\bi \in \bbN_0^d,i_j = 0} (x_j^1\lsb \bi\rsb - x_j^2\lsb \bi \rsb )^\top P_j (x_j^1\lsb \bi \rsb - x_j^2\lsb \bi\rsb )\\
    &\geq
    \sum_{\bi \in \bbN_0^d} \Delta y[\bi]^\top \widetilde{X} \Delta y \lsb\bi\rsb,\\
\end{align*}
again using the arguments laid out in the proof of Lemma~\ref{lem:diss_CNN}. By Lemma \ref{lem:slope_restriction}, we conclude that $2 \Delta y \lsb\bi\rsb^\top\Lambda \Delta y \lsb\bi\rsb- 2 \Delta y \lsb\bi\rsb^\top\Lambda (\bC\Delta x\lsb\bi\rsb +\bD \Delta u\lsb \bi \rsb)\leq 0$ for all $\bi\in\bbN_0^{d}$ such that we obtain \eqref{eq:metricOpt_DP_relaxed}.

\subsection{Proof of Lemma~\ref{lem:ResNet_1_layer}}\label{app:ResNet_cert}

    For some $u^1,u^2\in\bbR^{c}$ and the corresponding intermediate outputs $v^1,v^2\in\bbR^{c}$, and outputs $y^1,y^2\in\bbR^{c}$ of the ResNet layer \eqref{eq:ResNet_layer}, we left/right multiply \eqref{eq:cert_ResNet} with 
    $\begin{bmatrix}
        \Delta u^\top &  \Delta v^\top
    \end{bmatrix}$,
    where $\Delta u=u^1-u^2$ and $\Delta v=v^1-v^2$, and its transpose, respectively. %, where $v_k^i=\sigma(W_1 u_k^i+b_k)$, $i=1,2$.
    We obtain
    \begin{align*}
        V_{X_{-}}(u^1,u^2)-V_{X}(y^1,y^2) \geq 
        -2 \Delta v^\top \Lambda\Delta v + 2 \Delta v^\top \Lambda W_1 \Delta u.
    \end{align*}
    Given that the activation functions are slope-restricted on $[0,1]$, we use Lemma~\ref{lem:slope_restriction} to conclude that $-2 \Delta v^\top \Lambda\Delta v + 2 \Delta v^\top \Lambda W_1 \Delta u\geq 0$. It follows that $V_{X_{-}}(u^1,u^2)-V_{X}(y^1,y^2)\geq 0$.

\subsection{Proof of Theorem \ref{thm:noMoreConservativeThanQSR}}\label{app:proof_proposition}
We prove Theorem \ref{thm:noMoreConservativeThanQSR} by induction.\\
\textbf{Induction hypothesis:} If for some $(Q_1,\dots,Q_l)$, $(R_1,\dots,R_l)$, $(S_1,\dots,S_l)$, $Z_l$ and $\gamma > 0$ 
\begin{align}\label{eq:proofThm1_hypothesis}
{\small
	\begin{bmatrix}
		Q_{l} - Z_l & S_{l}\\
		S_{l}^\top & R_{l} + Q_{l-1} & S_{l-1}\\
		& S_{l-1}^\top & R_{l-1} + Q_{l-2} & \ddots\\
		  & & \ddots & \ddots & S_1\\
		  & & & S_1^\top & R_1 + \gamma^2 I
	\end{bmatrix} \preceq 0}
\end{align}
is satisfied, then there exists a sequence of matrices $(X_0,\dots,X_l)$ such that $X_0=\gamma^2 I$, $X_l=Z_l$ and \eqref{eq:LMI_Thm1} holds.\\
\textbf{Start of induction:} $l = 1$. Assume
\begin{equation*}
    \begin{bmatrix}
        Q_1-Z_1 & S_1\\
        S_1^\top & R_1+\gamma^2 I    
    \end{bmatrix}\preceq 0
\end{equation*}
holds for some $Q_1$, $R_1$, $S_1$, $\gamma>0$, and $Z_1 \succeq 0$. Then \eqref{eq:LMI_Thm1} is satisfied with $X_0=\gamma^2 I$, $X_1=Z_1$:
\begin{equation*}
    \begin{bmatrix}
        Q_1 & S_1\\
        S_1^\top & R_1    
    \end{bmatrix}\preceq
    \begin{bmatrix}
        Z_1 & 0\\
        0 & -\gamma^2 I    
    \end{bmatrix}.
\end{equation*}\\
\textbf{Induction step:} $l\to l+1$. Assume that our induction hypothesis holds for $l$. Let  for some $(Q_1,\dots,Q_{l+1})$, $(R_1,\dots,R_{l+1})$, $(S_1,\dots,S_{l+1})$, $Z_{l+1}$, $\gamma > 0$ the inequality
\begin{align}\label{eq:proofThm1}
{\small
	\begin{bmatrix}
		Q_{l+1} - Z_{l+1} & S_{l+1}\\
		S_{l+1}^\top & R_{l+1} + Q_{l} & S_{l}\\
		& S_{l}^\top & R_{l} + Q_{l-1} & \ddots\\
		  & & \ddots & \ddots & S_1\\
		  & & & S_1^\top & R_1 + \gamma^2 I
	\end{bmatrix}\!\preceq\!0}
\end{align}
hold, which implies $Q_{l+1} - Z_{l+1}\preceq0$. There exists an orthogonal matrix $V$ of the eigenvectors of $Q_{l+1} - Z_{l+1}$ that diagonalizes $Q_{l+1} - Z_{l+1}$ by a similarity transformation, i.\,e., $V^\top(Q_{l+1} - Z_{l+1})V$ is a diagonal matrix. We construct $V=\begin{bmatrix} V_1 & V_2\end{bmatrix}$ in such a way that $V^\top(Q_{l+1} - Z_{l+1})V=\diag(0,\dots,0,v_1,\dots,v_n)=\blkdiag(0,D)$, $V_1^\top (Q_{l+1} - Z_{l+1}) V_1 = 0$, $V_2^\top (Q_{l+1} - Z_{l+1}) V_2 = D$, where $v_1,\dots,v_n<0$ and $n$ is the rank of $Q_{l+1} - Z_{l+1}$. Next, we left and right multiply \eqref{eq:proofThm1} with the full-rank matrix $\diag(V^\top,I)$ and its transpose $\diag(V,I)$, respectively, which yields
\begin{align}\label{eq:proofThm1_2}
{\small
	\begin{bmatrix}
		\begin{bmatrix} 0 & 0 \\ 0 & D \end{bmatrix} & \begin{bmatrix}  0 \\ V_2^\top S_{l+1} \end{bmatrix}\\
		\begin{bmatrix} 0 & S_{l+1}^\top V_2 \end{bmatrix}  & R_{l+1} + Q_{l} & \ddots\\
		  & \ddots & \ddots & S_1\\
		  & & S_1^\top & R_1 + \gamma^2 I
	\end{bmatrix} \preceq 0}
\end{align}
and further, we drop the $c_{l+1}-n$ zero rows and columns of \eqref{eq:proofThm1_2}, resulting in
\begin{align}\label{eq:proofThm1_3}
{\small
	\begin{bmatrix}
		D & V_2^\top S_{l+1}\\
		S_{l+1}^\top V_2 & R_{l+1} + Q_{l} & S_{l}\\
		& S_{l}^\top & R_{l} + Q_{l-1} & \ddots\\
		  & & \ddots & \ddots & S_1\\
		  & & & S_1^\top & R_1 + \gamma^2 I
	\end{bmatrix}\!\preceq\!0.}
\end{align}
We now apply the Schur complement to \eqref{eq:proofThm1_3} w.r.t.~$D$, which yields that \eqref{eq:proofThm1_3} is negative semi-definite if and only if \eqref{eq:proofThm1_hypothesis} and $D\prec 0$ hold, where $Z_l=S_{l+1}^\top V_2 D^{-1} V_2^\top S_{l+1}-R_{l+1}$. Given that the diagonal matrix $D$ has only entries $v_1,\dots,v_n<0$, $D\prec 0$ is satisfied and $D$ is invertible. By the induction hypothesis, there exists a sequence of matrices $X_0,\dots,X_{l}$ such that $X_0=\gamma^2 I$, $X_l=Z_l$, \eqref{eq:LMI_Thm1}. The equality $X_l=Z_l$ implies that there exists at least one $X_l$ that satisfies $X_l\preceq Z_l$ and $\begin{bsmallmatrix}
        Q_l & S_l\\
        S_l^\top & R_l
    \end{bsmallmatrix} \preceq \begin{bsmallmatrix}
        X_l & 0\\
        0 & -X_{l-1}
    \end{bsmallmatrix}$.
    Here, $X_l\preceq Z_l$ reads $X_l\preceq S_{l+1}^\top V_2 D^{-1} V_2^\top S_{l+1}-R_{l+1}$. By the Schur complement, we then get
\begin{equation*}
    \begin{bmatrix}
        D & V_2^\top S_{l+1}\\
        S_{l+1}^\top V_2 & R_{l+1}+X_l
    \end{bmatrix}\preceq 0,
\end{equation*}
to which we again add the dropped $c_{l+1}-n$ zero rows and columns, yielding
\begin{equation}\label{eq:proofThm1_5}
    \begin{bmatrix}
        V^\top (Q_{l} - Z_l)V & V^\top S_{l+1}\\
        S_{l+1}^\top V & R_{l+1}+X_l
    \end{bmatrix}\preceq 0.
\end{equation}
Subsequently, we left and right multiply \eqref{eq:proofThm1_5} with $\diag(V, I)$ and its transpose $\diag(V^\top,I)$, respectively, yielding
\begin{equation*}
    \begin{bmatrix}
        Q_{l+1}-Z_{l+1} & S_{l+1}\\
        S_{l+1}^\top & R_{l+1}+X_l
    \end{bmatrix}\preceq 0,
\end{equation*}
and we further set $X_{l+1}=Z_{l+1}$, which concludes the induction step.

The statement of the theorem is a special case of our induction hypothesis for $Z_l = I$.

\section{Strided convolutions}\label{sec:strided_convolutions}
To represent strided convolutions in state space, we require a reshaping operator as a strided convolution is only shift invariant w.r.t. a shift by the stride $\bs$ along $\bi$. The strided convolution then takes the form $\calL = \calC \circ \calR_{\bs}$ with a convolutional layer $\calC$ with stride one and a reshaping operator $\calR_{\bs}$. This reshaping operator $\calR_{\bs}$ is given by
\begin{align*}
    \signals{d_{-}}{c_{-}} \to& \signals{d_{-}}{c_{-}|[1,\bs]|},\\
    &(u\lsb \bi \rsb) \mapsto 
    (\mathrm{vec} (u\lsb \bs\bi + \bt \rsb \mid \bt \in [0,\bs[)),
\end{align*}
where $\mathrm{vec} (u\lsb \bs\bi + \bt \rsb \mid \bt \in [0,\bs[)$ denotes the stacked vector of the signal entries $u\lsb \bs\bi + \bt \rsb , \bt \in [0,\bs[$. The resulting state space representation for a strided convolution takes this stacked vector $\mathrm{vec} (u\lsb \bs\bi + \bt \rsb \mid \bt \in [0,\bs[)$ as its input. We can view this as the flattening of a batch of pixels into a vectorized input which in turn serves as the input to the state space representation of the strided convolution. Details on the construction of the Roesser model for strided convolutions and multiple examples can be found in \cite{pauli2024state}.

    To use Lemma \ref{lem:diss_CNN} on strided convolutions we recall that strided convolutions take the form $\calL = \calC \circ \calR_{\bs}$. Hence, for the layer $\calL$ and the value function $V_{X}$, \eqref{eq:metricOpt_DP_relaxed} takes the form
    \begin{align*}
        \langle u,X_{-}u \rangle \geq \langle (\calC\circ \calR_{\bs}) u, X (\calC\circ \calR_{\bs}) u \rangle
    \end{align*}
    for all $u \in \signals{d_{-}}{c_{-}}$ or, equivalently,
    \begin{align*}
        \langle u',\calR_{\bs}^{-\dagger}X_{-}\calR_{\bs}^{-1}u' \rangle \geq \langle \calC u', X \calC u' \rangle 
    \end{align*}
    for all $u' \in \signals{d_{-}}{c_{-}|[0,\bs[|}$, where $\calR_{\bs}^{-1}$ is the inverse and $\calR_{\bs}^{-\dagger}$ the inverse adjoint of $\calR_{\bs}$. The last inequality is of the same form as \eqref{eq:metricOpt_DP_relaxed} with $\calR_{\bs}^{-\dagger}X_{-}\calR_{\bs}^{-1}$ replacing $X_{-}$. Consequently, we need to enforce $\calR_{\bs}^{-\dagger}X_{-}\calR_{\bs}^{-1} \in \calH_\calC^u$ to be able to apply Lemma \ref{lem:diss_CNN}.

    Basic reshaping arguments yield that this condition is equivalent to
    \begin{align}
    \resizebox{\linewidth}{!}{$
        \begin{bmatrix}
            X_{-}'[\bs\bi,\bs\bj] &  \cdots & X_{-}'[\bs\bi + \bs - 1,\bs\bj]\\
            \vdots & \ddots & \vdots\\
            X_{-}'[\bs\bi,\bs\bj+\bs - 1] &  \cdots & X_{-}'[\bs\bi + \bs - 1,\bs\bj+\bs - 1]
        \end{bmatrix}$}
        \label{eq:stridedRestriction}
    \end{align}
    being equal to some symmetric matrix $Y\in \bbR^{c_{-}|[0,\bs[| \times c_{-}|[0,\bs[|}$ for $\bi = \bj$ or zero for $\bi \neq \bj$. For matrices $X_{-} \in \calH_\calC^y$, we have $X_{-}'[\bi,\bj] = \widetilde{X}_{-}$ for $\bi = \bj$ and $X_{-}'[\bi,\bj]=0$ for $\bi \neq \bj$. This implies that \eqref{eq:stridedRestriction} is satisfied with $Y = \diag(\widetilde{X}_{-},\ldots,\widetilde{X}_{-})$.

\section{LMI constraints for subnetworks}\label{app:subnetworks}
Usually we consider the combination of a linear layer with a nonlinear activation function as shown in Section \ref{sec:statespace} and formulate LMI constraints for this combination. However, combining multiple layers is also possible. While producing larger LMI constraints, we renounce the use of the decision variables at the transition of layers, i.e., $X$, which reduces the number of decision variables. The following LMIs state the corresponding constraints.

\begin{lemma}\label{lem:FC_multiple_layers}
    Consider a fully connected subnetwork $\sigma\circ\calF_{l}\circ\dots\circ\sigma\circ\calF_1$ with activation functions that are slope-restricted on $[0,1]$. For some $X \in \calH_{\calF}^y$ and $X_{-} \in \calH_{\calF}^u$, this subnetwork satisfies \eqref{eq:metricOpt_DP_relaxed} if there exist $\Lambda_j\in\bbD_{++}^{n_{y_j}},~j=1,\dots,l$, such that %$\calG_{\calF}(X_{-},X,\nu):=$
    \begin{equation}\label{eq:fc}
        \begin{bmatrix}
            X_{-}       & -W_1^\top\Lambda_1 & 0                   & \cdots         & 0\\
            -\Lambda_1W_1 & 2\Lambda_1         & -W_2^\top\Lambda_2  & \ddots         & \vdots\\
            0             & -\Lambda_2W_2      & \ddots              & \ddots         & 0 \\
            \vdots        & \ddots             & \ddots              & 2\Lambda_{l-1} & -W_l^\top\Lambda_l\\
            \phantom{\ddots}0\phantom{\ddots}             & \cdots             & 0                   & -\Lambda_lW_l  & 2\Lambda_l-X
        \end{bmatrix}\succeq 0.
    \end{equation}
\end{lemma}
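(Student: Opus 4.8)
The plan is to mimic the dissipativity argument used for Lemma~\ref{lem:FC+act} and Lemma~\ref{lem:conv+act}, now applied simultaneously across all $l$ affine maps. Denote by $M$ the symmetric matrix on the left-hand side of \eqref{eq:fc}, so that \eqref{eq:fc} reads $M\succeq 0$. Fix two inputs $u^1,u^2\in\bbR^{c_{-}}$ and propagate them through the subnetwork, setting $y_0^m:=u^m$, $z_j^m:=W_jy_{j-1}^m+b_j$, and $y_j^m:=\sigma(z_j^m)$ for $j=1,\dots,l$ and $m=1,2$, so that $y_l^m=\calL(u^m)$; by definition of $n_{y_j}$ one has $n_{y_0}=c_{-}$ and $n_{y_l}=c$. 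Writing $\Delta y_j:=y_j^1-y_j^2$ and $\Delta z_j:=z_j^1-z_j^2$, the bias terms cancel and $\Delta z_j=W_j\Delta y_{j-1}$ for $j=1,\dots,l$.

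Next I would left- and right-multiply $M\succeq 0$ by the stacked vector $\xi^\top:=\begin{bmatrix}\Delta y_0^\top&\Delta y_1^\top&\cdots&\Delta y_l^\top\end{bmatrix}$ and its transpose, respectively. Reading off the block-tridiagonal structure of $M$ --- the diagonal blocks $X_{-},\,2\Lambda_1,\dots,2\Lambda_{l-1},\,2\Lambda_l-X$ and the off-diagonal blocks $-W_j^\top\Lambda_j$ and $-\Lambda_jW_j$ --- the resulting scalar inequality is
\begin{align*}
0\le\xi^\top M\xi &= \Delta y_0^\top X_{-}\Delta y_0 - \Delta y_l^\top X\Delta y_l \\
&\quad + \sum_{j=1}^l\big(2\Delta y_j^\top\Lambda_j\Delta y_j - 2\Delta y_{j-1}^\top W_j^\top\Lambda_j\Delta y_j\big).
\end{align*}
The key observation is that, after substituting $\Delta z_j=W_j\Delta y_{j-1}$, each summand $2\Delta y_{j-1}^\top W_j^\top\Lambda_j\Delta y_j-2\Delta y_j^\top\Lambda_j\Delta y_j$ is precisely the slope-restriction quadratic form of Lemma~\ref{lem:slope_restriction} evaluated at $(\Delta z_j,\Delta y_j)$ with multiplier $\Lambda_j\in\bbD_{++}^{n_{y_j}}$, and is therefore nonnegative since $\sigma$ is slope-restricted on $[0,1]$.

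It then only remains to rearrange: using $V_{X_{-}}(u^1,u^2)=\Delta y_0^\top X_{-}\Delta y_0$ and $V_X(\calL(u^1),\calL(u^2))=\Delta y_l^\top X\Delta y_l$, the displayed identity gives
\begin{align*}
&V_{X_{-}}(u^1,u^2)-V_X(\calL(u^1),\calL(u^2)) \\
&\quad= \xi^\top M\xi + \sum_{j=1}^l\big(2\Delta y_{j-1}^\top W_j^\top\Lambda_j\Delta y_j - 2\Delta y_j^\top\Lambda_j\Delta y_j\big)\ \ge\ 0,
\end{align*}
which is exactly the dynamic programming inequality \eqref{eq:metricOpt_DP_relaxed}. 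I do not expect a genuine obstacle here: the proof is the $l$-fold generalization of Lemma~\ref{lem:FC+act}, and the only real care needed lies in the sign bookkeeping when reorganizing $\xi^\top M\xi$ into ``value-function gap $+$ a nonnegative combination of slope-restriction terms'', together with the remark that the domains $\bbR^{c_{-}},\bbR^{c}$ are finite-dimensional so that --- unlike in the proof of Lemma~\ref{lem:conv+act} --- no summation over a signal index $\bi\in\bbN_0^d$ enters. An alternative route is to apply Lemma~\ref{lem:FC+act} $l$ times with free intermediate operators $X_1,\dots,X_{l-1}$ and then eliminate them via Schur complements, but the direct computation above is more transparent.
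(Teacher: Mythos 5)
Your proof is correct and takes essentially the same route as the paper: the paper gives no explicit proof of Lemma~\ref{lem:FC_multiple_layers}, stating only that it ``follows the same arguments as previous proofs'' (those of Lemmas~\ref{lem:FC+act}/\ref{lem:conv+act} and LipSDP), and your computation --- left/right multiplying \eqref{eq:fc} by the stacked difference vector and absorbing each cross term via the slope-restriction constraint of Lemma~\ref{lem:slope_restriction} with multiplier $\Lambda_j$ --- is exactly that argument, with the sign bookkeeping done correctly.
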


\begin{lemma}\label{lem:Conv_multiple_layers}
    Consider a fully convolutional subnetwork $\sigma_l\circ\calC_{l-1}\circ\dots\circ\sigma_2\circ\calC_1$ with activation functions that are slope-restricted on $[0,1]$. For some $X \in \calH_{\calC}^y$ and $X_{-} \in \calH_{\calC}^u$, this subnetwork satisfies \eqref{eq:metricOpt_DP_relaxed} if there exist $\Lambda_j\in\bbD_{++}^{c_j}$, $\bP_j=\blkdiag(P_1^j,\dots,P_d^j),~P_i^j\in\bbS_{++}^{n_i},~i=1,\dots,d,~j=1,\dots,l$ such that \eqref{eq:conv}.
\end{lemma}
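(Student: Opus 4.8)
The plan is to lift the single-layer argument behind Lemma~\ref{lem:conv+act} to a cascade, in the same spirit as the fully connected case of Lemma~\ref{lem:FC_multiple_layers}. First I would realize each convolutional layer $\calC_j$ occurring in $\sigma_l\circ\calC_{l-1}\circ\dots\circ\sigma_2\circ\calC_1$ as a Roesser model \eqref{eq:RoesserSys} with state space matrices $(\bA_j,\bB_j,\bC_j,\bD_j)$ and states $x^j\lsb\bi\rsb=\begin{bmatrix} x_1^j\lsb\bi\rsb^\top & \cdots & x_d^j\lsb\bi\rsb^\top\end{bmatrix}^\top$ obeying the zero initial conditions \eqref{eq:zero_initial_conditions}. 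For two arbitrary inputs $(u^1\lsb\bi\rsb),(u^2\lsb\bi\rsb)\in\signals{d}{c_-}$ I would propagate them through the subnetwork, recording all intermediate signals (pre-activations, layer states, post-activations) and forming the difference signals $\Delta u\lsb\bi\rsb$, $\Delta x^j\lsb\bi\rsb$, and the activation in/out differences; the bias terms drop out of every difference exactly as in Appendix~\ref{app:proof_lem_diss_CNN}.

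Second, I would left/right multiply the LMI \eqref{eq:conv} pointwise in $\bi$ by the stacked vector assembling $\Delta u\lsb\bi\rsb$, all state differences $\Delta x^j\lsb\bi\rsb$, and all activation-output differences, and by its transpose. This yields, for each $\bi$, an inequality of the form ``(sum of $\Delta x_m^j\lsb\bi\rsb^\top P_m^j\Delta x_m^j\lsb\bi\rsb$) $+\ \Delta u\lsb\bi\rsb^\top\widetilde X_-\Delta u\lsb\bi\rsb\ +$ (slope cross terms with multipliers $\Lambda_j$) $\ \geq\ $ (sum of shifted terms $\Delta x_m^j\lsb\bi+e_m\rsb^\top P_m^j\Delta x_m^j\lsb\bi+e_m\rsb$) $+\ \Delta y\lsb\bi\rsb^\top\widetilde X\Delta y\lsb\bi\rsb$'', which is precisely the multi-layer analogue of the congruence step in Appendix~\ref{sec:conv+act} with one block-diagonal $\bP_j$ per convolutional layer and one diagonal $\Lambda_j$ per activation layer. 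Summing over $\bi\in\bbN_0^d$, the sums converge because each $\calC_j$ is an FIR filter and hence stable, so every difference signal lies in $\signals{d}{}$; telescoping the $\bP_j$-terms leaves only the boundary contributions $\sum_{\bi:\,i_m=0}\Delta x_m^j\lsb\bi\rsb^\top P_m^j\Delta x_m^j\lsb\bi\rsb$, which vanish by \eqref{eq:zero_initial_conditions}.

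Third, the surviving cross terms decompose, activation layer by activation layer, into exactly the left-hand side of the slope-restriction quadratic constraint \eqref{QC:slope_restriction} with multiplier $\Lambda_j$ evaluated on the corresponding pair of pre- and post-activation differences and summed over $\bi$; by Lemma~\ref{lem:slope_restriction} each such sum is nonnegative and may be discarded. What remains is $V_{X_-}(u^1,u^2)-V_X(y^1,y^2)\geq 0$, i.e.\ \eqref{eq:metricOpt_DP_relaxed}, which is the claim. I expect the main obstacle to be organizational rather than conceptual: writing \eqref{eq:conv} so that its tridiagonal-type off-diagonal blocks (the $-\bC_j^\top\Lambda_j$ / $-\bD_j^\top\Lambda_j$ and $-W_{j+1}^\top\Lambda_{j+1}$-type couplings stitching the Roesser output of $\calC_j$ through $\sigma_{j+1}$ into the state and input of $\calC_{j+1}$) are transparent, and then checking that the pointwise congruence transformation reproduces precisely those cross terms and nothing else. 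All the analytic ingredients—FIR stability for convergence of the sums, the zero initial conditions for the boundary terms, and the slope constraint for the sign of the cross terms—are already supplied by Lemmas~\ref{lem:diss_CNN}, \ref{lem:conv+act}, and \ref{lem:slope_restriction}.
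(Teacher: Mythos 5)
Your proposal is correct and is exactly the argument the paper intends: the paper's own ``proof'' of Lemma~\ref{lem:Conv_multiple_layers} is a one-line remark that it follows the same arguments as the proofs of Lemmas~\ref{lem:diss_CNN} and \ref{lem:conv+act}, and your plan (pointwise congruence of \eqref{eq:conv} with the stacked difference vector of input, Roesser states, and activation outputs; summation over $\bbN_0^d$ with FIR-based convergence; telescoping of the $\bP_j$-terms against the zero boundary conditions \eqref{eq:zero_initial_conditions}; and discarding the sign-definite slope-restriction cross terms via Lemma~\ref{lem:slope_restriction}) is precisely that extension, carried out with one $\bP_j$ per convolutional layer and one $\Lambda_j$ per activation. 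The only nit is a loose sign statement — the cross terms produced by the congruence are the \emph{negative} of the quadratic form in \eqref{QC:slope_restriction} and hence nonpositive on the left-hand side — but your conclusion is unaffected.
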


Proofs of Lemmas \ref{lem:FC_multiple_layers} and \ref{lem:Conv_multiple_layers} follow the same arguments as previous proofs and coincide with the proofs in \cite{fazlyab2019efficient,gramlich2023convolutional} for the case that the whole NN is considered as a subnetwork.

\begin{figure*}
\scriptsize
\begin{equation}\label{eq:conv}
    \begin{split}
    %&\calG_{\calC}(X_{-},X,\nu):=\\
    &\begin{bmatrix}
        X_{-}-\bB_1^\top \bP_1 \bB_1 & -\bB_1^\top \bP_1 \bA_1 & -\bD_1^\top\Lambda_1    &\phantom{\ddots}             \\
        -\bA_1^\top \bP_1\bB_1     & \bP_1-\bA_1^\top \bP_1\bA_1  & -\bC_1^\top\Lambda_1            &\phantom{\ddots} \\
        -\Lambda_1 \bD_1     & -\Lambda_1 \bC_1     & 2\Lambda_1-\bB_2^\top \bP_2 \bB_2 & -\bB_2^\top \bP_2\bA_2         & -\bD_2^\top\Lambda_2 &\phantom{\ddots} \\
                             &                      & -\bA_2^\top \bP_2 \bB_2           & \bP_2-\bA_2^\top \bP_2 \bA_2   & -\bC_2^\top\Lambda_2 &\phantom{\ddots} \\
                             &                      & -\Lambda_2 \bD_2            &  -\Lambda_2 \bC_2           & 2\Lambda_2-\bB_3^\top\bP_3 \bB_3 & \ddots \\
                             &                      &                             &                             & \ddots & \ddots & -\bB_l^\top \bP_l\bA_l      & -\bD_l^\top\Lambda_l\\
                             &                      &                             &                             & \phantom{\ddots}  & -\bA_l^\top \bP_l \bB_l           & \bP_l-\bA_l^\top \bP_l \bA_l  & -\bC_l^\top\Lambda_l\\
                             &                      &                             &                             & \phantom{\ddots}  & -\Lambda_l \bD_l          &  -\Lambda_l \bC_l   & 2\Lambda_l-X
    \end{bmatrix}\succeq 0
    \end{split}
\end{equation}
\end{figure*}

\section{Neural network architectures}\label{app:NNarchitectures}
We analyze the well-known LeNet-5 \cite{lecun1998gradient} and other typical CNN architectures \cite{leino2021globally} as well as 18-layer residual NNs inspired by \cite{he2016deep}. To describe the NN architectures, similar to \cite{leino2021globally}, we denote a 2-D convolutional layer by $c(C,K,S)$, where $C$ is the number of output channels, $K$ the symmetric kernel size and $S$ the symmetric stride. A dense fully connected layer is denoted by $d(N)$, where $N$ is the number of output neurons. In addition, by $p(\text{type},K,S)$ we mean pooling layers of type either average or maximum, with kernel size $K$ and stride $S$.

\begin{table}[t]
  \centering
  \caption{Neural network architectures.}
    \resizebox{0.48\textwidth}{!}{%
    \begin{tabular}{ll}
    \toprule
    \textbf{Model} & \textbf{Specification}  \\
    \midrule 
        LeNet-5: &$c(6,5,1).p(\text{av},2,2).c(16,5,1).p(\text{av},2,2).d(120).d(84).d(10)$ \\
        2C2F: & $c(16,4,2).c(32,4,2).d(100).d(10)$ \\
        4C3F: & $c(32,3,1).c(32,4,2).c(64,3,1).c(64,4,2).d(512)^2.d(10)$ \\
        FC-R18: & $d(64).\mathrm{res}(64,2)^8.d(10)$\\
        C-R18: & $c(16,7,2).p(\text{max},2,2).\mathrm{res}(16,3,1,2)^8.p(\text{av},2,2).d(10)$\\\midrule
        LeNet-5: &$c(6,5,1).p(\text{max},2,2).c(16,5,1).p(\text{max},2,2).d(120).d(84).d(10)$ \\  6C2F: & $c(32,3,1)^2.c(32,4,2).c(64,3,1)^2.c(64,4,2).d(512).d(10)$ \\
    \bottomrule 
    \end{tabular}% 
    }    
  \label{table:architectures}%
\end{table}% 
We denote residual layers with convolutional layers in the residual path by $\mathrm{res}(C,K,S,L)$ where all convolutions are of the same shape and $L$ denotes the number of layers in the residual path. In addition, we denote a residual layer containing fully connected layers in the residual path by $\mathrm{res}(N,L)$, $N$ being the number of neurons and $L$ the number of skipped fully connected layers. Using the described nomenclature, we list all utilized architectures in Table~\ref{table:architectures}.

For the methods S-LipSDP and S-GlipSDP, we require suitable subnetworks, as specified in Table \ref{table:splits_S-GLipSDP} and Table \ref{table:splits_S-LipSDP}. S-LipSDP requires a split at every pooling layer as it does not allow to include pooling layers by quadratic constraints, and for 6C3F and 4C3F splits are chosen as large as possible before running into memory issues. For S-LipSDP on C-R18 and FC-R18, we apply LipSDP to the residual paths. The sum of the Lipschitz constants of the parallel paths, i.e., $1+\gamma(\text{residual path})$, provides an upper bound on the Lipschitz constant for the residual layer.

\begin{table}[t]
  \centering
  \caption{Splits into subnetworks for S-GLipSDP}
    \resizebox{0.48\textwidth}{!}{%
    \begin{tabular}{ll}
    \toprule
    \textbf{Model} & \textbf{Specification}  \\
    \midrule 
        LeNet-5: &$c(6,5,1).p(\text{av},2,2).c(16,5,1).p(\text{av},2,2)~|~ d(120).d(84).d(10)$ \\
        2C2F: & $c(16,4,2).c(32,4,2) ~|~ d(100).d(10)$ \\
        4C3F: & $c(32,3,1).c(32,4,2).c(64,3,1).c(64,4,2) ~|~ d(512).d(512).d(10)$ \\
        FC-R18: & $d(64) ~|~\mathrm{res}(64,2)~|~\cdots~|~\mathrm{res}(64,2)~|~d(10)$\\
        C-R18: & $c(16,7,2).p(\text{max},2,2)~|~\mathrm{res}(16,3,1,2)~|~\cdots~|~\mathrm{res}(16,3,1,2).p(\text{av},2,2)~|~d(10)$\\\midrule
        LeNet-5: &$c(6,5,1).p(\text{max},2,2).c(16,5,1).p(\text{max},2,2)~|~d(120).d(84).d(10)$ \\
        6C2F: & $c(32,3,1)^2.c(32,4,2)~|~c(64,3,1)^2~|~c(64,4,2)~|~d(512).d(10)$ \\
    \bottomrule 
    \end{tabular}% 
    }    
  \label{table:splits_S-GLipSDP}%
\end{table}% 

\begin{table}[t]
  \centering
  \caption{Splits into subnetworks for S-LipSDP}
    \resizebox{0.48\textwidth}{!}{%
    \begin{tabular}{ll}
    \toprule
    \textbf{Model} & \textbf{Specification}  \\
    \midrule 
        LeNet-5: &$c(6,5,1).p(\text{av},2,2)~|~c(16,5,1).p(\text{av},2,2)~|~ d(120).d(84).d(10)$ \\
        2C2F: & $c(16,4,2).c(32,4,2) ~|~ d(100).d(10)$ \\
        4C3F: & $c(32,3,1)~|~c(32,4,2)~|~c(64,3,1)~|~c(64,4,2) ~|~ d(512) ~|~ d(512).d(10)$ \\
        FC-R18: & $d(64) ~|~1+d(64).d(64)~|~\cdots~|~1+d(64).d(64)~|~d(10)$\\
        C-R18: & -- \\\midrule
        LeNet-5: &$c(6,5,1).p(\text{max},2,2) ~|~ c(16,5,1).p(\text{max},2,2)~|~d(120).d(84).d(10)$ \\
        6C2F: & -- \\
    \bottomrule 
    \end{tabular}% 
    }    
  \label{table:splits_S-LipSDP}%
\end{table}% 

\section*{References}
\vspace*{-2em}

\bibliographystyle{IEEEtran}
\bibliography{references}

%\newpage

\vspace{-33pt}
\begin{IEEEbiography}
[{\includegraphics[width=1in,height=1.25in,clip,keepaspectratio]{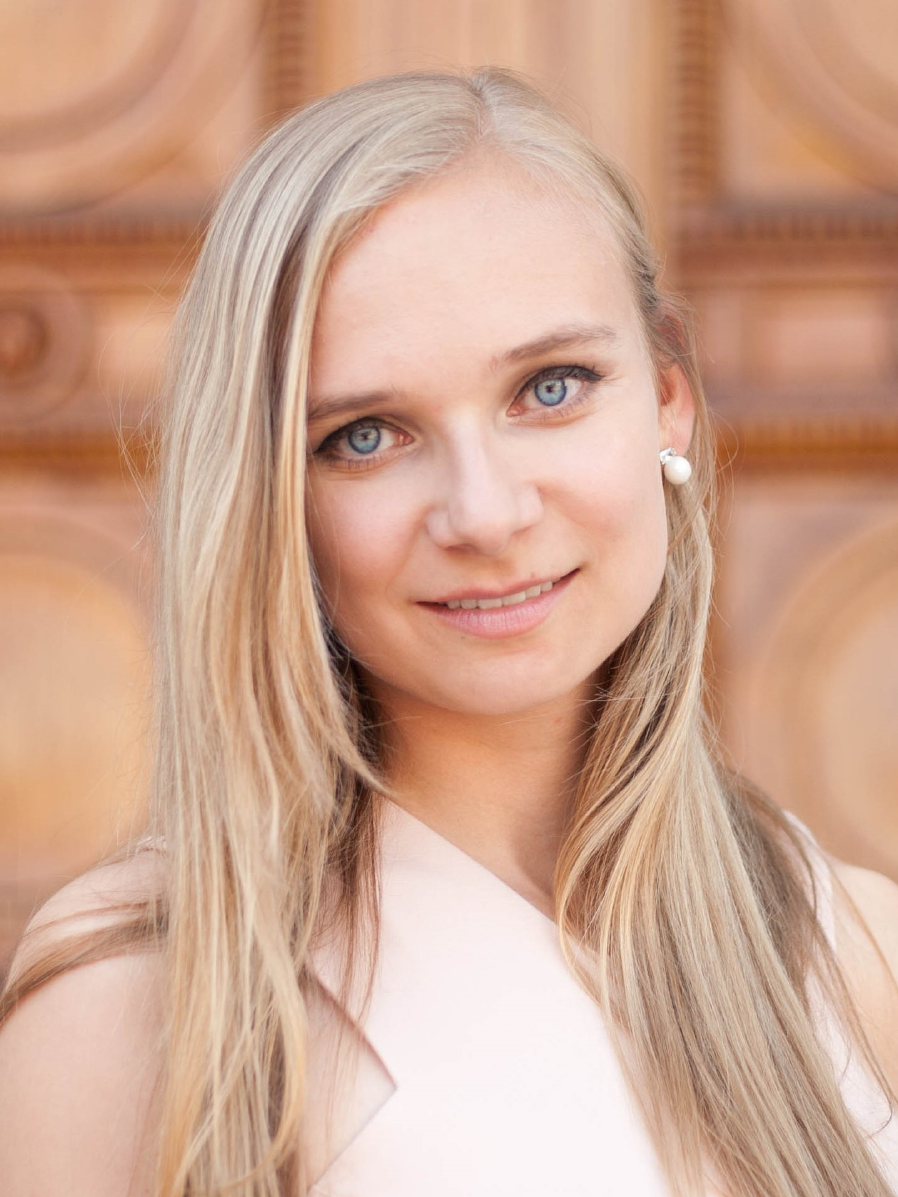}}]{Patricia Pauli} received the Master’s degree in Mechanical Engineering and Computational Engineering from the Technical University of Darmstadt, Germany, in 2019. She has since been a Ph.D. student with the Institute for Systems Theory and Automatic Control under supervision of Prof. Frank Allgöwer and a member of the International Max-Planck Research School for Intelligent Systems (IMPRS-IS). Her research interests are in the area of robust machine learning and learning-based control.
\end{IEEEbiography}

\vspace{-33pt}
\begin{IEEEbiography}
[{\includegraphics[width=1in,height=1.25in,clip,keepaspectratio]{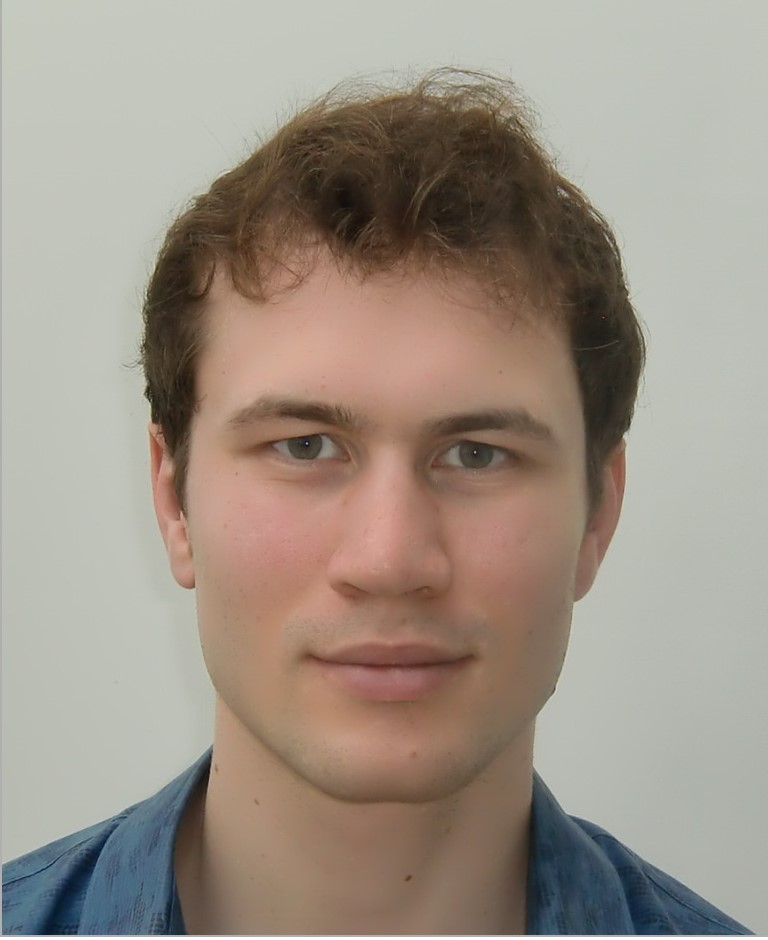}}]{Dennis Gramlich} received the Master's degree in Engineering Cybernatics and Mathematics from the University of Stuttgart, Germany, in 2020. He was a Ph.D. student with the Institute for Systems Theory and Automatic Control at the University of Stuttgart under the supervision of Prof. Christian Ebenbauer from May 2020 to October 2021 and is now a Ph.D. student with the Institute for Intelligent Control at RWTH Aachen University under the supervision of Prof. Christian Ebenbauer. His research interests are Robust Control and Robust Trajectory Optimization.
\end{IEEEbiography}

\vspace{-33pt}
\begin{IEEEbiography}
[{\includegraphics[width=1in,height=1.25in,clip,keepaspectratio]{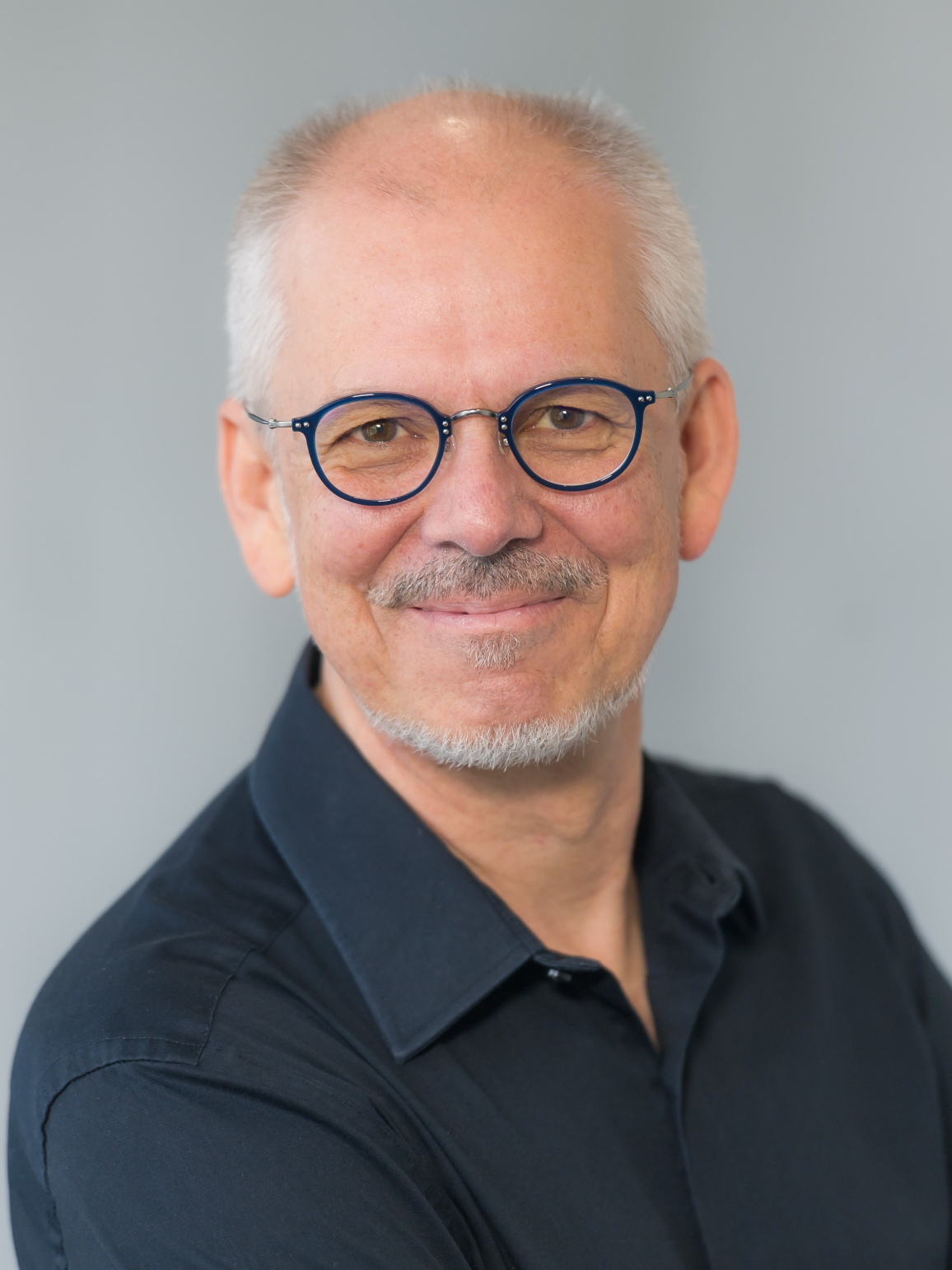}}]{Frank Allgöwer} studied Engineering Cybernetics and Applied 
Mathematics in Stuttgart and at the University of California, 
Los Angeles (UCLA), respectively, and received his Ph.D. degree 
from the University of Stuttgart in Germany. Since 1999 he is
the Director of the Institute for Systems Theory and
Automatic Control and professor at the University of
Stuttgart. His research interests include networked
control, cooperative control, predictive control, and
nonlinear control with application to a wide range of
fields including systems biology. For the years 2017-2020 Frank 
served as President of the International Federation of Automatic
Control (IFAC) and for the years 2012-2020 as Vice President of 
the German Research Foundation DFG.

\end{IEEEbiography}

%\vspace{11pt}

% \bf{If you will not include a photo:}\vspace{-33pt}
% \begin{IEEEbiographynophoto}{Patricia Pauli}
% Use $\backslash${\tt{begin\{IEEEbiographynophoto\}}} and the author name as the argument followed by the biography text.
% \end{IEEEbiographynophoto}

\vfill

\end{document}